\documentclass[12pt]{article}

\usepackage[doublespacing,numericcite]{mystyle}
\usepackage{array}
\usepackage{tabularx}
\usepackage{titletoc}
\graphicspath{{./}}

\newcolumntype{Y}{>{\raggedright\arraybackslash}X}
\providecommand{\vbeta}{{\bm{\beta}}}
\providecommand{\vgamma}{{\bm{\gamma}}}
\providecommand{\vphi}{{\bm{\phi}}}
\providecommand{\evgamma}{{\gamma}}
\providecommand{\evphi}{{\phi}}

\providecommand{\vtau}{{\bm{\tau}}}

\providecommand{\vT}{\bm{T}}

\providecommand{\vDelta}{{\bm{\Delta}}}

\definecolor{darkpowderblue}{rgb}{0.0, 0.2, 0.6}
\newcommand{\authorfootref}[2]{\textsuperscript{\hyperlink{#1}{\textcolor{black}{#2}}}}
\newcommand{\authoraffil}[3]{\textsuperscript{\hypertarget{#1}{#2}}#3}
\titlecontents{suppsection}
  [0pt]
  {\normalsize}
  {\contentslabel{2em}}
  {}
  {\titlerule*[0.6em]{.}\contentspage}
\titlecontents{suppsubsection}
  [2em]
  {\small}
  {\contentslabel{2.8em}}
  {}
  {\titlerule*[0.6em]{.}\contentspage}
\titlecontents{suppsubsubsection}
  [4.8em]
  {\small}
  {\contentslabel{3.6em}}
  {}
  {\titlerule*[0.6em]{.}\contentspage}
\newcommand{\printsupplementarycontents}{%
  \begingroup
  \spacingset{1.0}
  \hypersetup{linkcolor=black}
  \begin{center}
  \begin{minipage}{0.9\textwidth}
  \noindent{\bf Contents}\par\smallskip
  \printcontents[supplement]{supp}{1}[3]{}
  \end{minipage}
  \end{center}
  \endgroup
}

\title{First-Order Efficiency for Probabilistic Value Estimation via A Statistical Viewpoint}
\addauthor{%
  {\large
  Ziqi Liu\authorfootref{fn:cmu}{1},
  Kiljae Lee\authorfootref{fn:osu}{2},
  Yuan Zhang\authorfootref{fn:osu}{2},
  Weijing Tang\authorfootref{fn:cmu}{1} \authorfootref{fn:corresponding}{*}%
  }%
}{%
  \normalsize
  \authoraffil{fn:cmu}{1}{Department of Statistics \& Data Science, Carnegie Mellon University.}\\
  \authoraffil{fn:osu}{2}{Department of Statistics, The Ohio State University.}%
}
\date{}

\begin{document}

\maketitle
\begingroup
\renewcommand{\thefootnote}{*}
\footnotetext{\hypertarget{fn:corresponding}{Corresponding author email: \texttt{weijingt@andrew.cmu.edu}.}}
\endgroup

\begingroup
\spacingset{1.25}
\begin{abstract}
Probabilistic values, including Shapley values and semivalues, provide a model-agnostic framework to attribute the behavior of a black-box model to data points or features, with a wide range of applications including explainable artificial intelligence and data valuation. 
However, their exact computation requires utility evaluations over exponentially many coalitions, making Monte Carlo approximation essential in modern machine learning applications. 
Existing estimators are often developed through different representation strategies, including weighted averages, self-normalized weighting, regression adjustment, and weighted least squares. 
Our key observation is that these seemingly distinct constructions share a common first-order expansion, in which the leading term is determined by the sampling law and a working surrogate function. 
This first-order representation yields an explicit expression for the leading mean squared error (MSE), which characterizes how the sampling law and the surrogate jointly determine statistical efficiency. 
Guided by this criterion, we propose an \emph{Efficiency-Aware Surrogate-adjusted Estimator (EASE)} that directly chooses the sampling law and  surrogate  to minimize the first-order MSE.
We demonstrate that EASE consistently outperforms existing estimators for various probabilistic values. 

\bigskip
\noindent\textbf{Keywords:} Shapley value; Monte Carlo sampling; Regression adjustment; Feature attribution; Cooperative games.

\end{abstract}
\endgroup

\newpage

\section{Introduction}

Modern machine learning has created a growing class of attribution problems, in which one seeks to quantify the contribution of individual units to a collective outcome. 
In data attribution, the units are training examples and the outcome is the predictive performance of a model trained on a selected subset of data\citep{Jia2019-wa,Kwon2021-bw,Wang2022-iz}. 
In explainable artificial intelligence, the units are input features and the outcome is the prediction of a black-box model using a selected subset of features~\citep{Lundberg2017-gl}. 
Similar attribution questions also arise when the units are data sources, modalities, model components, or other contributors whose joint use determines a downstream utility. 
Although these problems arise in different contexts, they share a similar structure: a utility function is evaluated on subsets of units, and the goal is to assign each unit a contribution score.

The Shapley value~\citep{Shapley1953-ru} provides a canonical axiomatic framework for such attribution problems. 
It is the unique attribution rule satisfying four axioms: \textit{efficiency, symmetry, linearity}, and the \textit{null-player property}; we provide the formal statement of these axioms in Appendix~\ref{app:axioms}.
Probabilistic values extend the Shapley framework by relaxing the \textit{efficiency} and \textit{symmetry} axioms, thereby covering a broader class of attribution rules, including semivalues~\citep{Dubey1981-if, Weber1988-zf}.
Formally, for a game with \(n\) players, let \([n]:={1,\dots,n}\), and let \(u:2^{[n]}\to\mathbb R\) denote the utility function. Depending on the application, \(u(S)\) may represent the predictive performance of a model trained on the data points in \(S\), or the output of a model evaluated using only the features in \(S\). 
Given~\(u\), a probabilistic value assigns to each player \(i\in[n]\) a weighted average of its marginal contributions over all coalitions that do not contain \(i\):
\begin{equation}
    \evphi_i(u)
    =
    \sum_{S \subseteq [n]\setminus\{i\}}
    \alpha_{i}^{(n)}(S)
    \bigl\{
        u(S\cup\{i\}) - u(S)
    \bigr\},
\label{eq:semivalue-def}
\end{equation}
where the weights are nonnegative and satisfy \(\sum_{S\subseteq[n]\setminus \{i\}} \alpha_{i}^{(n)}(S)=1\).
When the weights depend on \(S\) only through the coalition size \(|S|\), 
the probabilistic value reduces to a semivalue. As a  further special case, choosing \(\alpha_{i}^{(n)}(S)=\{n\binom{n-1}{|S|}\}^{-1}\) recovers the Shapley~value.

Despite their broad applicability for attribution problems, probabilistic values are often difficult to use in practice because exact computation is combinatorial.
Direct evaluation of~\eqref{eq:semivalue-def} requires utility evaluations over up to all \(2^n\) coalitions. Moreover, in modern machine learning applications,  a single utility evaluation may itself be expensive. 
In data attribution, it can require retraining a model on a selected subset of training data, while in feature attribution it can require repeated inference procedure under different feature subsets.
Exact computation therefore becomes infeasible even for moderately large \(n\).

Monte Carlo approximation has consequently become the standard computational strategy, and a broad literature has developed estimators for probabilistic values; see, for example, the surveys~\citep{Chen2023-ka,Deng2025-tp}.
Existing estimators can be organized according to how they represent the target value under a sampling distribution.
The first family represents the target directly as a weighted average of coalition utilities, which leads to weighted sample means~\citep{Covert2021-di,Fumagalli2023-lo}, self-normalized weighted sample means~\citep{Wang2022-iz,Li2024-gp,Kolpaczki2024-eq}, and, more recently, regression-adjusted estimators~\citep{Witter2025-gz}.
The second family represents the target through a weighted least-squares (WLS) projection of the utility function onto a structured feature space. This class includes empirical WLS estimators for Shapley values~\citep{Lundberg2017-gl,Musco2024-cf,Fumagalli2026-hu,Chen2026-ai} and for general semivalues~\citep{Li2024-yr}.

Although these estimators appear to arise from different constructions, our key observation is that they share a common first-order error structure.
For both weighted-average and least-squares estimators, the first-order error term can be written as an augmented inverse-probability-weighted influence term determined by two design components: the sampling law and a working surrogate function (see Section~\ref{sec:regular} for details).
The surrogate serves as an approximation to the utility function, which may be specified explicitly, as in regression-adjusted estimators, or induced implicitly, as in self-normalized and WLS estimators.
This representation gives a unified first-order characterization of many existing estimators and, more importantly, yields an explicit expression for their leading mean squared error (MSE). 

Specifically, as the Monte Carlo sample size \(m\to\infty\), the first-order MSE is given by the variance of the corresponding residual influence term divided by \(m\).
For probabilistic-value estimation, however, the classical asymptotic regime in \(m\) is not the primary regime of interest. 
The space of possible coalitions has size \(2^n\), whereas feasible sampling budgets are typically much smaller than \(2^n\) and often grow only polynomially with \(n\). Thus, the central question is whether the first-order approximation remains valid at the polynomial sample sizes used in practice. 
To address this gap, we establish non-asymptotic remainder bounds for representative weighted-average and WLS estimators (see Section~\ref{sec:first-order-risk} for details). 
Our results show that the first-order MSE approximation has a small relative error when \(m=\Omega(n\operatorname{polylog}(n))\), aligning with the regime studied in the literature~\citep{Li2024-yr, Li2024-gp, Musco2024-cf}. These theoretical results justify using the first-order MSE as a practical design criterion and make explicit how the sampling law and the surrogate jointly determine statistical efficiency.

Motivated by this criterion, we propose the \emph{Efficiency-Aware Surrogate-adjusted Estimator (EASE)}, which directly targets to minimize the first-order MSE by choosing the surrogate and the sampling law.
This optimization problem is challenging for two reasons. 
First, the objective depends on the unknown utility function, and therefore must be learned from sampled utility evaluations. 
Second, without structural restrictions, optimizing the sampling law would require searching over all distributions on \(2^{[n]}\), which is computationally infeasible.
EASE makes this design problem tractable through a two-stage procedure. 
The initialization stage uses pilot samples to fit an initial surrogate and construct a residual-aware sampling law within a structured class. 
The estimation stage then draws new coalitions from the learned law and constructs a cross-fitted augmented inverse-probability-weighted estimator, with the surrogate chosen to minimize the empirical first-order MSE criterion.
Across a range of probabilistic-value targets, our numerical results show that EASE achieves lower MSE than existing estimators under the same utility-evaluation budget.

\section{Problem Setup for Probabilistic Value Estimation}
\label{sec:formulation}

In this section, we formulate the Monte Carlo estimation problem for a broad class of probabilistic-value targets. 
This framework covers the estimation of individual probabilistic values as well as more general aggregate targets, such as the total attribution assigned to a group of players. 
Let $\vphi(u)=(\evphi_1(u),\dots,\evphi_n(u))^\top$, where $\evphi_i(u)$ is the probabilistic value of player $i$ defined in \eqref{eq:semivalue-def}. 
For any coefficient vector $\va=(\eva_1,\dots,\eva_n)^\top \in \mathbb{R}^n$, define the target
\begin{equation}
    \tau_{\va}(u) := \va^\top \vphi(u) = \sum_{i=1}^n \eva_i \evphi_i(u).
    \label{eq:tau-a-def}
\end{equation}
If $\va=\ve_i \in \mathbb{R}^n$ denotes the $i$-th standard basis vector, then the target \(\tau_{\va}(u)=\evphi_i(u)\) recovers the probabilistic value of player $i$.
More generally, if $\va$ is the indicator vector of a subset of players, then $\tau_{\va}(u)$ is the total probabilistic value over that subset, which is useful in group data attribution applications~\citep{Lee2025-ht}.

We estimate $\tau_{\va}(u)$ using Monte Carlo sampling. 
In the main text, we focus on the single-coalition sampling scheme.
Let $q$ be a sampling distribution on $2^{[n]}$, and suppose that we observe i.i.d.\ coalitions
$S_1,\dots,S_m \sim q$,
together with their exact utility values.  
The observed data are therefore
$ \mathcal D_m=\{(S_t,u(S_t))\}_{t=1}^m$. 
This sampling regime is widely used in practice because of its flexibility. Once the sampled coalitions have been evaluated, the same data can be reused to estimate different linear summaries. Changing the target from \(\tau_{\va}\) to \(\tau_{\vb}\) only changes the coefficient vector from \({\va}\) to \({\vb}\). 
Structured sampling schemes such as bundled coalition designs are discussed in Section~\ref{sec:discussion} and Appendix~\ref{app:bundled}. 

For a fixed target \(\tau_{\va}\) and a given sampling law \(q\), there exists more than one way to represent the target as a functional of the observed-data distribution. 
These different representations motivate different Monte Carlo estimators. 
We next revisit two common representations that underlie many consistent estimators for probabilistic-value targets. 

\textbf{Weighted average representation.} 
    The first representation writes the target directly as a weighted average of coalition utilities. For each player $i$, define $\rho_i(S)
        :=
        \alpha_i^{(n)}(S\setminus\{i\})\1\{i\in S\}
        -
        \alpha_i^{(n)}(S)\1\{i\notin S\},$ for 
        $S\subseteq [n].$
    Then $\evphi_i(u) = \sum_{S \subseteq [n]} \rho_i(S)u(S)$.
    Therefore, any linear target \(\tau_{\va}(u)\) in~\eqref{eq:tau-a-def} can be written as
    \begin{equation}
        \tau_{\va}(u)
        =
        \sum_{S \subseteq [n]} \rho_{\va}(S)u(S),
        \qquad
        \rho_{\va}(S):=\sum_{i=1}^n \eva_i \rho_i(S).
        \label{eq:tau-a-linear}
    \end{equation}
    Under single-coalition sampling from the law \(q\), this representation becomes
    \begin{equation}
        \tau_{\va}(u)
        =
        \E_q\!\left[\gamma_{\va, q}(S)u(S)\right],
        \qquad
        \gamma_{\va,q}(S):=\frac{\rho_{\va}(S)}{q(S)},
        \label{eq:tau-a-ipw}
    \end{equation}
    provided that \(q(S)>0\) whenever \(\rho_{\va}(S)\neq 0\). Thus, the target is represented as a weighted expectation of the observed utility under the sampling distribution~$q$. 
    This representation naturally leads to weighted sample-mean estimators~\citep{Covert2021-di,Fumagalli2023-lo} and underlies related self-normalized~\citep{Wang2022-iz,Kolpaczki2024-eq, Li2024-gp} and regression-adjusted estimators~\citep{Witter2025-gz}. 

\textbf{Weighted least-squares representation.} 
    The second representation writes the target through a weighted least-squares projection of the utility function. 
    For Shapley values~\citep{Lundberg2017-gl,Musco2024-cf,Fumagalli2026-hu,Chen2026-ai} and more general semivalues~\citep{Li2024-yr}, one can find a nonnegative weight function \(w:2^{[n]}\to[0,\infty)\), a feature map \(\vz:2^{[n]}\to\mathbb R^{d_\vz}\), and a readout vector \(\vc_{\va}\in\mathbb R^{d_\vz}\) such that
    \begin{equation}
        \tau_{\va}(u)=\vc_{\va}^\top\vbeta^\star(u),
        \qquad
        \text{for every }\vbeta^\star(u)\in
        \argmin_{\vbeta \in \mathbb R^{d_\vz}}
        \sum_{S\subseteq[n]}
        w(S)\{u(S)-\vz(S)^\top\vbeta\}^2.
        \label{eq:wls-id}
    \end{equation}
    In this representation, the pre-specified tuple \((w,\vz,\vc_{\va})\) determines how the utility function is projected and how the target is read out from the projected coefficients. 
    This tuple is not necessarily unique. 
    For example, KernelSHAP~\citep{Lundberg2017-gl} and PolySHAP~\citep{Fumagalli2026-hu} both represent Shapley values through WLS problems, but they use different tuples. 
    Under single-coalition sampling from \(q\), the same population criterion can be rewritten as
    \begin{equation}
        \tau_{\va}(u)=\vc_{\va}^\top\vbeta^\star(u),
        \qquad
        \text{for every }\vbeta^\star(u)\in
        \argmin_{\vbeta\in\mathbb R^{d_\vz}}
        \E_q\left[
            \frac{w(S)}{q(S)}
            \{u(S)-\vz(S)^\top\vbeta\}^2
        \right],
        \label{eq:wls-id-q}
    \end{equation}
    provided that \(q(S)>0\) whenever \(w(S)>0\).
    Unlike the weighted-average representation, this form does not write the target directly as a weighted average of \(u(S)\). 
    Instead, \(u\) is first projected onto a structured linear class and the target is then obtained by applying the readout \(\vc_{\va}\).
    The corresponding Monte Carlo estimator replaces the population WLS objective in \eqref{eq:wls-id-q} by its empirical counterpart and then applies the same readout to the fitted coefficient vector. 

The two representations lead to estimators with seemingly different structures. 
The weighted-average estimators reweight sampled utilities, possibly with normalization or regression adjustments. 
The WLS estimators instead solve a weighted regression problem and plug in the fitted coefficients to the target. 
Nevertheless, as we shall show in the next section, these estimators admit a common first-order expansion, whose leading term is an augmented weighted-average estimator indexed by a working surrogate. The surrogate may be specified explicitly, as in regression adjustment, or induced implicitly by self-normalization or least-squares fitting. 

\section{First-Order Representation via Working Surrogates}
\label{sec:regular}
The previous section describes two representations of the same target.
We now show that the corresponding estimators, despite their different forms, share a common first-order structure. This structure is indexed by a working surrogate function \(h:2^{[n]}\to\mathbb R\), which serves as an approximation to the utility function $u$.
For any working surrogate \(h\), let \(\tau_{\va}(h)\) denote the same probabilistic-value target applied to \(h\) in place of \(u\).
Since \(\E_q[\gamma_{\va,q}(S)f(S)]=\tau_{\va}(f)\) for any function \(f\), the target admits the decomposition \(\tau_{\va}(u)=\tau_{\va}(h)+\E_q[\gamma_{\va,q}(S)\{u(S)-h(S)\}]\).
Accordingly, the augmented inverse-probability-weighted term \(\tau_{\va}(h)+\gamma_{\va,q}(S)\{u(S)-h(S)\}\), which combines the target value of the working surrogate with an inverse-probability-weighted correction for the residual \(u-h\), has mean \(\tau_{\va}(u)\) under \(q\).
Define its centered version
\[
    \psi_h(S;u)
    :=
    \gamma_{\va,q}(S)\{u(S)-h(S)\}
    +
    \tau_{\va}(h)
    -
    \tau_{\va}(u),
\]
which is mean zero under \(q\).
We use this mean-zero quantity to define a regular class of estimators and to show that existing estimators fall into this class.

Let $\mathcal H:= \{h: 2^{[n]} \to \mathbb{R}\}$ be a working surrogate class consisting of candidate surrogates attainable by the estimation procedure.
We define regularity relative to \(\mathcal H\) under the sampling law $q$ through the following first-order  expansion.

\begin{definition}[Regular estimator sequence]\label{def:regular}
Let \(S_1,\ldots,S_m \overset{\mathrm{iid}}{\sim} q\), and let
\(\hat\tau_{\va, m}\) denote an estimator based on the sampled utility evaluations $ \mathcal D_m=\{(S_t,u(S_t))\}_{t=1}^m$.
A sequence of estimators $\{\hat\tau_{\va, m}\}_{m\ge 1}$ is \emph{regular} relative to  \(\mathcal H\) under \(q\) if there exists a sequence of nonrandom functions $h_m\in\mathcal H$ such that
\begin{equation}
    \hat\tau_{\va, m}(u)-\tau_{\va}(u)
    =
    \frac1m\sum_{j=1}^m \psi_{h_m}(S_j; u) + r_m \quad \text{with } \E_q(r_m^2)=o(m^{-1}).
    \label{eq:regular-expansion}
\end{equation}
\end{definition}
In the expansion \eqref{eq:regular-expansion}, the surrogate \(h_m\) is the population-level working surrogate associated with the estimator, and the mean-zero summand \(\psi_{h_m}\) determines the estimator's first-order behavior.
In this sense, \(\psi_h\) plays the role of an influence function, and we refer to it as the \emph{influence term} associated with the sampling law \(q\) and the working surrogate \(h\).
We first show the simplest case with the zero surrogate. The Horvitz--Thompson estimator~\citep{Horvitz1952-ex}, motivated by the weighted-average representation in~\eqref{eq:tau-a-ipw}, is
    \begin{equation}    
        \hat{\tau}_{\va, m}^{\mathrm{HT}}(u) = \frac{1}{m}\sum_{t=1}^m \gamma_{\va,q}(S_t)u(S_t).
        \label{eq:ht-est}
    \end{equation}
    By the definition that \(\psi_0(S;u)=\gamma_{\va,q}(S)u(S)-\tau_{\va}(u)\), we have $\hat{\tau}_{\va, m}^{\mathrm{HT}}(u) - \tau_{\va}(u) = \frac{1}{m}\sum_{t=1}^m \psi_{0}(S_t; u).$
    Thus, \(\hat{\tau}_{\va, m}^{\mathrm{HT}}\) is regular relative to \(\mathcal H=\{0\}\), with \(r_m=0\) for every \(m\). In the Shapley-value literature, this estimator coincides with unbiased KernelSHAP~\citep{Covert2021-di} and the order-one special case of SHAP-IQ~\citep{Fumagalli2023-lo}. 
    In our framework, both correspond to regular estimators with the zero surrogate.

Our framework also covers many estimators associated with nontrivial working surrogates.
The surrogate may be specified explicitly, as in regression-adjusted estimators, or induced implicitly by the estimation procedure, as in self-normalized weighted averages or WLS estimators. 
The following subsections identify the corresponding working surrogate classes and establish their regularity.

\subsection{Regression-Adjusted Estimators with Explicit Surrogates}
Regression-adjusted estimators for probabilistic values use an explicit surrogate $h$ to approximate the utility function and then correct the residual by Monte Carlo sampling~\citep{Witter2025-gz}.
They can be written in the augmented inverse-probability-weighted (AIPW) form~\citep{Rubin1974-bm}:
\begin{equation}\label{eq:aipw-est}
        \hat{\tau}_{\va, m}^{\mathrm{AIPW}}(u) = \tau_{\va}(h_m) + \frac{1}{m}\sum_{t=1}^m \gamma_{\va,q}(S_t)\{u(S_t)-h_m(S_t)\},
    \end{equation}
where $h_m$ is an explicitly specified surrogate. 
The estimator first computes the probabilistic value of the surrogate and then uses Monte Carlo sampling only to correct the residual. 

This form imposes a constraint that is specific to probabilistic value estimation.
The surrogate must not only be evaluable at sampled coalitions, but also have a tractable value $\tau_{\va}(h_m)$. 
Otherwise, computing $\tau_{\va}(h_m)$ would itself be as difficult as computing the original target $\tau_{\va}(u)$. 

Accordingly, let $\mathcal H^{\mathrm{sur}}$ be an explicit working surrogate class consisting of functions $h:2^{[n]}\to\mathbb R$ for which $\tau_{\va}(h)$ is computationally tractable; see Remark~\ref{rmk:working-surrogate-classes} for further discussion.
For any $h_m\in\mathcal H^{\mathrm{sur}}$, the surrogate in the first-order representation is exactly the surrogate used in the regression adjustment. 
Indeed, by the definition of $\psi_{h_m}$, we have $\hat{\tau}_{\va, m}^{\mathrm{AIPW}}(u)-\tau_{\va}(u)=\frac{1}{m}\sum_{t=1}^m \psi_{h_m}(S_t; u).$
Thus, $\hat{\tau}_{\va,m}^{\mathrm{AIPW}}$ is regular relative to $\mathcal H^{\mathrm{sur}}$, with $r_m=0$ for every $m$. 
The form in~\eqref{eq:aipw-est} is also the with-replacement analogue of the generalized-difference estimator in model-assisted survey sampling~\citep{Sarndal1992-fm}.

\begin{remarknum}[Explicit Working Surrogate Classes]\label{rmk:working-surrogate-classes}
    The class \(\mathcal H^{\mathrm{sur}}\) is a working class used for surrogate adjustment, rather than a correctly specified model class for $u$. In particular, we do not require that \(u\in \mathcal H^{\mathrm{sur}}\). 
    Instead, its choice is  constrained by two practical considerations.
    First,  \(\tau_{\va}(h)\) should be computable without excessive additional utility evaluations. Otherwise, even a useful approximation \(h\approx u\) may be impractical because evaluating the term \(\tau_{\va}(h)\) is itself too costly.
    A simple example is the additive surrogate
\(
h(S)=\xi_0+\sum_{i\in S}\xi_i,
\)
for which every probabilistic value assigns player \(i\) the constant marginal contribution \(\xi_i\), so
\(
\tau_{\va}(h)=\sum_{i=1}^n \eva_i\xi_i.
\)
More expressive examples include tree ensembles in TreeSHAP~\citep{Lundberg2018-xx}, RKHS models in RKHS-SHAP~\citep{Chau2022-rg}, and low-degree polynomial classes in PolySHAP~\citep{Fumagalli2026-hu}, for which probabilistic values can be computed in closed form or by specialized algorithms.
Second, the surrogate $h$ should be practically obtainable under the available evaluation budget. 
It may be specified a priori when structural knowledge of the utility function is available, or fitted from evaluated utilities. In the latter case, a highly expressive class \(\mathcal H^{\mathrm{sur}}\) can be unstable when the Monte Carlo budget is small. 
 These desiderata parallel the requirements for auxiliary information used in model-assisted survey sampling: auxiliary variables should have tractable population totals~\citep{Sarndal1992-fm}, and the working model is used for variance reduction rather than assumed to be correctly specified~\citep{Dagdoug2023-nl}.
\end{remarknum}

\subsection{Self-Normalized Weighted-Average Estimators}
Several existing estimators proposed for Shapley and probabilistic value estimation use stratified weighted averages~\citep{Wang2022-iz,Kolpaczki2024-eq,Li2024-gp}.
For example, the Banzhaf-value estimator of \citet{Wang2022-iz} normalizes the weighted averages separately over coalitions that contain or exclude a given player, while stratified SVARM \citep{Kolpaczki2024-eq} further stratifies by coalition size.
Although these normalizations arise naturally from stratified sampling schemes in their original formulations, they can be viewed as instances of a H{\'a}jek-type estimator~\citep{hajek1971comment}. 
As we  show below, their first-order behavior can also be represented through a working surrogate. In this case, the surrogate is not specified explicitly but induced by the partition used for normalization.

To formalize this connection, suppose the coalition space \(2^{[n]}\) is partitioned into \(K\) cells \(\mathscr C=\{C_1,\dots,C_K\}\). 
Denote the design probability of cell \(C_k\) by \(\pi_k:=\PP_q(S\in C_k)\) and the empirical frequency of cell \(C_k\) in the sample by \(\hat\pi_k = N_k/m\), where \(N_k = \sum_{t=1}^m \1\{S_t \in C_k\}\).
Using this partition, the Horvitz--Thompson (HT) estimator in \eqref{eq:ht-est} can be rewritten as
\[
    \hat{\tau}_{\va, m}^{\mathrm{HT}}(u)
    =
    \sum_{k:N_k>0}
    \hat\pi_k
    \left\{
    \frac{1}{N_k}
    \sum_{t:S_t\in C_k}
    \gamma_{\va,q}(S_t)u(S_t)
    \right\}.
\]
This representation reveals that the HT estimator weights each within-cell average by the \emph{random} empirical frequency \(\hat\pi_k\).
A H{\'a}jek-type estimator instead replaces these empirical frequencies with the known design probabilities:
\begin{equation}
    \hat\tau_{\va, m}^{\mathrm{Hajek}}(u)
    =
    \sum_{k:N_k>0}
    \pi_k
    \left\{
    \frac{1}{N_k}
    \sum_{t:S_t\in C_k}
    \gamma_{\va,q}(S_t)u(S_t)
    \right\}.
    \label{eq:hajek-est}
\end{equation}
This substitution eliminates the variance introduced by random cell frequencies, while preserving the information gathered from the sampled utilities within those cells. 
When \(K=1\), the partition consists of the entire coalition space, and the
H{\'a}jek-type estimator in~\eqref{eq:hajek-est} reduces to the HT estimator in \eqref{eq:ht-est}.

The estimator in~\eqref{eq:hajek-est} does not explicitly construct a surrogate. Nevertheless, its first-order linearization centers the weighted utility $\gamma_{\va,q}u$ within each cell.
This is equivalent to using a working surrogate whose weighted version $\gamma_{\va,q}h$ reproduces the
cellwise conditional mean of $\gamma_{\va,q}u$.
Accordingly, consider the working surrogate class
\begin{equation*}
    \mathcal{H}_{\mathscr{C}, \gamma_{\va,q}} := \left\{
        h: \gamma_{\va,q}(S)h(S) = \sum_{k=1}^K \omega_k \ind(S \in C_k) \text{ for some } \omega_1, \dots, \omega_K \in \mathbb{R}
    \right\}.
\end{equation*}
The following proposition shows that the H{\'a}jek-type estimator is regular relative to this implicitly induced working surrogate class.

\begin{proposition}[Regularity of H{\'a}jek-type Weighted Average] \label{prop:aug-hajek-regular}
    Suppose that, on each cell \(C_k\in\mathscr C\), the weight \(\gamma_{\va,q}\) is either identically zero or everywhere nonzero. 
    Let $\mu_k
        :=
        \E_q\!\left[
            \gamma_{\va,q}(S)u(S)\mid S\in C_k
        \right]$ if \(\pi_k>0\) and  \(\mu_k
        :=0\) otherwise.
        Define \(h_{\mathscr C}\) by setting
    \(h_{\mathscr C}(S)=\mu_k/\gamma_{\va,q}(S)\) for
\(S\in C_k\) on cells where \(\gamma_{\va,q}\not\equiv0\), and
\(h_{\mathscr C}(S)=0\) on cells where \(\gamma_{\va,q}\equiv0\). Then \(h_{\mathscr C}\in\mathcal H_{\mathscr C,\gamma_{\va,q}}\), and the H{\'a}jek-type estimator \(\hat\tau_{\va, m}^{\mathrm{Hajek}}(u)\) in
\eqref{eq:hajek-est} satisfies 
\begin{equation}
   \hat\tau_{\va, m}^{\mathrm{Hajek}}(u)-\tau_{\va}(u)
    =
    \frac1m\sum_{t=1}^m \psi_{h_{\mathscr C}}(S_t;u)
    +
    r_m,\quad \text{with } \E_q(r_m^2) = o(m^{-1}).
    \label{eq:hajek-regular}
\end{equation}
Therefore, \(\hat\tau_{\va, m}^{\mathrm{Hajek}}(u)\)  is regular relative to \(\mathcal H_{\mathscr C,\gamma_{\va,q}}\). 
\end{proposition}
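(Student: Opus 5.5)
First I check membership: on cells where $\gamma_{\va,q}\not\equiv 0$ we have $\gamma_{\va,q}(S)h_{\mathscr C}(S)=\mu_k$, and on cells where $\gamma_{\va,q}\equiv0$ both sides are zero for any choice of $\omega_k$. Hence $h_{\mathscr C}\in\mathcal H_{\mathscr C,\gamma_{\va,q}}$ with $\omega_k=\mu_k$; we may take $\mu_k=0$ on null cells, consistent with the definition. Next I compute the influence term explicitly. We have $\gamma_{\va,q}h_{\mathscr C}=\sum_k\mu_k\1\{S\in C_k\}$, so $\tau_{\va}(h_{\mathscr C})=\E_q[\gamma_{\va,q}h_{\mathscr C}]=\sum_k\pi_k\mu_k=\tau_{\va}(u)$. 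Therefore
\[
\psi_{h_{\mathscr C}}(S;u)=\sum_{k}\1\{S\in C_k\}\{\gamma_{\va,q}(S)u(S)-\mu_k\},
\]
and $\frac1m\sum_t\psi_{h_{\mathscr C}}(S_t;u)=\sum_k \hat\pi_k(\bar Y_k-\mu_k)$. Here $\bar Y_k:=N_k^{-1}\sum_{t:S_t\in C_k}\gamma_{\va,q}(S_t)u(S_t)$ when $N_k>0$, and the corresponding term is zero when $N_k=0$.

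Then I write the Hájek error cell by cell. Using $\tau_{\va}(u)=\sum_k\pi_k\mu_k$ (cells with $\pi_k=0$ are never sampled and contribute nothing), we get
\[
\hat\tau^{\mathrm{Hajek}}_{\va,m}-\tau_{\va}(u)=\sum_{k:N_k>0}\pi_k(\bar Y_k-\mu_k)-\sum_{k:N_k=0}\pi_k\mu_k .
\]
Subtracting the linear term gives
\[
r_m=\sum_{k:N_k>0}(\pi_k-\hat\pi_k)(\bar Y_k-\mu_k)-\sum_{k:N_k=0}\pi_k\mu_k .
\]
Since the number of cells $K$ is fixed, it suffices to bound each cell's contribution to $\E_q r_m^2$ and apply Cauchy--Schwarz over the $K$ summands.

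The empty-cell term is simple: $\PP(N_k=0)=(1-\pi_k)^m$ decays geometrically for $\pi_k>0$, so its second moment is $o(m^{-1})$. The product term is the main step. I will condition on $N_k=n_k$. Given $N_k=n_k$, $\bar Y_k$ is an average of $n_k$ i.i.d.\ draws from $q(\cdot\mid C_k)$ with mean $\mu_k$ and variance $\sigma_k^2<\infty$, which is finite because the coalition space is finite. This gives $\E[(\bar Y_k-\mu_k)^2\mid N_k]=\sigma_k^2/N_k$, so
\[
\E\bigl[(\pi_k-\hat\pi_k)^2(\bar Y_k-\mu_k)^2\1\{N_k>0\}\bigr]=\sigma_k^2\,\E\bigl[(\hat\pi_k-\pi_k)^2/N_k;\,N_k>0\bigr].
\]
To bound the right side I split on the event $\{N_k\ge m\pi_k/2\}$. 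On this event the quantity is at most $\frac{2}{m\pi_k}\E(\hat\pi_k-\pi_k)^2=\frac{2(1-\pi_k)}{m^2}=O(m^{-2})$. On its complement, Chernoff gives probability $e^{-cm\pi_k}$, and the integrand is bounded by $1$. Hence the whole term is $O(m^{-2})=o(m^{-1})$. The main obstacle is the $1/N_k$ singularity, which the restriction to $N_k>0$ together with this Chernoff split handles.

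Summing over the finitely many cells yields $\E_q(r_m^2)=O(m^{-2})=o(m^{-1})$. Together with $h_{\mathscr C}\in\mathcal H_{\mathscr C,\gamma_{\va,q}}$ (which is nonrandom), this establishes \eqref{eq:hajek-regular} and hence regularity per Definition~\ref{def:regular}.
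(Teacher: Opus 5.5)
Your proof is correct and follows the paper's argument essentially step for step: the same exact decomposition $r_m=\sum_{k:N_k>0}(\pi_k-\hat\pi_k)(\bar Y_k-\mu_k)-\sum_{k:N_k=0}\pi_k\mu_k$, the same conditioning on $N_k$, and the same split at $N_k\ge m\pi_k/2$ with a Chernoff bound. The paper shows the cross terms between cells vanish instead of using Cauchy--Schwarz over cells, which only matters for its non-asymptotic bound. One small fix: on a cell with $\gamma_{\va,q}\equiv0$ the right-hand side of the class condition is $\omega_k$, so membership requires $\omega_k=\mu_k=0$. This holds automatically from the definition of $\mu_k$; it is not a choice, and the right-hand side is not zero for any $\omega_k$.
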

This proposition gives a common interpretation of several existing self-normalized weighted-average estimators. 
The estimator of the Banzhaf value for player \(i\) in \citet{Wang2022-iz} corresponds to the partition $\mathscr C=\{\{S:i\in S\}, \{S:i\notin S\}\}$. 
Stratified SVARM~\citep{Kolpaczki2024-eq}
refines this partition by further splitting cells based on coalition size, i.e., $\mathscr C=\{\{S:i\in S, |S|=s\}, \{S:i\notin S, |S|=s\}: s=0,\dots,n\}$.
In a related vein, 
\citet{Li2024-gp} use a similar membership-size partition and normalization idea to estimate more general probabilistic values.
In each case, the normalization determines the surrogate appearing in their first-order~representation.

\subsection{Weighted Least-Squares Estimators}
Motivated by the WLS representation in~\eqref{eq:wls-id-q}, the widely used KernelSHAP estimator and related variants estimate the target by fitting an empirical WLS regression and then applying the fixed linear functional \(\vc_{\va}^\top\) to the fitted coefficients~\citep{Lundberg2017-gl,Musco2024-cf,Chen2026-ai,Fumagalli2026-hu}. 
In our framework, the corresponding population WLS projection serves as the working surrogate for $u$.
To make this precise, denote the feature span by
\[
\mathcal H_{\mathrm{WLS}}
=
\left\{
h:
h(S)=\vz(S)^\top\vbeta
\text{ for some }
\vbeta\in\mathbb R^{d_\vz}
\right\}.
\]
Let \(h_w \in \argmin_{h\in\mathcal H_{\mathrm{WLS}}}
        \sum_{S \subseteq [n]} w(S)\{u(S)-h(S)\}^2\)
    be a population WLS projection of \(u\) onto the feature span.
Thus, unlike the explicit-surrogate estimators, the working surrogate is not
chosen directly. It is induced by the regression criterion used by the estimator.

The empirical version replaces the population WLS criterion with its sampled analogue.
Under limited observations, the sampled design may be rank deficient. To handle this issue and improve numerical stability, we study the ridge-penalized version, which is well posed and provides a stable object for the first-order analysis.
For \(\lambda>0\), define
\begin{equation}
    \hat{\tau}_{\va, m}^{\mathrm{WLS}, \lambda}(u)
    :=
    \vc_{\va}^\top \hat{\vbeta}_{m, \lambda},
    \quad \text{where } \hat{\vbeta}_{m, \lambda}
    :=
    \argmin_{\vbeta\in\mathbb R^{d_\vz}}
    \left\{
    \sum_{t=1}^m \frac{w(S_t)}{q(S_t)}\{u(S_t)-\vz(S_t)^\top\vbeta\}^2
    +
    \lambda \|\vbeta\|_2^2
    \right\}.
    \label{eq:wls-ridge-est}
\end{equation}
The following proposition shows that this estimator is regular relative to the feature-span surrogate class, with the working surrogate given by the population
WLS projection $h_w$.

\begin{proposition}
    [Regularity of Weighted Least-Squares Estimators] \label{prop:rr-regular}
    Let \(h_w\)  be a population WLS projection of \(u\) onto $\mathcal H_{\mathrm{WLS}}$.  Then
    \begin{equation*}
        \hat{\tau}_{\va, m}^{\mathrm{WLS}, \lambda}(u) - \tau_{\va}(u)
        =
        \frac{1}{m} \sum_{t=1}^m \psi_{h_w}(S_t; u)
        +
        r_m, \quad \text{with } \E_q(r_m^2) = o(m^{-1}).
    \end{equation*}
    Therefore, \(\hat{\tau}_{\va, m}^{\mathrm{WLS}, \lambda}(u)\)  is regular relative to \(\mathcal H_{\mathrm{WLS}}\). 
\end{proposition}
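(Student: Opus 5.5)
The plan is to reduce the ridge WLS estimator to an explicit linear-algebra perturbation around the population normal equations. First we identify the influence term, then we expand the estimator about the population projection, and finally we bound the remainder in \(L^2\) with a good-event argument. Throughout, \(n\) is fixed and \(m\to\infty\), so all quantities live on a finite coalition space and the importance-weighted summands are bounded.

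\emph{Step 1 (structure of the readout and of \(\gamma_{\va,q}\)).} Define \(\mG:=\sum_{S}w(S)\vz(S)\vz(S)^\top\) and \(\vb(f):=\sum_S w(S)\vz(S)f(S)\). The minimizers in \eqref{eq:wls-id} are exactly \(\mG^{+}\vb(u)+\ker(\mG)\). Since \eqref{eq:wls-id} holds for every minimizer, \(\vc_{\va}\perp\ker(\mG)\), i.e.\ \(\vc_{\va}\in\mathrm{range}(\mG)\).

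Next, \eqref{eq:wls-id} holds for every utility \(f\), so \(\tau_{\va}(f)=\vc_{\va}^\top\mG^{+}\vb(f)\). Comparing with \eqref{eq:tau-a-linear} coefficientwise gives
\[
\rho_{\va}(S)=w(S)\vz(S)^\top\vv,\qquad \vv:=\mG^{+}\vc_{\va},
\]
and hence \(\gamma_{\va,q}(S)=\frac{w(S)}{q(S)}\vz(S)^\top\vv\).

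Let \(\vbeta_w:=\mG^{+}\vb(u)\in\mathrm{range}(\mG)\) and take \(h_w=\vz^\top\vbeta_w\); any other population projection has the same values wherever \(w>0\), so \(\psi_{h_w}\) is unaffected. The normal equations give \(\vb(h_w)=\vb(u)\), so \(\tau_{\va}(h_w)=\tau_{\va}(u)\). Consequently
\[
\psi_{h_w}(S;u)=\vv^\top\vxi(S),\qquad \vxi(S):=\frac{w(S)}{q(S)}\vz(S)\{u(S)-h_w(S)\},
\]
and \(\vxi\) has mean zero under \(q\).

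\emph{Step 2 (exact expansion).} Set \(\hat\mG:=\frac1m\sum_t \frac{w(S_t)}{q(S_t)}\vz(S_t)\vz(S_t)^\top\), \(\hat\mG_\lambda:=\hat\mG+\frac{\lambda}{m}\mI\), and \(\hat{\vepsilon}:=\frac1m\sum_t\vxi(S_t)\). Then \(\hat\vbeta_{m,\lambda}=\hat\mG_\lambda^{-1}\cdot\frac1m\sum_t\frac{w}{q}\vz u\), which gives the exact identity
\[
\hat\tau^{\mathrm{WLS},\lambda}_{\va,m}-\tau_{\va}(u)=\vc_{\va}^\top\hat\mG_\lambda^{-1}\hat{\vepsilon}-\tfrac{\lambda}{m}\vc_{\va}^\top\hat\mG_\lambda^{-1}\vbeta_w .
\]
Subtracting \(\vv^\top\hat{\vepsilon}=\frac1m\sum_t\psi_{h_w}(S_t;u)\) yields
\[
r_m=\vc_{\va}^\top(\hat\mG_\lambda^{-1}-\mG^{+})\hat{\vepsilon}-\tfrac{\lambda}{m}\vc_{\va}^\top\hat\mG_\lambda^{-1}\vbeta_w .
\]

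The subspace \(V:=\mathrm{range}(\mG)\) carries everything. Every sampled \(\vz(S_t)\) with \(w(S_t)>0\) lies in \(V\), so \(V\) is invariant under \(\hat\mG\) and \(\hat\mG_\lambda\). Moreover \(\vc_{\va}\), \(\vbeta_w\) and \(\hat{\vepsilon}\) all lie in \(V\). We may therefore work with the restrictions \(\mG|_V\), which is invertible with smallest eigenvalue \(\kappa>0\), and \(\hat\mG_\lambda|_V\).

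\emph{Step 3 (remainder bound, the main obstacle).} The difficulty is that \(\hat\mG_\lambda^{-1}\) can be as large as \(m/\lambda\) when the sample is rank deficient on \(V\), so we cannot just use a delta-method argument. Let \(E:=\{\lambda_{\min}(\hat\mG|_V)\ge\kappa/2\}\).

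On \(E\), the resolvent identity gives \(\|\hat\mG_\lambda^{-1}-\mG^{+}\|\le (2/\kappa^2)(\|\hat\mG-\mG\|+\lambda/m)\). Since the summands are bounded and i.i.d., \(\E\|\hat\mG-\mG\|^4=O(m^{-2})\) and \(\E\|\hat{\vepsilon}\|^4=O(m^{-2})\). Cauchy--Schwarz then gives \(\E[r_m^2\ind_E]=O(m^{-2})\); the ridge bias term contributes \(O(\lambda^2/m^2)\).

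On \(E^c\), use the crude bound \(\|\hat\mG_\lambda^{-1}\|\le m/\lambda\), so \(|r_m|=O(m)\) there. A matrix Chernoff (or Hoeffding) bound for bounded PSD summands gives \(\PP(E^c)\le d_{\vz}e^{-cm}\). Hence \(\E[r_m^2\ind_{E^c}]=O(m^2e^{-cm})\).

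Combining the two events gives \(\E_q(r_m^2)=O(m^{-2})=o(m^{-1})\), which is the required regularity relative to \(\mathcal H_{\mathrm{WLS}}\). The points I expect to need the most care are the subspace/rank bookkeeping in Steps 1--2 (showing \(\vc_{\va}\in V\) and that \(V\) is \(\hat\mG\)-invariant) and the exponential control of \(E^c\) in Step 3.
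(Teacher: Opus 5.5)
Your proposal is correct and follows essentially the same route as the paper. The paper proves this proposition by specializing its non-asymptotic WLS remainder theorem, and that proof uses the same identification $\gamma_{\va,q}(S)=\frac{w(S)}{q(S)}\vz(S)^\top\mG^{+}\vc_{\va}$, the same exact ridge decomposition of $r_m$, the restriction to $\operatorname{range}(\mG)$, and the same good/bad-event split with the crude $m/\lambda$ ridge bound and an exponential matrix-concentration tail; the only difference is that on the good event you use Cauchy--Schwarz with fourth moments (enough for fixed $n$), whereas the paper whitens and uses a decoupled same-sample product lemma to track the dependence on $n$.
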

From this viewpoint, using a richer feature span enriches the set of working
approximations to $u$ available to the estimator. For example, \citet{Fumagalli2026-hu} use a more expressive feature span in the WLS regression, which can be interpreted as expanding $\mathcal H_{\mathrm{WLS}}$. 
Our focus next is complementary. Rather than expanding the surrogate class, we take a working
class as given and study how the choice of $q$ and $h$ affects first-order risk.

\section{Error Expansion and Design Principles} \label{sec:first-order-risk}

The regularity framework in Section~\ref{sec:regular} connects many estimator constructions to a common first-order expansion indexed by a sampling law \(q\) and a working surrogate~\(h\). 
In this section, we study the leading term in the mean squared error implied by this expansion and its joint dependence on \(q\) and \(h\). This dependence provides a design criterion for choosing sampling laws and surrogates
that reduce first-order risk.

If \(m\to\infty\) with the number of players $n$ fixed, the  expansion in \eqref{eq:regular-expansion} suggests that the leading MSE is the variance of \(\psi_h\) divided by \(m\) under some regularity conditions.
In probabilistic-value estimation, however, this fixed-$n$ asymptotic regime is not the main one of interest. 
The utility function is defined on \(2^n\) coalitions, while feasible sampling budgets are usually much smaller than \(2^n\), often only polynomial in \(n\)~\citep{Castro2009-yu, Jia2019-wa, Li2024-yr, Li2024-gp, Musco2024-cf}.
Although the remainder $r_m$ in \eqref{eq:regular-expansion} is lower order in $m$, its constants may depend on $n$. 
A first-order risk criterion is therefore informative only when this remainder is small relative to the leading term at the sample sizes used in practice.

We proceed in two steps. First, we derive the leading MSE term for a regular estimator and characterize its joint dependence on $q$ and $h$. 
This reveals that the sampling law  and surrogate induced by existing estimators do not necessarily minimize first-order risk. 
Second, we provide non-asymptotic conditions under which the leading term dominates the remainder for polynomial sampling budgets.

\subsection{First-Order MSE as a Design Criterion} \label{subsec:first-order-risk}

We state the results for the general setting in which the same sampled coalitions \(\mathcal{D}_m=\{(S_t,u(S_t))\}_{t=1}^m\) are used to estimate multiple linear transformations of probabilistic values, such as the probabilistic values of all players. The single target is a special case.

Let \(\tau_{\va_1}(u),\dots,\tau_{\va_d}(u)\) be the targets of interest, where \(d\) is independent of \(m\), and let \(\mA=[\va_1,\dots,\va_d]\in \R^{n \times d}\).
The target vector
\((\tau_{\va_1}(u),\dots,\tau_{\va_d}(u))^\top\) can then be written as
\(\vtau_{\mA}(u) := \mA^\top \vphi(u)\), where \(\vphi(u)\) is the vector of
probabilistic values.
Consider the estimator \(\hat{\vtau}_{\mA,m}(u)=\bigl(
        \hat\tau_{\va_1,m}(u),\dots,
        \hat\tau_{\va_d,m}(u)
    \bigr)^\top\) constructed from $\mathcal{D}_m$. 
Suppose each coordinate \(\hat\tau_{\va_j,m}\) is regular relative to the working surrogate class
\(\mathcal H_j\), for \(j=1,\dots,d\). Then there exists
\(h_{j}\in\mathcal H_j\) such that 
\begin{equation}
    \hat\tau_{\va_j,m}(u)-\tau_{\va_j}(u)
    :=
    \frac1m\sum_{t=1}^m
    \psi_{\va_j,q,h_{j}}(S_t;u)
    +
    r_{m,j},
    \qquad
    \E_q(r_{m,j}^2)=o(m^{-1}),
    \label{eq:coordinate-regular-expansion}
\end{equation}
where  \(r_{m,j}\) is the remainder term.  
Let \(\vr_m=(r_{m,1},\dots,r_{m,d})\) and \(\vh=(h_{1},\dots,h_{d})\).
Define
\begin{equation} \label{eq:variance-criterion}
    V(\mA;q,\vh)
    :=
    \sum_{j=1}^d \Var_q\!\left[
        \gamma_{\va_j,q}(S)\{u(S)-h_j(S)\}
    \right]
\end{equation}
as the summation of the variances of the coordinatewise influence terms $\psi_{\va_j,q,h_{j}}$.
The following lemma gives the first-order expansion of MSE. 

\begin{lemma}[First-order Error Expansion]
\label{lem:regular-mse-vector}
Assume that the coordinatewise expansion
\eqref{eq:coordinate-regular-expansion} holds for all \(1\le j\le d\). Suppose that \(V^*(\mA;q):= \inf_{h_j \in\mathcal H_j} V(\mA;q,\vh)>0\), then
\begin{equation}
    \left|
    \frac{
        \E_q\!\left[
            \left\|
                \hat{\vtau}_{\mA,m}(u)-\vtau_{\mA}(u)
            \right\|_2^2
        \right]
    }{
        V(\mA;q,\vh)/m
    }
    -1
    \right|
    \le
    2\sqrt{\frac{m \E_q(\|\vr_m\|_2^2)}{V(\mA;q,\vh)}}
    +
    \frac{m \E_q(\|\vr_m\|_2^2)}{V(\mA;q,\vh)}.
    \label{eq:mse-ratio-bound}
\end{equation}
Consequently, for any \(0<\epsilon\le1\), the relative error is at most
\(\epsilon\) whenever
\(
    \E_q(\|\vr_m\|_2^2)
    \le C \epsilon^2  \frac{V(\mA;q,\vh)}{m}
\)
for some universal constant \(C > 0\). 
\end{lemma}

The condition \(V^*(\mA;q)>0\) excludes degenerate cases in which the surrogate class can eliminate the first-order variance under the given sampling law. This condition is mild in practice. As discussed in Remark~\ref{rmk:working-surrogate-classes}, surrogate classes are typically constrained by computational and statistical feasibility and are not assumed to contain the true utility~function~\(u\).

Since \(\E_q(\|\vr_m\|_2^2)=\sum_{j=1}^d \E_q(r_{m,j}^2) = o(m^{-1})\), Lemma~\ref{lem:regular-mse-vector} implies that \(V(\mA;q,\vh)/m\) is the first-order MSE term.
This expansion motivates using first-order MSE as a design criterion. It suggests choosing \(q\) and \(\vh\) to reduce \(V(\mA;q,\vh)\). Existing estimators often induce surrogate adjustments through criteria internal to their construction, such as self-normalization or weighted least squares, or through generic distance metrics between \(u\) and \(h\). These choices need not minimize the first-order MSE criterion.

\subsection{Non-Asymptotic Bounds for the Remainder}
The bound in~\eqref{eq:mse-ratio-bound} shows that the accuracy of the first-order MSE approximation is governed by the scaled remainder \(m\E_q(\|\vr_m\|_2^2)\). The scale of this term depends on both the specific estimator design and the target. This subsection establishes explicit non-asymptotic upper bounds for \(\E_q(\|\vr_m\|_2^2)\) for two estimator classes: H{\'a}jek-type weighted averages and weighted least-squares (WLS) estimators. We then apply these bounds to representative Shapley-value estimators~\citep{Li2024-gp, Kolpaczki2024-eq, Lundberg2017-gl, Musco2024-cf} and show that, for \(0<\epsilon\le1\), the first-order MSE approximation has relative error at most \(\epsilon\) once \(m=\Omega(n\operatorname{polylog}(n)/\epsilon^2)\), which aligns with the regime of primary interest in the literature~\citep{Li2024-yr, Li2024-gp, Musco2024-cf}. 

\paragraph{H{\'a}jek-type weighted average estimators.} Consider the H{\'a}jek-type estimator for a scalar target in~\eqref{eq:hajek-est} with partition \(\mathscr C=\{C_k\}_{k=1}^K\), and let \(h_{\mathscr C}\) be the corresponding cellwise surrogate defined in Proposition~\ref{prop:aug-hajek-regular}.
Denote the number of cells with positive sampling probability by \(K_+:=|\{k:\pi_k>0\}|\), and the smallest nonzero cell probability by \(\pi_{\min}:=\min_{k:\pi_k>0}\pi_k\).
Define the average cellwise squared weight and the largest magnitude by
\(
    \Gamma_{2,\mathscr C}^2
    :=
    \frac{1}{K_+}
    \sum_{k:\pi_k>0}
    \E_q\!\left[\gamma_{\va,q}(S)^2\mid S\in C_k\right]
\)
and \(
    \Gamma_\infty
    :=
    \sup_{S:q(S)>0} |\gamma_{\va,q}(S)|,
\) respectively.

\begin{theorem}[Non-asymptotic remainder bound for H{\'a}jek-type estimators] \label{prop:non-asymptotic-hajek}
    Assume the conditions in Proposition~\ref{prop:aug-hajek-regular} hold.
    There exist universal constants \(C_1,C_2>0\) such that
    \begin{equation}
        \E_q(r_m^2)
        \le
        C_1 \|u\|_\infty^2
        \left\{
            \frac{K_+ \; \Gamma_{2,\mathscr C}^2}{m^2}
        +
        \Gamma_\infty^2 e^{-C_2 m \pi_{\min}}
        \right\}.
        \label{eq:aug-hajek-finite-remainder}
    \end{equation}
\end{theorem}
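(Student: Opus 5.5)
We will compute the remainder $r_m$ explicitly and bound its second moment cell by cell. Write $Y_t:=\gamma_{\va,q}(S_t)u(S_t)$ and, for each cell with $\pi_k>0$, let $\bar Y_k:=N_k^{-1}\sum_{t:S_t\in C_k}Y_t$ when $N_k>0$. Since $\tau_{\va}(u)=\sum_k\pi_k\mu_k$, the estimator error is
\[
\hat\tau_{\va,m}^{\mathrm{Hajek}}(u)-\tau_{\va}(u)=\sum_{k:\pi_k>0}\bigl\{\ind(N_k>0)\pi_k(\bar Y_k-\mu_k)-\ind(N_k=0)\pi_k\mu_k\bigr\}.
\]
By construction $\gamma_{\va,q}h_{\mathscr C}=\mu_k$ on $C_k$. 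Also $\tau_{\va}(h_{\mathscr C})=\E_q[\gamma_{\va,q}h_{\mathscr C}]=\sum_k\pi_k\mu_k=\tau_{\va}(u)$. Hence $\psi_{h_{\mathscr C}}(S;u)=\sum_k\ind(S\in C_k)(Y-\mu_k)$, and the linear term is $m^{-1}\sum_t\psi_{h_{\mathscr C}}(S_t;u)=\sum_k\hat\pi_k(\bar Y_k-\mu_k)$, with $\hat\pi_k(\bar Y_k-\mu_k):=0$ when $N_k=0$. Subtracting,
\[
r_m=\sum_{k:\pi_k>0}(\pi_k-\hat\pi_k)(\bar Y_k-\mu_k)\ind(N_k>0)\;-\;\sum_{k:\pi_k>0}\pi_k\mu_k\ind(N_k=0)=:R_1+R_2 .
\]
We then bound $\E(r_m^2)\le 2\E R_1^2+2\E R_2^2$.

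For $R_2$, use $|\mu_k|\le\Gamma_\infty\|u\|_\infty$ and $\sum_k\pi_k\le1$, so $R_2^2\le\Gamma_\infty^2\|u\|_\infty^2\sum_k\pi_k\ind(N_k=0)$ by Jensen. Taking expectations gives
\[
\E R_2^2\le\Gamma_\infty^2\|u\|_\infty^2\sum_k\pi_k(1-\pi_k)^m\le \Gamma_\infty^2\|u\|_\infty^2 e^{-m\pi_{\min}},
\]
which is the exponential term.

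For $R_1$, condition on the cell labels $\mathcal L=(\ind(S_t\in C_k))_{t,k}$. Given $\mathcal L$, the within-cell draws are independent with law $q(\cdot\mid C_k)$, independent across cells, so the centered averages $\bar Y_k-\mu_k$ are conditionally mean zero and uncorrelated across $k$. Moreover $\E[(\bar Y_k-\mu_k)^2\mid\mathcal L]\le \sigma_k^2/N_k$, with $\sigma_k^2\le\|u\|_\infty^2\E_q[\gamma_{\va,q}^2\mid C_k]$. Hence
\[
\E R_1^2\le\|u\|_\infty^2\sum_k\E_q[\gamma_{\va,q}^2\mid C_k]\;\E\!\left[\frac{(\hat\pi_k-\pi_k)^2}{N_k}\ind(N_k>0)\right].
\]

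The main obstacle is showing that the last expectation is $O(m^{-2})$ up to an exponentially small term; it is the only step requiring a binomial moment computation. Split on the event $\{N_k\ge m\pi_k/2\}$. On this event the expectation is at most $(2/(m\pi_k))\E(\hat\pi_k-\pi_k)^2=2(1-\pi_k)/m^2\le 2/m^2$. On the complement, $N_k\ge1$ and $(\hat\pi_k-\pi_k)^2\le1$, so the contribution is at most $\PP(N_k<m\pi_k/2)\le e^{-m\pi_k/8}$ by a Chernoff bound.

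Summing over cells, the first piece gives $2K_+\Gamma_{2,\mathscr C}^2\|u\|_\infty^2/m^2$. For the Chernoff piece, bound $\E_q[\gamma_{\va,q}^2\mid C_k]\le\Gamma_\infty^2$ and $e^{-m\pi_k/8}\le e^{-m\pi_{\min}/16}\,e^{-m\pi_k/16}$. Using $\sum_k e^{-m\pi_k/16}\le K_+$ together with $K_+\le1/\pi_{\min}$ and $x e^{-x}\lesssim1$, the factor $K_+$ is absorbed after shrinking the constant $C_2$. If this absorption step fails, the fallback is to instead bound $\ind(N_k<m\pi_k/2)/N_k$ through $\E[\ind(N_k>0)/N_k]\lesssim1/(m\pi_k)$ and to collect $K_+/(m\pi_{\min})$-type terms into the exponential. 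Combining the two bounds yields \eqref{eq:aug-hajek-finite-remainder}.
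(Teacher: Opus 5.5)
Your decomposition $r_m=R_1+R_2$, the conditioning on cell labels, the treatment of $R_2$, and the split at $\{N_k\ge m\pi_k/2\}$ all match the paper's proof. The step you flag as the main obstacle, however, does not close as written.

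On $\{0<N_k<m\pi_k/2\}$ you use $(\hat\pi_k-\pi_k)^2\le1$. This leaves the unweighted sum $\|u\|_\infty^2\sum_{k:\pi_k>0}\E_q[\gamma_{\va,q}(S)^2\mid S\in C_k]\,e^{-m\pi_k/8}$, which can be as large as $\|u\|_\infty^2\Gamma_\infty^2K_+e^{-m\pi_{\min}/8}$. Absorbing $K_+$ into $e^{-C_2m\pi_{\min}}$ requires $m\pi_{\min}\gtrsim\log K_+$. The inequality $K_+\le1/\pi_{\min}$ does not help, since $\pi_{\min}^{-1}e^{-cm\pi_{\min}}=m\,(m\pi_{\min})^{-1}e^{-cm\pi_{\min}}$ still carries a factor $m$.

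Concretely, take $K$ equiprobable cells with $|\gamma_{\va,q}|$ constant, for example the Banzhaf value of a player under uniform $q$, where $\gamma_{\va,q}=\pm2$. Let $m=tK$ with $t$ fixed. Your bound is of order $\|u\|_\infty^2Ke^{-t/8}$, while the right-hand side of \eqref{eq:aug-hajek-finite-remainder} is $O(\|u\|_\infty^2\{1/(t^2K)+e^{-C_2t}\})$. So no universal $C_1,C_2$ work as $K\to\infty$. Your fallback is no better: a term of order $\Gamma_\infty^2K_+/(m\pi_{\min})$ decays only like $m^{-1}$ and cannot be bounded by the target right-hand side for large $m$.

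The missing observation is that on $\{0<N_k<m\pi_k/2\}$ you have $0<\pi_k-\hat\pi_k<\pi_k$ and $N_k\ge1$. So this piece is at most $\pi_k^2\,\PP(N_k<m\pi_k/2)\le m^{-2}(m\pi_k)^2e^{-m\pi_k/8}\le C/m^2$, because $x\mapsto x^2e^{-x/8}$ is bounded on $[0,\infty)$. Hence $\E[(\hat\pi_k-\pi_k)^2N_k^{-1}\1\{N_k>0\}]\le C/m^2$ uniformly in $k$ and $m$. This is exactly the paper's binomial moment bound $\E[(N_k-m\pi_k)^2N_k^{-1}\1\{N_k>0\}]\le C$.

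The normalization remainder is then at most $C\|u\|_\infty^2K_+\Gamma_{2,\mathscr C}^2/m^2$, with no exponential piece. The exponential term arises only from $R_2$, where, just as here, the $\pi_k$ weights are what prevent a stray factor $K_+$. With this one change, your argument coincides with the paper's proof.
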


Theorem~\ref{prop:non-asymptotic-hajek} bounds the remainder by two terms.
The first term is the main polynomial term in \(m\). It reflects the cost of estimating separate normalizing factors across the active cells in \(\mathscr C\), and scales with both \(K_+\) and \(\Gamma_{2,\mathscr C}^2\). The second term controls the rare event that some active cell receives no samples. It decays exponentially in \(m\pi_{\min}\) and is governed by the largest weight \(\Gamma_\infty\).
We interpret the bound through two representative H{\'a}jek-type estimators, OFA~\citep{Li2024-gp} and Stratified SVARM~\citep{Kolpaczki2024-eq}, specialized to Shapley-value estimation through their corresponding partitions and sampling laws. Details are given in Appendix~\ref{app:non-asymptotic-hajek}.

\begin{corollary}[Remainder bound for OFA and Stratified SVARM]\label{corollary:hajek-ofa-svarm-rates}
    Consider estimating the Shapley value \(\evphi_i\) of a fixed player \(i\). There exist universal constants \(C_1,C_2>0\) such that
\[
    \E_q(r_m^2)
    \le
    C_1\|u\|_\infty^2
    \begin{cases}
        \displaystyle
        \frac{n\log n}{m^2}
        +
        n e^{-C_2 m/n^{3/2}},
        & \text{for OFA~\citep{Li2024-gp}},\\[1ex]
        \displaystyle
        \frac{n\log^2 n}{m^2}
        +
        \log^2 n\, e^{-C_2 m/(n\log n)},
        & \text{for Stratified SVARM~\citep{Kolpaczki2024-eq}}.
    \end{cases}
\]
\end{corollary}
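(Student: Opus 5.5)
The corollary follows by plugging into Theorem~\ref{prop:non-asymptotic-hajek}. For each estimator I will identify the partition $\mathscr C$ and the sampling law $q$, and then compute the three design quantities $K_+$, $\Gamma_{2,\mathscr C}^2$ (equivalently $K_+\Gamma_{2,\mathscr C}^2=\sum_{k:\pi_k>0}\E_q[\gamma^2\mid C_k]$), $\Gamma_\infty$ and $\pi_{\min}$. For the Shapley target with $\va=\ve_i$, the weight is $\rho_i(S)=\{n\binom{n-1}{s}\}^{-1}$ when $i\in S$ and $|S|=s+1$. It is $-\{n\binom{n-1}{s}\}^{-1}$ when $i\notin S$ and $|S|=s$. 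Both sampling laws are exchangeable within the membership--size cells $C_{s}^{\pm}$ (those with $i\in S$ and $i\notin S$ at a given size). Hence $q(S)=\pi_k/|C_k|$ on each cell, and $\gamma_{\va,q}$ is constant on cells. This gives
\[
|\gamma_{\va,q}(S)| = \frac{1}{n\,\pi_k},\qquad \E_q[\gamma^2\mid C_k]=\frac{1}{n^2\pi_k^2},
\]
because $|C_k|=\binom{n-1}{s}$ cancels the binomial in $\rho_i$. The remaining work is to evaluate sums of $\pi_k^{-2}$, the maximum of $\pi_k^{-1}$, and the minimum of $\pi_k$ for the specific size distributions.

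\emph{Stratified SVARM.} The partition has $K_+\asymp n$ cells (both membership types, all sizes). The sampling law of \citet{Kolpaczki2024-eq} puts size probability proportional to $1/s$-type harmonic weights, so a typical cell has $\pi_k\asymp 1/(s\log n)$ up to symmetrization in $s\leftrightarrow n-s$. For the exponential term this yields $\pi_{\min}\gtrsim 1/(n\log n)$. Then $\Gamma_\infty=\max 1/(n\pi_k)\lesssim \log n$, so $\Gamma_\infty^2\lesssim\log^2 n$. For the polynomial term,
\[
K_+\Gamma^2_{2,\mathscr C}=\sum_k \frac{1}{n^2\pi_k^2}\lesssim \frac{\log^2 n}{n^2}\sum_{s}\min(s,n-s)^2\lesssim n\log^2 n .
\]
I will recheck the exact SVARM size law (it samples the sizes $0,1,n-1,n$ exactly and draws the rest harmonically). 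Any boundary cells handled deterministically either drop out or contribute $O(1)$.

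\emph{OFA.} Here the partition is again membership $\times$ size, but OFA's optimized size law is approximately proportional to $1/\sqrt{s(n-s)}$-type weights, normalized by a constant of order $1$. So $\pi_k\gtrsim 1/\sqrt{s(n-s)}\cdot n^{-1}$ roughly, giving $\pi_{\min}\gtrsim n^{-3/2}$ at the central sizes. I then need $\Gamma_\infty^2\lesssim n$ and $\sum_k 1/(n^2\pi_k^2)\lesssim n\log n$. The $\log n$ should come from a harmonic sum $\sum_s 1/s$ arising from $\pi_k^{-2}\propto$ a quantity that, after dividing by $n^2$, behaves like $n/\min(s,n-s)$.

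The main obstacle is pinning down the exact OFA sampling law and its normalizing constant so that these three sums have precisely the stated orders. This is where the $n^{3/2}$ and $\log n$ factors originate. I would first write $\pi_k$ in closed form from the paper's OFA specification, bound the normalizer by comparison with an integral $\int_0^1 (x(1-x))^{-1/2}dx$ (or a harmonic sum), and then read off the minimum and the sums. Everything else is a direct substitution into \eqref{eq:aug-hajek-finite-remainder}, with the universal constants absorbed into $C_1,C_2$.
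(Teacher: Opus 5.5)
Your reduction matches the paper's: you plug the membership$\times$size partition into Theorem~\ref{prop:non-asymptotic-hajek}. Your identity $|\gamma_{\va,q}(S)|=1/(n\pi_k)$ is a correct and tidy repackaging of the paper's cell weights $1/(sq_s)$ and $-1/\{(n-s)q_s\}$, and it also makes $\Gamma_\infty=1/(n\pi_{\min})$ explicit. The gap is in the one computation that remains: your cell probabilities omit the factor $\Pr(i\in S\mid |S|=s)$. With size law $q_s$ and cells indexed by $|S|=s$, the correct values are $\pi^+_s=\frac{s}{n}q_s$ for $i\in S$ and $\pi^-_s=\frac{n-s}{n}q_s$ for $i\notin S$. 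Your SVARM display uses $q_s$ itself, and your OFA display uses $q_s/n$.

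\textbf{OFA.} Here the omission breaks the argument as written. The bound $\pi_k\gtrsim n^{-1}\{s(n-s)\}^{-1/2}$ is true, but it is only $\asymp n^{-2}$ at central sizes. Propagating it gives $\pi_{\min}\gtrsim n^{-2}$, $\Gamma_\infty^2\lesssim n^2$ and $\sum_k(n\pi_k)^{-2}\lesssim n^3$, not the stated rates, and ``$\pi_{\min}\gtrsim n^{-3/2}$ at the central sizes'' does not follow from it. With the correct factor, $\pi^+_s\asymp n^{-1}\sqrt{s/(n-s)}$ and $\pi^-_s\asymp n^{-1}\sqrt{(n-s)/s}$. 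Both are $\asymp n^{-1}$ at central sizes. The minimum $\asymp n^{-3/2}$ sits at the extremes: $s=1$ for the $i\in S$ cells and $s=n-1$ for the $i\notin S$ cells. This gives $\Gamma_\infty^2\asymp n$. The terms $(n\pi^+_s)^{-2}\asymp(n-s)/s$ and $(n\pi^-_s)^{-2}\asymp s/(n-s)$ sum to $\asymp n\log n$, which is exactly your $n/\min(s,n-s)$ heuristic.

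\textbf{Stratified SVARM.} Your $\pi_k\asymp 1/(s\log n)$ is really $q_s$. For the $i\in S$ cells with small $s$, the true probability is $\asymp 1/(n\log n)$, smaller by a factor $n/s$. Since $\pi_k\le q_s$, substituting $q_s$ gives a lower bound on $\sum_k(n\pi_k)^{-2}$, so your displayed $\lesssim$ chain proves nothing. The order $n\log^2 n$ is nonetheless correct. With the true probabilities, every cell has $(n\pi_k)^{-1}=1/(sq_s)$ or $1/\{(n-s)q_s\}$, each $\lesssim\log n$ because $q_s\asymp 1/\{\min(s,n-s)\log n\}$, and there are $2(n-1)$ cells. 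Your values $\pi_{\min}\asymp 1/(n\log n)$ and $\Gamma_\infty\lesssim\log n$ for SVARM are right.
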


The polynomial terms in Corollary~\ref{corollary:hajek-ofa-svarm-rates} determine the leading remainder scales. If \(\|u\|_\infty\) is bounded and the first-order MSE is nondegenerate, then, for any \(0<\epsilon\le1\), the first-order MSE approximation has relative error at most \(\epsilon\) once \(m=\Omega(n\log n/\epsilon^2)\) for OFA and \(m=\Omega(n\log^2 n/\epsilon^2)\) for Stratified SVARM. For OFA, this \(n\log n\) dependence implies the non-asymptotic query complexity required for its Shapley-value approximation guarantee (see Appendix~\ref{app:non-asymptotic-hajek} for more details).

\paragraph{WLS estimators.}
Consider a vector-valued extension of the WLS estimators. Suppose \(\vz(S)\in\mathbb R^{d_{\vz}}\) and the target vector \(\vtau_{\mA}(u)\) is represented through the WLS projection
\[
    \vtau_{\mA}(u)=\mC_{\mA}^\top\vbeta^\star(u), \quad \text{for every } \vbeta^\star(u) \in \argmin_{\vbeta\in\R^{d_{\vz}}} \E_q\!\left[
        \frac{w(S)}{q(S)}\{u(S)-\vz(S)^\top\vbeta\}^2
    \right].
\]
We estimate this target by \(\hat\vtau_{\mA,m}^{\mathrm{WLS},\lambda}=\mC_{\mA}^\top\hat\vbeta_{m,\lambda}\), where \(\hat\vbeta_{m,\lambda}\) is the ridge estimator in \eqref{eq:wls-ridge-est}. 

Each coordinate admits the regular expansion in~\eqref{eq:coordinate-regular-expansion} with the same surrogate \(h_j=h_w\) for all \(j\), where \(h_w\) is  defined in Proposition~\ref{prop:rr-regular}. 
The next result establishes a non-asymptotic bound for the remainder \(\E_q(\|\vr_m\|_2^2)\), which depends on the leverage scores of the weighted feature design. 
Define the Gram matrix and the leverage scores as
\begin{equation*}
    \mG =
    \E_q\!\left[
        \tilde w(S)\vz(S)\vz(S)^\top
    \right],
    \quad
    \ell(S) = \tilde w(S)\vz(S)^\top \mG^+\vz(S),
\end{equation*}
where \(\tilde w(S):=w(S)/q(S)\) with $\operatorname{supp}(w) \subseteq \operatorname{supp}(q)$ and \(\mG^+\) is the Moore--Penrose pseudoinverse. 
Large values of \(\ell(S)\) indicate that the WLS fit is more sensitive to the observation~at~\(S\).

\begin{theorem}
    [Non-asymptotic remainder bound for WLS estimators] \label{prop:rr-regular-exact}
    Let \(L = \sup_{S:q(S)>0} \ell(S)\) be the maximum leverage score. Assume that \(\mG\) has nonzero rank. Let \(\sigma_{\min}^+(\mG)\) and \(\sigma_{\max}(\mG)\) denote the minimum nonzero and maximum eigenvalues of \(\mG\), respectively, and define \(\kappa_{\mG}:=\sigma_{\max}(\mG)/\sigma_{\min}^+(\mG)\). 
    Define
    \(R_{\mA}^2:=\sigma_{\max}(\mC_{\mA}^\top\mG^+\mC_{\mA})\),
    \(\|u\|_{2,\tilde{w}}^2:=\E_q[\tilde w(S)u(S)^2]\), and
    \(\|u\|_{\infty,\tilde{w}}^2:=\sup_{S:q(S)>0}\tilde w(S)u(S)^2\).
    If the ridge penalty parameter satisfies \(\lambda \asymp \sigma_{\min}^+(\mG)\), then there exist universal constants \(C_1,C_2>0\) such that
    \begin{align}
        \E_q\left(\|\vr_m\|_2^2\right) &\le
        C_1 R_{\mA}^2 L^2 \left[ \frac{1}{m^2}\log d_{\vz} \left(1+\frac{L\log d_{\vz}}{m}\right) \|u\|_{2,\tilde w}^2 +
        e^{-\frac{C_2m}{L}} \cdot d_{\vz} m^2 \kappa_{\mG}^2 \|u\|_{\infty,\tilde w}^2 \right].
    \label{eq:rr-delta-bound}
    \end{align}
\end{theorem}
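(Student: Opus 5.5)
The plan is to linearize the ridge estimator around the population projection and to show that the linear term is exactly $\frac1m\sum_t\psi_{h_w}(S_t;u)$. The remainder then splits into a quadratic term on a high-probability event and a crude term on its complement.

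\textbf{Identify the influence term.} Write $\tilde w=w/q$ and $\vbeta^\star=\mG^+\E_q[\tilde w\vz u]$, so that $h_w=\vz^\top\vbeta^\star$, and set $e:=u-h_w$. The normal equations give $\E_q[\tilde w(S)\vz(S)e(S)]=0$, and $\vz(S)\in\operatorname{range}(\mG)$ for $q$-a.e.\ $S$ with $w(S)>0$. Since the WLS representation identifies $\vtau_{\mA}(f)=\mC_{\mA}^\top\mG^+\E_q[\tilde w\vz f]$ for \emph{every} $f$, comparing with \eqref{eq:tau-a-ipw} forces
\[
\gamma_{\va_j,q}(S)=\tilde w(S)\,\vc_{\va_j}^\top\mG^+\vz(S).
\]
Hence
\[
\frac1m\sum_t\psi_{h_w}(S_t;u)=\mC_{\mA}^\top\mG^+\hat{\vg},\qquad \hat{\vg}:=\frac1m\sum_t\tilde w(S_t)\vz(S_t)e(S_t).
\]
Let $\hat\mG=\frac1m\sum_t\tilde w\vz\vz^\top$ and $\lambda_m:=\lambda/m$. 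Then $\hat\vbeta_{m,\lambda}=(\hat\mG+\lambda_m\mI)^{-1}(\hat\mG\vbeta^\star+\hat\vg)$, which yields
\[
\vr_m=\mC_{\mA}^\top\bigl[(\hat\mG+\lambda_m\mI)^{-1}-\mG^+\bigr]\hat\vg-\lambda_m\mC_{\mA}^\top(\hat\mG+\lambda_m\mI)^{-1}\vbeta^\star .
\]
All matrices can be restricted to $\operatorname{range}(\mG)$, since both $\hat\mG$ and $\hat\vg$ live there.

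\textbf{Good event.} Let $\mDelta:=\mG^{+1/2}(\hat\mG-\mG)\mG^{+1/2}$. Its summands are bounded in operator norm by $L$, with variance proxy at most $L$. Matrix Chernoff/Bernstein then gives two bounds:
\[
\PP\bigl(\|\mDelta\|>1/2\bigr)\le 2d_{\vz}e^{-C_2m/L},\qquad \E\|\mDelta\|^{2p}\lesssim\Bigl(\frac{L\log d_{\vz}}{m}\Bigr)^p\Bigl(1+\frac{L\log d_{\vz}}{m}\Bigr)^p,\ \ p=1,2.
\]
On $\mathcal E=\{\|\mDelta\|\le1/2\}$, the resolvent identity gives the bound
\[
\|\vr_m\|\lesssim R_{\mA}\Bigl(\|\mDelta\|+\tfrac{\lambda_m}{\sigma^+_{\min}}\Bigr)\|\mG^{+1/2}\hat\vg\|+R_{\mA}\tfrac{\lambda_m}{\sigma^+_{\min}}\|\mG^{1/2}\vbeta^\star\|.
\]
With $\lambda\asymp\sigma_{\min}^+$, we have $\lambda_m/\sigma^+_{\min}\asymp 1/m$, so the ridge bias is $O(R_{\mA}\|u\|_{2,\tilde w}/m)$; this is within the stated bound because $L\ge1$ whenever $\operatorname{rank}\mG\ge1$. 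For the main piece, I would bound $\|\mG^{+1/2}\hat\vg\|$ directly rather than via Cauchy--Schwarz. Its second moment is $\E\|\mG^{+1/2}\hat\vg\|^2\le \frac1m\E[\ell\,\tilde w e^2]\le L\|u\|^2_{2,\tilde w}/m$, using $\|e\|_{2,\tilde w}\le\|u\|_{2,\tilde w}$. Its fourth moment is controlled similarly by Rosenthal's inequality.

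\textbf{Bad event.} On $\mathcal E^c$, use $\|(\hat\mG+\lambda_m\mI)^{-1}\|\le m/\lambda\lesssim m/\sigma^+_{\min}$ and deterministic bounds $\|\hat\vb\|\le\sup\|\tilde w\vz u\|$, which in the $\mG$-geometry is bounded by $\sqrt{L\,\sigma_{\max}}\,\|u\|_{\infty,\tilde w}$ via the leverage definition. Together with $\|\mC_{\mA}^\top\mG^+\|$-type factors bounded by $R_{\mA}\sqrt{\kappa_{\mG}/\sigma^+_{\min}}$, this produces the crude factor $R_{\mA}^2L^2m^2\kappa_{\mG}^2\|u\|^2_{\infty,\tilde w}$. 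Multiplying by the probability $d_{\vz}e^{-C_2m/L}$ gives the second term of \eqref{eq:rr-delta-bound}.

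\textbf{Main obstacle.} The hard step is the expectation of the product $\|\mDelta\|^2\|\mG^{+1/2}\hat\vg\|^2$, because the two factors are dependent. I would apply Cauchy--Schwarz to separate them, using the fourth-moment bound for $\|\mDelta\|$ and the Rosenthal fourth-moment bound for $\hat\vg$. I expect this to produce exactly the
\[
\frac{L^2\log d_{\vz}}{m^2}\Bigl(1+\frac{L\log d_{\vz}}{m}\Bigr)\|u\|_{2,\tilde w}^2
\]
structure. If the fourth moment of $\hat\vg$ instead brings in $\|u\|_{\infty,\tilde w}$, I would fall back on a leave-one-out decoupling of $\mDelta$ from each summand of $\hat\vg$. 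The first step of that fallback is to write
\[
\vr_m\approx-\mC_{\mA}^\top\mG^{+1/2}\mDelta\mG^{+1/2}\hat\vg
\]
and expand it as a degenerate U-statistic plus diagonal terms, then bound the second moment of each piece directly using $\E[\tilde w\vz e]=0$.
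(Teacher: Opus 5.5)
Your decomposition of $\vr_m$, the identification $\gamma_{\va_j,q}=\tilde w\,\vz^\top\mG^+\vc_{\va_j}$, the whitening, the good/bad-event split, the ridge-bias term and the bad-event bound all match the paper's proof. The gap is the step you flag yourself: the same-sample product $\E[\|\bm{\Delta}\|_{\mathrm{op}}^2\|\vy\|_2^2]$. Here, in whitened coordinates on $\operatorname{range}(\mG)$, $\vx_t:=\tilde w(S_t)^{1/2}\mG^{+1/2}\vz(S_t)$, $\vb_t:=\tilde w(S_t)^{1/2}\vx_t e(S_t)$, $\vy:=\mG^{+1/2}\hat{\bm g}=\frac1m\sum_t\vb_t$, and $\bm{\Delta}=\frac1m\sum_t\mB_t$ with $\mB_t:=\vx_t\vx_t^\top-\mI$. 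Your primary route (Cauchy--Schwarz plus Rosenthal) cannot give \eqref{eq:rr-delta-bound}. The reason is that $\E\|\vy\|_2^4$ contains the unavoidable term $m^{-3}\E\|\vb\|_2^4=m^{-3}\E_q[\tilde w^2\ell^2e^4]$. You therefore pick up an extra contribution of order $L\log d_{\vz}\,m^{-5/2}\{\E\|\vb\|_2^4\}^{1/2}$, and this can only be controlled through $\sup_S\tilde w(S)e(S)^2$, i.e.\ through $\|u\|_{\infty,\tilde w}$. The stated bound has no such polynomial-in-$m$ term; its only $\|u\|_{\infty,\tilde w}$ is multiplied by $e^{-C_2m/L}$. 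The loss is not cosmetic: take $u$ concentrated on one coalition of probability $p\ll 1/m$ whose leverage is of order $L$. Then this term exceeds the first term of \eqref{eq:rr-delta-bound} by a factor of order $(mp)^{-1/2}$, while for $m$ large relative to $L$ the exponential term is negligible.

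Your fallback points the right way but is also incomplete as written. The linearization $\vr_m\approx-\mC_{\mA}^\top\mG^{+1/2}\bm{\Delta}\vy$ leaves a remainder, and if you bound it by $\|\bm{\Delta}\|^2\|\vy\|$ and Cauchy--Schwarz, the same fourth moment comes back. The missing idea is never to separate $\bm{\Delta}$ from $\vy$. Set $\bm{X}:=\bm{\Delta}+\lambda_m\mG^+$ on $\operatorname{range}(\mG)$. The exact identity $(\mI+\bm{X})^{-1}-\mI=-(\mI+\bm{X})^{-1}\bm{X}$, together with $\|(\mI+\bm{X})^{-1}\|_{\mathrm{op}}\le2$ on $\mathcal E$, gives $\|[(\mI+\bm{X})^{-1}-\mI]\vy\|_2\le2\|\bm{\Delta}\vy\|_2+2\lambda_m\|\mG^+\|_{\mathrm{op}}\|\vy\|_2$. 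The vector $\bm{\Delta}\vy=m^{-2}\sum_{i,j}\mB_i\vb_j$ then has an elementary second moment. In the off-diagonal part only the pairings $\{k,l\}=\{i,j\}$ survive, each bounded by $\E\|\mB_1\vb_2\|_2^2\le\|\E\mB^2\|_{\mathrm{op}}M_{\vb}\le LM_{\vb}$. The diagonal part contributes at most $m^{-2}\|\E\mB\vb\|_2^2+m^{-3}\E\|\mB\vb\|_2^2\le m^{-2}LM_{\vb}+m^{-3}L^2M_{\vb}$. Hence $\E\|\bm{\Delta}\vy\|_2^2\lesssim m^{-2}LM_{\vb}(1+L/m)$, where $M_{\vb}=\E\|\vb\|_2^2\le L\|u\|_{2,\tilde w}^2$. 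This fits inside the first term of \eqref{eq:rr-delta-bound}, needs matrix Bernstein only for $P(\mathcal E^c)$, and even avoids the $\log d_{\vz}$ factor.

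The paper takes a different route. It keeps the product of norms $\|\mH_m-\mI\|_{\mathrm{op}}^2\|\vy_m\|_2^2$ and proves Lemma~\ref{lem:same-sample-leverage-product}. That lemma splits $\bar\mB_m\otimes\bar\vb_m$ in the injective tensor norm into off-diagonal and diagonal parts. It decouples the off-diagonal part by random Bernoulli splits into independent sums over disjoint index sets, using Lemma~\ref{lem:isotropic-row-concentration}(b). It controls the diagonal part with the deterministic bound $\|\mB\|_F\le CL$. In both versions the essential point is that only the second moment $M_{\vb}$ may enter, which your Cauchy--Schwarz step violates.
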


The bound in Theorem~\ref{prop:rr-regular-exact} also has two terms. 
The first term is polynomial in \(m\) and is scaled by the maximum leverage \(L\) and by \(R_{\mA}^2\). 
The factor \(R_{\mA}^2\) 
captures how strongly the target readout \(\mC_{\mA}^\top\vbeta\) can amplify errors in weakly identified WLS directions. 
The second term is exponentially small in \(m/L\) and accounts for the event that the empirical weighted design is poorly conditioned.

We next specialize this bound to two representative WLS-based Shapley estimators: KernelSHAP~\citep{Lundberg2017-gl} and LeverageSHAP~\citep{Musco2024-cf}. We focus on the centered Shapley vector, \(\vtau_{\mA}(u)=\mA\vphi(u)\), where \(\mA=\mI-n^{-1}\vone\vone^\top\) and \(\vphi(u)\) is the Shapley vector. 
This centered target $\vtau_{\mA}(u)$ can be represented as a linear readout \(\mC_{\mA}^\top\vbeta\) with certain coefficients $\mC_{\mA}$.

\begin{corollary}[Remainder bound for KernelSHAP and LeverageSHAP]\label{corollary:ls-kernel-leverage-rates}
Consider estimating the centered Shapley vector \(\mA\vphi(u)\) through the WLS characterization. There exist universal constants \(C_1,C_2>0\) such that
\[
    \E_q(\|\vr_m\|_2^2)
    \le
    C_1\|u\|_\infty^2
    \begin{cases}
        \displaystyle
        \frac{n^2\log^4 n}{m^2}
        \left(1+\frac{n\log^2 n}{m}\right)
        +
        n^3 m^2\log^3 n\,
        e^{-\frac{C_2 m}{n\log n}},
        & \text{KernelSHAP},\\[2ex]
        \displaystyle
        \frac{n^2\log^2 n}{m^2}
        \left(1+\frac{n\log n}{m}\right)
        +
        n^4 m^2
        e^{-C_2 \frac{m}{n}},
        & \text{LeverageSHAP}.
    \end{cases}
\]
\end{corollary}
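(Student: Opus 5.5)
The plan is to specialize Theorem~\ref{prop:rr-regular-exact} to the two designs: compute the quantities \(L\), \(R_{\mA}^2\), \(\kappa_{\mG}\), \(d_{\vz}\), \(\|u\|_{2,\tilde w}^2\) and \(\|u\|_{\infty,\tilde w}^2\) explicitly, then substitute them into \eqref{eq:rr-delta-bound}.

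\textbf{The WLS tuple.} I will use the standard constrained KernelSHAP formulation.
\begin{itemize}
\item Remove the efficiency constraint by projecting onto \(\vone^\perp\), following \citet{Musco2024-cf}. The feature map is then \(\vz(S)=\mP\,\ind_S\) with \(\mP=\mI-n^{-1}\vone\vone^\top\), so \(d_{\vz}=n\) and \(\log d_{\vz}\asymp\log n\).
\item The response is \(u(S)-\{|S|/n\}\{u([n])-u(\emptyset)\}\), up to the centering that gives \(\mA\vphi\).
\item The weight is the Shapley kernel \(w(S)\propto (n-1)/\{\binom{n}{|S|}|S|(n-|S|)\}\), restricted to \(0<|S|<n\).
\item Take \(\mC_{\mA}=\mP\), so that \(\mC_{\mA}^\top\vbeta^\star=\mA\vphi(u)\).
\end{itemize}

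\textbf{The Gram matrix.} By symmetry, \(\mG=\E_q[\tilde w\,\vz\vz^\top]=\sum_S w(S)\,\mP\ind_S\ind_S^\top\mP\) is a multiple of \(\mP\). This holds for any exchangeable \(q\), since \(\tilde w q=w\). The coefficient is \(\sum_s w_s\binom{n}{s}\{s(n-s)/(n(n-1))\}\). With the \((n-1)\) factor this equals \(\sum_s 1/n\cdot\)const, which is of order one, and it can be normalized to order one. Consequently \(\kappa_{\mG}=1\), \(\sigma_{\min}^+\asymp 1\), and \(R_{\mA}^2=\sigma_{\max}(\mP\mG^+\mP)\asymp 1\), up to the chosen normalization of \(w\).

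\textbf{Leverage scores.} Since \(\mG\propto\mP\), we have
\[
\ell(S)=\tilde w(S)\,\|\mP\ind_S\|^2/c=\tilde w(S)\,\frac{|S|(n-|S|)}{n\,c}.
\]
\begin{itemize}
\item \emph{LeverageSHAP.} Here \(q(S)\propto w(S)\,|S|(n-|S|)/n\), the leverage-score sampling of \citet{Musco2024-cf}, so \(\ell(S)\) is constant. Since \(\sum_S q(S)\ell(S)=\operatorname{tr}(\mG^+\mG)=n-1\), this gives \(L=n-1\).
\item \emph{KernelSHAP.} Here \(q\propto w\) on sizes \(1,\dots,n-1\), so \(\tilde w\) equals the normalizer \(Z_w\asymp\sum_{s}1/(s(n-s))\cdot(n-1)\asymp\log n\). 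Then \(\ell(S)\asymp\log n\cdot s(n-s)/n\), whose supremum is of order \(n\log n\), attained at \(s\approx n/2\). So \(L\asymp n\log n\).
\end{itemize}

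\textbf{Norms of the utility.} Bound \(\|u\|_{2,\tilde w}^2\le\sum_S w(S)\,u(S)^2\lesssim\|u\|_\infty^2\), since \(\sum_S w(S)\) is normalized to be of order one, possibly times \(\log n\) depending on the normalization. For the sup-norm, \(\|u\|_{\infty,\tilde w}^2\le\|u\|_\infty^2\sup_S\tilde w(S)\).
\begin{itemize}
\item \emph{KernelSHAP:} \(\sup_S\tilde w(S)\asymp\log n\).
\item \emph{LeverageSHAP:} \(\tilde w(S)\propto n/(s(n-s))\), which is at most of order \(n\) at \(s=1\).
\end{itemize}

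\textbf{Substitution.} Plugging \(R_{\mA}^2\asymp1\), the values of \(L\), \(d_{\vz}=n\) and \(\kappa_{\mG}\asymp1\) into \eqref{eq:rr-delta-bound}:
\begin{itemize}
\item \emph{KernelSHAP.} The polynomial term is \(L^2\log n\,(1+L\log n/m)/m^2\) times \(\|u\|_{2,\tilde w}^2\). With \(L^2=n^2\log^2 n\) and \(\|u\|_{2,\tilde w}^2\lesssim\log n\cdot\|u\|_\infty^2\), this yields \(n^2\log^4 n/m^2\,(1+n\log^2 n/m)\). The exponential term is \(L^2\,n\,m^2\log n\,e^{-C_2m/(n\log n)}=n^3m^2\log^3n\,e^{-C_2 m/(n\log n)}\).
\item \emph{LeverageSHAP.} The polynomial term is \(n^2\log n\,(1+n\log n/m)/m^2\) times \(\|u\|_{2,\tilde w}^2\). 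The exponential term is \(n^2\cdot n\cdot n\,m^2 e^{-C_2m/n}=n^4m^2e^{-C_2m/n}\).
\end{itemize}

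\textbf{Main obstacle.} The main risk is getting the normalizations consistent. The constant in \(w\), the definition of \(\mC_{\mA}\) under the projection, and where the \(\log n\) factors land (in \(\tilde w\) versus in \(\sigma_{\min}^+\)) all affect the final powers of \(\log n\). I will fix \(w\) so that \(\mG=\mP\), recompute \(R_{\mA}^2\), \(\|u\|_{2,\tilde w}\) and \(L\) under that normalization, and check that the extra logarithmic factors match the stated bounds: \(\log^4 n\) for KernelSHAP and \(\log^2 n\) for LeverageSHAP. I also need to check that the choice \(\lambda\asymp\sigma_{\min}^+(\mG)\) is admissible in both designs.
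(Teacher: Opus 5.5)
Your proposal is correct and is essentially the paper's proof. It uses the same ingredients: centered inclusion features with the Shapley-kernel weight on interior coalitions, \(\mC_{\mA}=\mA\), and the Gram identity \(\mG=\frac{n-1}{n}\mA\), giving \(\kappa_{\mG}=1\), \(R_{\mA}^2=O(1)\) and \(d_{\vz}=n\). It then takes the leverage values \(L\asymp n\log n\) and \(L=n-1\), together with the weighted-norm bounds \(\|u\|_{2,\tilde w}^2\lesssim\log n\,\|u\|_\infty^2\) and \(\|u\|_{\infty,\tilde w}^2\lesssim\log n\,\|u\|_\infty^2\) (KernelSHAP) or \(\lesssim n\,\|u\|_\infty^2\) (LeverageSHAP), and substitutes everything into Theorem~\ref{prop:rr-regular-exact}.

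Your normalization worry is moot: rescaling \(w\) rescales \(R_{\mA}^2\) and the weighted norms inversely, with \(\lambda\asymp\sigma_{\min}^+(\mG)\) tracking it. The paper also simply uses \(u\) itself as the response, which is valid because size-only functions are weighted-orthogonal to the centered feature span.
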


The polynomial terms in Corollary~\ref{corollary:ls-kernel-leverage-rates} yield sufficient sample sizes for making the remainder negligible relative to the first-order MSE. In particular, if \(\|u\|_\infty\) is bounded and the first-order MSE has the natural full-vector scale \(V(\mA;q,\vh)=\sum_{j=1}^n V(\va_j;q,h_w)=O(n)\), then it is sufficient to take 
\(m=\Omega(n\log^4 n)\) for KernelSHAP and \(m=\Omega(n\log^2 n)\) for LeverageSHAP. 
For LeverageSHAP, this scale implies the non-asymptotic \(n\log n\) query guarantee up to one additional logarithmic factor (see Appendix~\ref{app:non-asymptotic-rr} for more details). 
\section{Efficiency-Aware Surrogate-Adjusted Estimation}
\label{sec:ease}

Existing estimators induce different combinations of sampling laws and working surrogates, typically use a prespecified sampling law and do not choose these components jointly by directly optimizing the first-order MSE. This leaves room for improving statistical efficiency. Motivated by this gap, we take
\(
V(\mA;q,\vh)
\)
itself as the design criterion and jointly optimize the sampling law \(q\) and surrogate \(\vh\). 

For a given pair of a sampling law \(q\) and a surrogate vector \(\vh=(h_1,\dots,h_d)\), consider the AIPW estimator based on
\(S_1,\dots,S_m\stackrel{\mathrm{iid}}{\sim}q\) and \(Y_t=u(S_t)\):
\begin{equation}
    \hat\tau_{\va_j}^{\mathrm{AIPW}}(u)
    =
    \tau_{\va_j}(h_j)
    +
    \frac{1}{m}
    \sum_{t=1}^m
    \frac{\rho_{\va_j}(S_t)}{q(S_t)}
    \{Y_t-h_j(S_t)\},
    \qquad j=1,\dots,d.
    \label{eq:aipw-vector}
\end{equation}
This estimator is unbiased, and its Euclidean MSE is exactly
$
    \E_q\!\left[
        \left\|
            \hat\vtau_{\mA}^{\mathrm{AIPW}}(u)-\vtau_{\mA}(u)
        \right\|_2^2
    \right]
    =
    V(\mA;q,\vh) / {m}.
$
Thus, for a given pair \((q,\vh)\), the AIPW estimator attains the same first-order MSE associated with a regular estimator indexed by that pair. 
For purposes of first-order efficiency, it suffices to restrict attention to the AIPW form and focus on the choice of~\((q,\vh)\).

Since both $q$ and $\vh$ affect the first-order MSE, we propose \emph{Efficiency-Aware Surrogate-Adjusted Estimation (EASE)}, which jointly chooses \((q,\vh)\) to target $V(\mA;q,\vh)$. 
To make this criterion amenable to data-driven optimization, we first rewrite it as an equivalent profiled squared-loss.
Introduce an auxiliary centering vector \(\vmu=(\evmu_1,\dots,\evmu_d)\) and define
\begin{equation}
    \mathscr V(\mA;q,\vh,\vmu)
    :=
    \sum_{j=1}^d
    \E_q\!\left[
        \left\{
            \frac{\rho_{\va_j}(S)}{q(S)}
            \{u(S)-h_j(S)\}
            -
            \evmu_j
        \right\}^2
    \right].
    \label{eq:profiled-vector-risk}
\end{equation}
Centering each coordinate at its mean gives \(V(\mA;q,\vh)=\min_{\vmu\in\R^d}\mathscr V(\mA;q,\vh,\vmu)\). 
Thus, the variance criterion can be viewed as a profiled squared-loss expectation, which suggests empirical risk minimization over both \(q\) and \(\vh\). 
Directly implementing this idea raises two challenges. First, learning \(q\) creates a circularity. 
Evaluating a candidate sampling law requires utility observations, while the sampling law determines which utilities are observed.  
Second, an unrestricted sampling law is a distribution over \(2^n\) coalitions, making direct optimization~infeasible.

We address these challenges in two ways. To resolve the circularity, we use a two-stage design: pilot samples drawn from a reference law are first used to learn the sampling law, which is then used to collect an independent estimation sample. To make sampling-law optimization tractable, we restrict \(q\) to a structured class \(\mathcal Q\). We use a partition-based class, under which coalitions of the same size share a sampling probability; see Appendix~\ref{app:candidate-class} for details. 
We next describe the two-stage procedure.

\textbf{Stage 1: Sampling-law selection based on pilot samples.}
Let \(q_0\) denote a prespecified reference law whose support contains that of every \(q\in\mathcal Q\). 
We first draw pilot coalitions \(S_1^{(0)},\dots,S_{m_0}^{(0)}\stackrel{\mathrm{iid}}{\sim}q_0\) and observe \(Y_t^{(0)}=u(S_t^{(0)})\). 
By importance weighting, we obtain an unbiased estimator of $ \mathscr V(\mA;q,\vh,\vmu)$ in~\eqref{eq:profiled-vector-risk} for every candidate \((q,\vh,\vmu)\):
\begin{equation}
    \widehat{\mathscr V}_{m_0}(\mA;q,\vh,\vmu)
    :=
    \frac{1}{m_0}
    \sum_{t=1}^{m_0}
    \sum_{j=1}^d
    \frac{
        \left[
            \rho_{\va_j}(S_t^{(0)})
            \{Y_t^{(0)}-h_j(S_t^{(0)})\}
            -
            q(S_t^{(0)})\evmu_j
        \right]^2
    }{
        q_0(S_t^{(0)})q(S_t^{(0)})
    }.
   \label{eq:empirical-profiled-training-pilot}
\end{equation}
We select the sampling law by minimizing the profiled empirical risk:
\begin{equation}
\hat q
\in
\argmin_{q\in\mathcal Q}
\min_{\vh\in\mathcal H,,\vmu\in\R^d}
\widehat{\mathscr V}_{m_0}(\mA;q,\vh,\vmu).
\label{eq:pilot-design-optimization}
\end{equation}

\begin{remarknum}[Pilot Optimization]
	Although the optimization problem in \eqref{eq:pilot-design-optimization} is generally nonconvex, it is blockwise convex when \(\mathcal Q\) and \(\mathcal H\) are convex. For fixed \(q\), the objective is convex in \((\vh,\vmu)\), whereas for fixed \((\vh,\vmu)\), it is convex in \(q\). 
	This structure suggests an alternating procedure. Starting from an initial sampling law, we update \((\vh,\vmu)\) while holding \(q\) fixed, then update \(q\) while holding \((\vh,\vmu)\) fixed, and repeat these two steps until the empirical criterion stabilizes. 
	The alternating procedure guarantees monotonic descent of the empirical objective.
\end{remarknum}

\textbf{Stage 2: Estimation with cross-fitting.}
After selecting the sampling law $\hat q$, we draw an independent estimation sample
\(\{(S_t,Y_t)\}_{t=1}^m\), where \(S_t\stackrel{\mathrm{iid}}{\sim}\hat q\) and $Y_t=u(S_t)$.
We learn the surrogate using this sample with cross-fitting, so that each held-out observation used to construct the estimator is independent of the surrogate fitted on the training fold.
Let \(\{I_b\}_{b=1}^B\) partition \(\{1,\dots,m\}\), and define \(\mathcal T_b=\{1,\dots,m\}\setminus I_b\). On each fold, EASE~solves
\begin{equation}
    (\hat\vh^{(-b)},\hat\vmu^{(-b)})
    \in
    \argmin_{\vh\in\mathcal H,\,\vmu\in\R^d}
    \sum_{t\in\mathcal T_b}
    \sum_{j=1}^d
    \left[
        \frac{\rho_{\va_j}(S_t)}{\hat q(S_t)}
        \{Y_t-h_j(S_t)\}
        -
        \evmu_j
    \right]^2.
    \label{eq:empirical-profiled-training}
\end{equation}
The fold-specific surrogate \(\hat\vh^{(-b)}\) is then plugged into the AIPW estimator on the corresponding held-out fold.
The complete procedure is summarized in Algorithm~\ref{alg:vector-aipw-profiled}, with further implementation details provided in Appendix~\ref{app:algorithm}.

\begin{algorithm}[!htbp]
\caption{Efficiency-Aware Surrogate-Adjusted Estimation (EASE)}
\label{alg:vector-aipw-profiled}
\KwIn{Target matrix \(\mA\); candidate classes \(\mathcal Q,\mathcal H\); reference law \(q_0\); budgets \(m_0,m\); folds \(\{I_b\}_{b=1}^B\); }
\KwOut{Estimator \(\hat\vtau_{\mA}\)}
\Comment{Stage 1: Sampling-law selection}
Draw \(m_0\) pilot observations from \(q_0\) and obtain $\hat q$ by solving \eqref{eq:pilot-design-optimization};\;
\Comment{Stage 2: Cross-fitting estimation}
Draw \(m\) independent estimation observations from \(\hat q\)\;
\For{\(b=1,\dots,B\)}{
Fit \((\hat\vh^{(-b)},\hat\vmu^{(-b)})\) on \(\mathcal T_b\) using \eqref{eq:empirical-profiled-training}\;
\For{\(j=1,\dots,d\)}{
Set
\(
\hat\tau_{\va_j}^{(b)}
=
\tau_{\va_j}(\hat h_j^{(-b)})
+
\frac{1}{|I_b|}
\sum_{t\in I_b}
\frac{\rho_{\va_j}(S_t)}{\hat q(S_t)}
\{Y_t-\hat h_j^{(-b)}(S_t)\}
\)\;
}
}
\Return
\(
\hat\vtau_{\mA}
=
\sum_{b=1}^B
\frac{|I_b|}{m}
\bigl(
\hat\tau_{\va_1}^{(b)},\dots,\hat\tau_{\va_d}^{(b)}
\bigr)^\top
\)\;
\end{algorithm}

\begin{remarknum}[Comparison with Existing Surrogate-adjusted Estimators]
	Cross-fitted AIPW estimation has also been used by existing methods, including RegressionMSR~\citep{Witter2025-gz}.
	The key distinction of EASE is not the AIPW correction itself, but how the sampling law \(q\) and working surrogate \(\vh\) are chosen.
	RegressionMSR uses a prespecified sampling law that is independent of the utility function and surrogate fit, and learns its surrogate by minimizing an unweighted approximation loss. 
	In contrast, our first-order MSE analysis suggests that both choices should be efficiency-aware.
	For a fixed sampling law, EASE fits the surrogate by minimizing the profiled weighted least-squares criterion in \eqref{eq:empirical-profiled-training}, rather than to approximate (u) uniformly over the sampled coalitions. For a fixed surrogate \(\vh\), the oracle best-in-class sampling law satisfies
\begin{equation}
    q_{\mathcal Q}^\star(\vh)
    \in
    \argmin_{q\in\mathcal Q}
    \sum_{S\subseteq[n]}
    \frac{
        \sum_{j=1}^d
        \rho_{\va_j}(S)^2
        \{u(S)-h_j(S)\}^2
    }{
        q(S)
    }.
    \label{eq:ease-oracle-q-general}
\end{equation}
Thus, subject to the restrictions defining \(\mathcal Q\), EASE allocates more probability to coalitions with larger target-weighted residuals; see details in Appendix~\ref{app:sampling-details}. EASE therefore uses the same first-order efficiency criterion to guide surrogate fitting and residual-aware sampling.
\end{remarknum}

\begin{remarknum}
	This residual-aware sampling law in~\eqref{eq:ease-oracle-q-general} also connects to classical optimal sampling designs. 
If \(\mathcal Q\) were unrestricted, the minimizer would satisfy
\(q^\star(S)\propto\bigl[\sum_{j=1}^d\rho_{\va_j}(S)^2\{u(S)-h_j(S)\}^2\bigr]^{1/2}\),
which has the familiar form of a variance-minimizing importance-sampling law~\citep{Kahn1953-mc}. 
More directly relevant to EASE, within the partition-based class $\mathcal Q$ considered here, the oracle becomes a Neyman-type allocation~\citep{Neyman1934-wp}. Classical Neyman allocation assigns more samples to strata with larger variability, whereas here the relevant variability is the residual variation left after surrogate approximation. Because the working surrogate class need not contain the true utility function, these residuals are generally non-negligible and can play an important role in sampling design. 
Therefore, EASE brings together surrogate adjustment and the classical principle of variance-reducing sampling.
\end{remarknum}

\providecommand{\revised}[1]{{\color{darkpurple}#1}} % marks text revised in this round

\section{Numerical Experiments}
\label{sec:numerical}

We evaluate EASE against existing  estimators for probabilistic values in two settings.
The first is a controlled benchmark based on sum-of-unanimity (SOU) games, where the target values are available in closed form and how well low-order surrogates approximate the utility can be varied directly. This setting allows us to examine whether EASE benefits from efficiency-aware design under varying surrogate quality. 
The second setting consists of real-world data- and feature-attribution benchmarks, where the utilities arise from model fitting and prediction on real-world datasets. 

In the experiments, all methods are evaluated under the same total utility-evaluation budgets for fair comparison. 
EASE allocates 20\% of the total utility-evaluation budget to the pilot stage, and the remaining budget is used for final estimation. 
We observe similar performance when varying the pilot fraction from 10\% to 40\%; see the sensitivity analyses in Appendix~\ref{app:pilot-fraction-sensitivity}.
To estimate the sampling law \(\hat q\) in the pilot stage, we perform three iterations, each consisting of  surrogate fitting followed by an update of the sampling law according to criterion~\eqref{eq:empirical-profiled-training-pilot}. The final AIPW estimator uses two-fold cross-fitting. Our implementation also reuses  pilot observations in the final estimator; further details are provided in Appendix~\ref{app:algorithm}. 
Code for reproducing the experiments is available in the \href{https://github.com/zqliu01/ease_probabilistic_values}{EASE GitHub repository}.

\subsection{Controlled SOU Benchmarks}
\label{sec:sou-benchmarks}

SOU games are commonly used as controlled benchmarks for probabilistic-value estimation~\citep{Li2024-gp,Lee2025-ht}.
We consider SOU utilities of the form
\(
u(S)
=
\sum_{T\in\mathscr S_{\mathrm{low}}}
\theta_T\mathbb{I}(T\subseteq S)
+
\sum_{T\in\mathscr S_{\mathrm{high}}}
\xi_T\mathbb{I}(T\subseteq S),
\)
where 
\(
\mathscr S_{\mathrm{low}}
=
\{T\subseteq[n]:1\leq |T|\leq2\}
\)
contains all singleton and pairwise interactions among players, and \(\mathscr S_{\mathrm{high}}\) is a multiset of \(n^2\) randomly generated coalitions with cardinalities between 3 and 40. Probabilistic values for such SOU utilities are available in closed form, which we use as ground truth; the formulas and details on generating \(\mathscr S_{\mathrm{high}}\) are provided in Appendix~\ref{app:additional-experiments}.
To vary how well low-order surrogates approximate the utility, 
we draw the coefficients independently as \(\theta_T\sim N(0,\eta\sigma^2/|\mathscr S_{\mathrm{low}}|)\) and \(\xi_T\sim N(0,(1-\eta)\sigma^2/|\mathscr S_{\mathrm{high}}|)\). Thus, \(\eta\) is the fraction  of the total coefficient variance assigned to singleton and pairwise interactions. 
A larger \(\eta\) makes the utility more amenable to approximation by low-order surrogates, whereas a smaller \(\eta\) leaves more variation in higher-order interactions.

The main experiments use \(n=40\) and \(\eta\in\{0.25,0.5,0.75\}\). Results for \(n\in\{80,160\}\), which lead to the same qualitative conclusions, are reported in Appendix~\ref{app:additional-aucc-sou}. We first compare EASE with representative baselines under matched working classes and then benchmark it against a broader collection of estimators across several probabilistic values.

\textbf{Matched comparisons.}
The matched comparisons are designed to isolate the effect of EASE's efficiency-aware design. We focus on Shapley-value estimation and 
compare EASE with one representative from each of three estimator families: the surrogate-adjusted RegressionMSR estimator~\citep{Witter2025-gz}, the self-normalized weighted-average OFA estimator~\citep{Li2024-gp}, and the WLS-based PolySHAP estimator~\citep{Fumagalli2026-hu}. 
In each comparison, EASE uses the same working surrogate class as the corresponding baseline. It uses the first-order class spanned by  the player-inclusion indicators \(\{\mathbb{I}(i\in S): i\in[n]\}\) for RegressionMSR, the size-player class spanned by the indicators \(\{\mathbb{I}(|S|=s,i\in S): i\in[n],\, s=0,\ldots,n\}\) for OFA, and the second-order class spanned by \(\{\mathbb{I}(T\subseteq S):|T|\leq2\}\) for the second-order PolySHAP.
Thus, differences in their performance reflect the efficiency-aware sampling and adjustment strategy rather than a richer working class.

\begin{figure}[!hbt]
    \centering
    \includegraphics[width=0.95\linewidth]{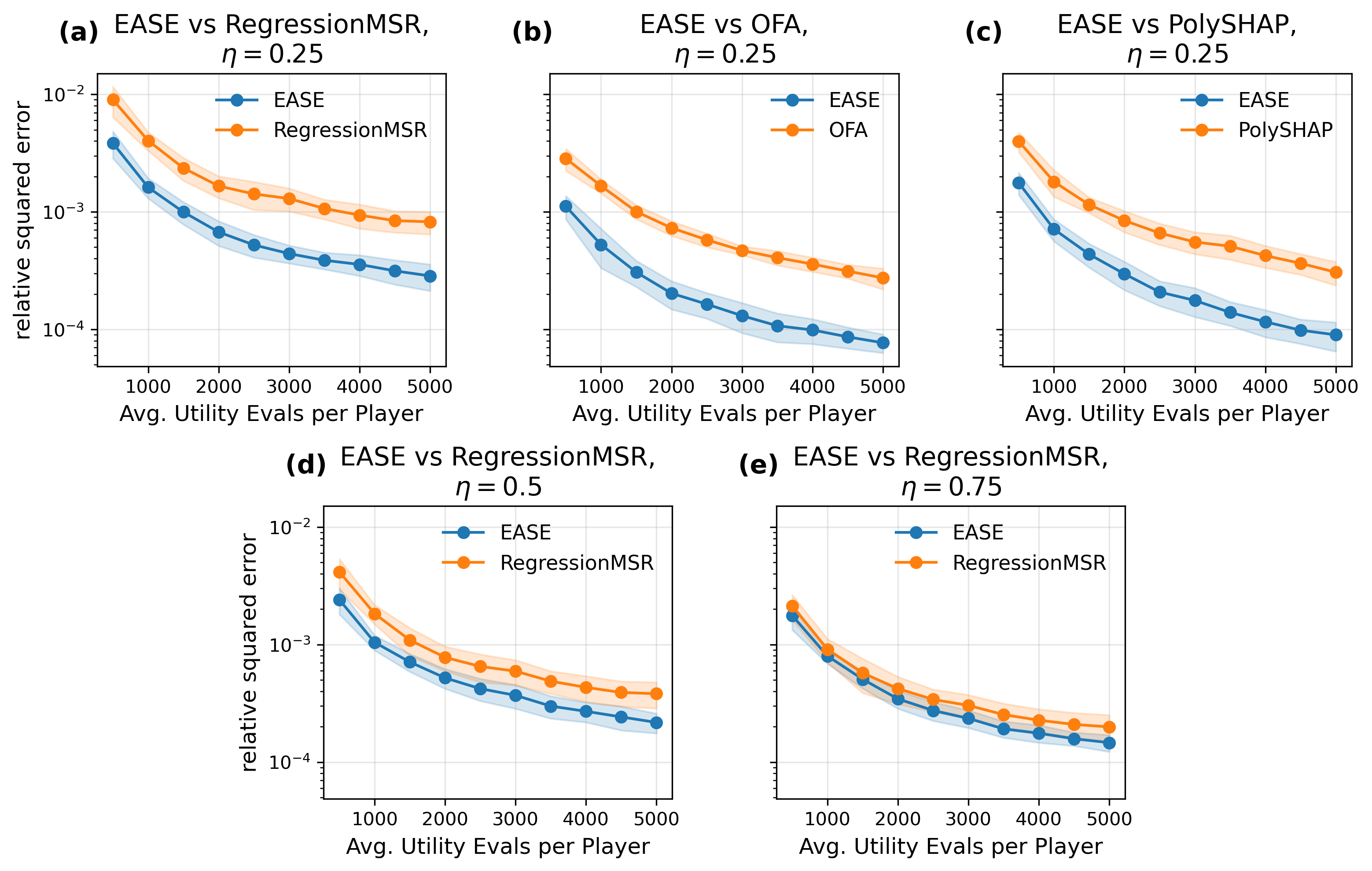}
    \caption{Matched comparisons for Shapley-value estimation on SOU games with \(n=40\). Panels (a)--(c) compare EASE with RegressionMSR, OFA, and PolySHAP at \(\eta=0.25\); panels (d)--(e) compare EASE with RegressionMSR at \(\eta=0.5\) and \(0.75\). Within each panel, EASE uses the same working class and matched sampling conventions as the baseline. The horizontal axis is the average number of utility evaluations per player, and the vertical axis is \(\|\hat{\vphi}-\vphi\|_2^2/\|\vphi\|_2^2\) on a logarithmic scale. Curves are pointwise arithmetic means over 10 estimator runs on one fixed game for each \(\eta\). }
    \label{fig:sou-matched-comparisons}
\end{figure}

Figure~\ref{fig:sou-matched-comparisons} reports the matched comparisons. Panels (a)--(c) compare EASE with RegressionMSR, OFA, and PolySHAP at \(\eta=0.25\). EASE achieves lower relative squared error throughout the  budget range in all three comparisons, supporting that its gain arises from the efficiency-aware design. 
Panels (a), (d), and (e) further compare EASE with RegressionMSR, the closest surrogate-adjusted comparator, as the low-order signal strength varies. 
The gap between EASE and RegressionMSR narrows as \(\eta\) increases. This pattern is consistent with residual-aware sampling being more beneficial when the shared working class leaves substantial utility structure unexplained. 

\textbf{Performance across probabilistic values.}
We next evaluate EASE across multiple probabilistic values and a broader set of competing estimators. We consider six targets: the Shapley value, Beta Shapley~\citep{Kwon2021-bw} with parameters \((4,1)\) and \((1,4)\), and weighted Banzhaf values~\citep{Wang2022-iz} with \(p\in\{0.25,0.5,0.75\}\).

We include two EASE variants that differ in their working surrogate classes. 
EASE-FO uses a compact first-order class spanned by an intercept, the player-inclusion indicators \(\{\mathbb{I}(i\in S):i\in[n]\}\), and two coalition-size terms \(\log(1+|S|)\) and \((|S|/n)^2\). 
EASE-SP uses a the richer size-player class spanned by \(\{\mathbb{I}(|S|=s,i\in S):i\in[n],\,s=1,\ldots,n\}\), which allows player effects to vary with coalition size. 
The working class used in EASE-SP is more flexible but also more computationally expensive. We compare these variants with OFA~\citep{Li2024-gp}, sampling lift and weighted sampling lift~\citep{Moehle2021-cm,Kwon2021-bw}, SHAP-IQ~\citep{Fumagalli2023-lo}, GELS and WGELS~\citep{Li2024-yr}, KernelSHAP~\citep{Lundberg2017-gl,Covert2021-di}, permutation and weighted-permutation estimators~\citep{Castro2009-yu}, complement sampling~\citep{Zhang2023-ne}, group testing~\citep{Jia2019-wa}, AME~\citep{Lin2022-ul}, RegressionMSR~\citep{Witter2025-gz}, and PolySHAP~\citep{Fumagalli2026-hu}. Estimators defined only for particular probabilistic values are included where applicable.

Following existing probabilistic-value benchmarks~\citep{Li2024-gp,Lee2025-ht}, we summarize each convergence trajectory by the area under the convergence curve (AUCC), defined as \(\operatorname{AUCC}=|\mathcal B|^{-1}\sum_{m\in\mathcal B}\|\widehat{\vphi}_m-\vphi\|_2/\|\vphi\|_2\), where \(\mathcal B=\{80,160,\ldots,4000\}\) and \(m\) is the average number of utility evaluations per player. Lower AUCC indicates greater sample efficiency because it corresponds to lower estimation error across the tracked evaluation budgets.

\begin{figure}[!hbt]
    \centering
    \includegraphics[width=0.9\linewidth]{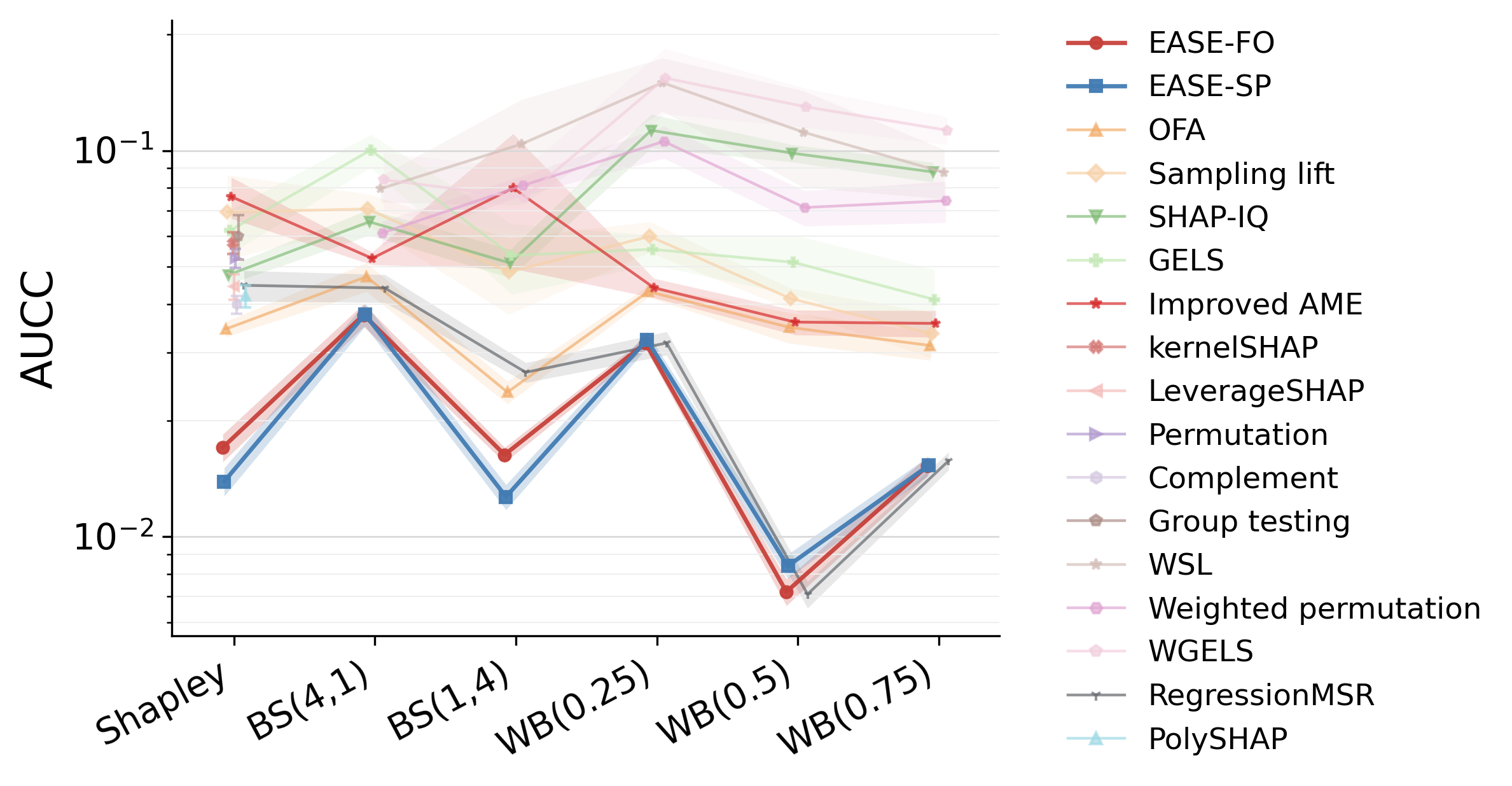}
    \caption{AUCC on the SOU game with \(\eta=0.25\) for six probabilistic-value
    targets: Shapley, Beta Shapley (BS), and weighted Banzhaf (WB). Markers
    report the arithmetic mean of the run-level AUCC over 10 estimator runs;
    lower values are better.}
    \label{fig:sou-aucc-benchmark}
\end{figure}

Figure~\ref{fig:sou-aucc-benchmark} summarizes the main SOU benchmark at \(\eta=0.25\) and \(n=40\). EASE-FO and EASE-SP attain the two lowest mean AUCC values for the Shapley and both Beta-Shapley targets, with EASE-SP generally benefiting from its richer working class. For the weighted Banzhaf targets, the EASE variants remain competitive with the strongest target-specific estimators, although they do not uniformly attain the lowest AUCC. Appendix~\ref{app:additional-aucc-sou} reports additional results for \(\eta\in\{0.5,0.75\}\) and for larger games with \(n\in\{80,160\}\). The rankings are stable across these settings, except that EASE-FO outperforms EASE-SP as \(n\) grows and the per-player budget shrinks, making the richer surrogate harder to fit.

\subsection{Real-World Benchmarks}
\label{sec:real-world}

We next evaluate EASE on six real-world Shapley-value estimation benchmarks, summarized in Table~\ref{tab:real-world-tasks}. 
The benchmarks cover two common attribution problems in machine learning: data attribution, where players are training-data sources, and local feature attribution, where players are input features or image regions. 
In both cases, Shapley values provide principled measures of contribution, but exact computation is expensive or infeasible. 
We therefore compare EASE with existing estimators in terms of both estimation accuracy and computational cost.

\begin{table}[!tbp]
    \centering
\small
\renewcommand{\arraystretch}{1.12}
\begin{tabularx}{\linewidth}{@{}l c Y Y@{}}
\toprule
Benchmark & \(n\) & Application & Model \\
\midrule
ACSIncome--logistic
& 50 & State-source data valuation & Logistic regression \\
ACSIncome--XGBoost
& 50 & State-source data valuation & XGBoost \\
CIFAR-10
& 16 & Image-patch attribution & Vision Transformer (ViT) \\
Breast Cancer
& 30 & Feature attribution & Random forest \\
Communities and Crime
& 101 & Feature attribution & Random forest \\
NHANES I
& 79 & Feature attribution & Random forest \\
        \bottomrule
    \end{tabularx}
    \caption{Real-world Shapley-value benchmarks. The two ACSIncome benchmarks use
    the same state-source valuation game with different downstream learners. The four
    feature-attribution tasks each comprise $30$ local explanation games, one
    for each explained instance.}
    \label{tab:real-world-tasks}
\end{table}

\textbf{Data attribution.} 
We use the Folktables ACSIncome task~\citep{Ding2021-gl}, based on the 2018 American Community Survey. The task is to predict whether an individual's personal income exceeds \$50,000. Taking Pennsylvania residents as the target population, we treat the $50$ U.S. states as players and measure how much each state's training data contributes to predictive performance on Pennsylvania. 
For each coalition \(S\), we pool $500$ training observations from each state in \(S\), fit either ridge logistic regression or XGBoost, and define \(u(S)\) as the negative logarithmic-transformed loss on a fixed held-out set of $1,000$ Pennsylvania observations. 
Because exact Shapley value computation is infeasible for \(n=50\), we use an independent high-budget OFA estimate, computed using \(4000n\) utility evaluations, as the reference value.

\textbf{Feature attribution.}
For local feature attribution, we follow common benchmark settings used in~\citep{Muschalik2024-shapiq,Musco2024-cf,Witter2025-gz,Fumagalli2026-hu,Fumagalli2026-rf}. We consider four datasets: CIFAR-10~\citep{Krizhevsky2009-cifar}, Breast Cancer~\citep{Street1993-breast}, Communities and Crime~\citep{Redmond2009-communities}, and NHANES~I~\citep{Dinh2019-nhanes}. For a given test instance, the utility \(u(S)\) is the model output when the components in \(S\) are retained and all remaining components are replaced by task-specific reference values. For CIFAR-10, each image is partitioned into a \(4\times4\) grid of 16 patch players; excluded patches are replaced by gray pixels, and the utility is the vision transformer's logit for the class predicted from the original image. For the three tabular tasks, the players are input variables; excluded variables are replaced by their featurewise means, and the utility is the prediction of a random forest. Reference Shapley values are computed by exhaustive enumeration for CIFAR-10 and by the TreeProb algorithm~\citep{Witter2025-gz} for the tabular random-forest models.

\begin{figure}[!htb]
    \centering
    \includegraphics[width=\linewidth]{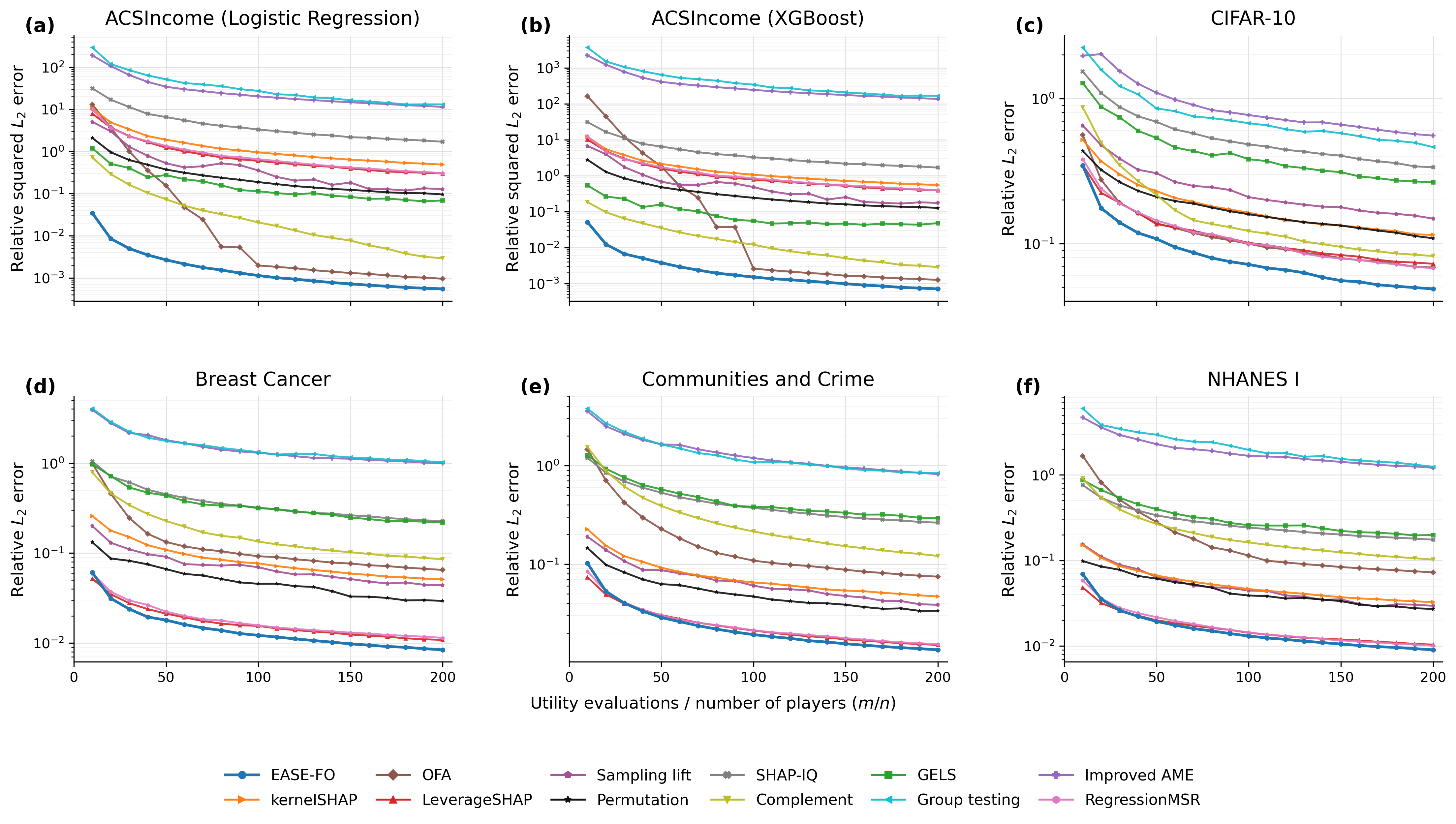}
    \caption{Shapley-value estimation on six real-world games. Panels (a)--(b)
    are data-attribution games and panels (c)--(f) are feature-attribution
    games. The x-axis reports utility evaluations per player,
    \(m/n\). The y-axis reports relative squared estimation error on
    logarithmic scales. Curves are pointwise averages over 20
    estimator seeds in panels (a)--(b) and 30 explained test instances in panels
    (c)--(f); lower values are better.}
    \label{fig:real-world-shapley}
\end{figure}

\textbf{Estimation accuracy.}
We evaluate each method up to a total budget of \(200n\) utility evaluations. Figure~\ref{fig:real-world-shapley} reports the estimation error at $20$ equally spaced checkpoints. 
Across six benchmarks, EASE-FO attains the lowest error throughout most of the budget range. 
The gains are especially clear on the two ACSIncome benchmarks and CIFAR-10.
On the other three tabular feature-attribution benchmarks, EASE-FO remains the best method at the final budget, althrough the gains are smaller, with RegressionMSR as the closest competitor.
The smaller gaps between EASE-FO and RegressionMSR are consistent with the controlled SOU results at larger \(\eta\). In these settings, a first-order surrogate appears to capture most of the utility variation, leaving less residual structure for EASE to exploit through its residual-aware sampling design. The sampling-allocation diagnostics in Appendix~\ref{app:allocation-diagnostics} provide further support for this~interpretation.

\begin{figure}[!htb]
    \centering
    \includegraphics[width=\linewidth]{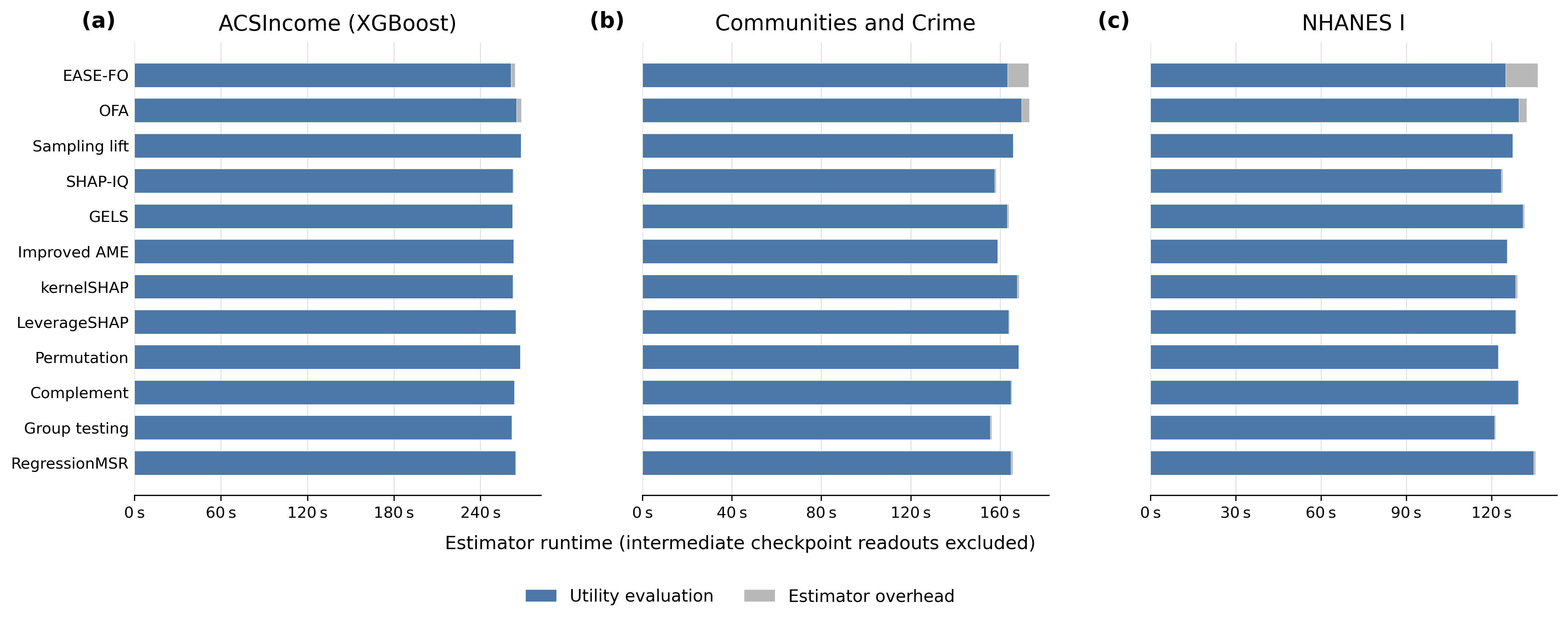}
    \caption{Runtime decomposition for ACSIncome with XGBoost, Communities and
    Crime, and NHANES~I. Stacked bars report componentwise averages 
    over estimator seeds for ACSIncome and explained instances for the
    feature-attribution tasks. Intermediate checkpoint readouts are excluded.}
    \label{fig:real-world-runtime}
\end{figure}

\textbf{Computational cost.}
Figure~\ref{fig:real-world-runtime} shows the runtime decomposition for the ACSIncome benchmark with XGBoost and the two large-\(n\) feature-attribution benchmarks, Communities and Crime and NHANES~I. We separate the time spent on utility evaluation from estimator overhead, which includes coalition sampling, surrogate fitting, sampling-law updates, and final estimation. Across all three benchmarks, utility evaluation dominates the total runtime. EASE-FO incurs modest additional overhead, particularly on the two large-\(n\) tasks, but this overhead remains small relative to the cost of utility evaluation, and its total runtime is comparable to those of the competing methods. Thus, on these benchmarks, the accuracy gains of EASE-FO do not come at a substantial additional computational cost.

\section{Discussion: Beyond Single-Coalition Sampling} \label{sec:discussion}

\textbf{Bundled sampling.} 
The target $\tau_{\va}(u)$ can also be estimated under \emph{bundled sampling}, where each Monte Carlo draw returns multiple coalitions \(\vT=(S_1,\ldots,S_r)\sim Q\).
Examples include edge pairs \((S,S\cup\{i\})\)~\citep{Moehle2021-cm,Kwon2021-bw}, complement pairs \((S, S^c)\)~\citep{Zhang2023-ne,Musco2024-cf,Fumagalli2026-hu}, and entire permutation paths~\citep{Castro2009-yu,Castro2017-pf,Maleki2013-zq,Wu2023-hb}. 
In Appendix~\ref{app:bundled}, we extend the regular-estimator framework and first-order efficiency analysis to the bundled sampling.
Unlike single-coalition sampling, where the weighting under a single-coalition law \(q\) is uniquely determined as  \( \gamma_{\va,q}(S)=\rho_{\va}(S)/q(S)\), bundled sampling introduces another design choice. 
A feasible weighting vector \(\vgamma(\vT)=(\gamma_1(\vT),\ldots,\gamma_r(\vT))\) only needs to
satisfy \(
    \rho_{\va}(S)
    =
    \E_Q\!\left[
   \sum_{\ell=1}^r \gamma_\ell(\vT)\1\{S_\ell=S\}
    \right],
    \) for any \(S\subseteq[n]\). 
Thus, this system is generally underdetermined, and the first-order MSE depends jointly on the surrogate \(h\), the bundle law \(Q\), and the feasible $\vgamma$.
In Appendix~\ref{app:bundled}, we study complement-pair sampling \(\vT=(S,S^c)\) with \(S\sim q\) as a tractable example. 
For semivalues whose weights are symmetric with respect to coalition size, 
we show that the weighting \(\vgamma(\vT) = \left(\frac{\rho_{\va}(S)}{q(S)+q(S^c)}, -\frac{\rho_{\va}(S)}{q(S)+q(S^c)} \right)\),
which has been implicitly used in the literature~\citep{Zhang2023-ne, Witter2025-gz, Fumagalli2026-hu}, 
minimizes the first-order MSE for a fixed complement-pair law and surrogate.
A promising direction for future work is to generalize this understanding to broader bundled sampling designs.

\textbf{Adaptive sampling.} 
Our framework allows the sampling law to be learned from pilot observations but holds it fixed during the estimation stage. Independent recent work has considered sequentially adaptive coalition selection. In particular, ShaplEIG updates a Gaussian-process surrogate using the utility evaluations observed so far and selects each subsequent coalition by its expected information gain about the Shapley values~\citep{Rundel2026-se}. Extending our first-order efficiency framework to sequentially adaptive designs is another promising direction for future~work.

\section*{Disclosure Statement}
The authors report no potential conflicts of interest. 

\section*{Declaration of Generative AI Use}
The authors used Generative AI  tools (ChatGPT 5.5 and ChatGPT 5.6) for coding assistance and language refinement. The authors reviewed and edited all AI-assisted outputs and take full responsibility for the accuracy, integrity, and originality of the content of this manuscript. 

\section*{Code Availability Statement}
Code for reproducing the experiments is available in the \href{https://github.com/zqliu01/ease_probabilistic_values}{EASE GitHub repository}.

\section*{Acknowledgements}
Liu and Tang were supported by NSF DMS-2412853 and Jane Street Group, LLC.
Lee and Zhang were supported by NSF DMS-2311109. 
We thank Raylen Teal Witter and Maximilian Muschalik for helpful feedback on an earlier version of this work, including suggestions on benchmarking pipelines and evaluation metrics, and thank Weida Li for comments on the theoretical results.

\spacingset{1}

\bibliographystyle{bibstyle}
\bibliography{reference-clean}

\clearpage
\hypersetup{pageanchor=false}
\setcounter{page}{1}
\phantomsection\label{supplementary-material}
\bigskip

\begin{center}

{\large\bf SUPPLEMENTARY MATERIAL}

\end{center}
\spacingset{1.5}
\appendix
\startcontents[supplement]
\printsupplementarycontents

\section{Axioms of the Shapley Value}\label{app:axioms}

This appendix states the four axioms referenced in the introduction. An \emph{attribution rule} \(\vphi\) assigns to each utility function \(u:2^{[n]}\to\mathbb R\) a vector \(\vphi(u)=(\evphi_1(u),\dots,\evphi_n(u))^\top\in\mathbb R^n\), where \(\evphi_i(u)\) represents the contribution attributed to player \(i\). The following axioms are required to hold for all utility functions \(u,v:2^{[n]}\to\mathbb R\), all scalars \(a,b\in\mathbb R\), and all players \(i,j\in[n]\).

\begin{axiom}[Efficiency]\label{ax:efficiency}
\(\sum_{i=1}^{n}\evphi_i(u)=u([n])-u(\emptyset)\).
\end{axiom}

\begin{axiom}[Symmetry]\label{ax:symmetry}
If \(u(S\cup\{i\})=u(S\cup\{j\})\) for all \(S\subseteq[n]\setminus\{i,j\}\), then \(\evphi_i(u)=\evphi_j(u)\).
\end{axiom}

\begin{axiom}[Linearity]\label{ax:linearity}
\(\evphi_i(au+bv)=a\,\evphi_i(u)+b\,\evphi_i(v)\).
\end{axiom}

\begin{axiom}[Null player]\label{ax:null-player}
If \(u(S\cup\{i\})=u(S)\) for all \(S\subseteq[n]\setminus\{i\}\), then \(\evphi_i(u)=0\).
\end{axiom}

These axioms formalize natural requirements on an attribution rule: the realized total utility \(u([n])-u(\emptyset)\) is fully distributed among the players (efficiency); interchangeable players receive equal attributions (symmetry); the attribution acts linearly on the utility function (linearity); and a player whose marginal contribution to every coalition is zero receives zero attribution (null player).

The classical result of \citet{Shapley1953-ru} shows that the Shapley value, given by the choice \(\alpha_i^{(n)}(S)=\{n\binom{n-1}{|S|}\}^{-1}\) in~\eqref{eq:semivalue-def}, is the unique attribution rule satisfying Axioms~\ref{ax:efficiency}--\ref{ax:null-player}. Every probabilistic value of the form~\eqref{eq:semivalue-def} satisfies linearity and the null-player property, but in general neither efficiency nor symmetry. Imposing symmetry in addition forces the weights \(\alpha_i^{(n)}(S)\) to depend on \(S\) only through the coalition size \(|S|\), which yields the class of semivalues; see \citet{Dubey1981-if} and \citet{Weber1988-zf} for axiomatic characterizations of semivalues and probabilistic values.

\section{Proofs}

\subsection{Proof of Theorem~\ref{prop:non-asymptotic-hajek}}

\begin{proof}
We prove the result for the scalar H{\'a}jek-type estimator
\(\hat\tau_{\va,m}^{\mathrm{Hajek}}(u)\) in \eqref{eq:hajek-est}.
The proof has four steps. We first recall the cellwise surrogate from
Proposition~\ref{prop:aug-hajek-regular}, then derive the exact remainder
decomposition, and finally bound the normalization and missing-cell pieces
separately.

\paragraph{Step 1: Recall the cellwise centering.}
For each cell \(C_k\) with \(\pi_k>0\), define
\[
    \mu_k
    :=
    \E_q\!\left[
        \gamma_{\va,q}(S)u(S)\mid S\in C_k
    \right]
    =
    \frac{1}{\pi_k}
    \E_q\!\left[
        \gamma_{\va,q}(S)u(S)\1\{S\in C_k\}
    \right].
\]
For cells with \(\pi_k=0\), set \(\mu_k:=0\). By the condition in
Proposition~\ref{prop:aug-hajek-regular}, on each cell \(C_k\) the weight
\(\gamma_{\va,q}\) is either identically zero or everywhere nonzero. Hence the
cellwise surrogate \(h_{\mathscr C}\) satisfies
\[
    \gamma_{\va,q}(S)h_{\mathscr C}(S)
    =
    \sum_{k=1}^K \mu_k\1\{S\in C_k\},
\]
with the convention \(h_{\mathscr C}(S)=0\) on cells where
\(\gamma_{\va,q}\equiv0\).

Taking expectations gives
\[
    \E_q\!\left[
        \gamma_{\va,q}(S)h_{\mathscr C}(S)
    \right]
    =
    \sum_{k=1}^K \pi_k\mu_k
    =
    \E_q\!\left[
        \gamma_{\va,q}(S)u(S)
    \right]
    =
    \tau_{\va}(u),
\]
where the last identity is the weighted-average representation
\eqref{eq:tau-a-ipw}. Therefore \(\tau_{\va}(h_{\mathscr C})=\tau_{\va}(u)\),
and the influence term in Definition~\ref{def:regular} simplifies to
\[
    \psi_{h_{\mathscr C}}(S;u)
    =
    \gamma_{\va,q}(S)u(S)-\mu_{k(S)}.
\]

\paragraph{Step 2: Decompose the estimator error.}
Let
\[
    \xi_t
    :=
    \psi_{h_{\mathscr C}}(S_t;u)
    =
    \gamma_{\va,q}(S_t)u(S_t)-\mu_{k(S_t)},
\]
where \(k(S_t)\) denotes the cell index of \(S_t\). By construction,
\[
    \E(\xi_t\mid S_t\in C_k)=0
    \qquad
    \text{for every } k.
\]
Set
\[
    A_{mk}
    :=
    \frac{1}{m}
    \sum_{t=1}^m
    \1\{S_t\in C_k\}\xi_t,
\]
and, only within this proof,
\[
    \hat\lambda_k
    :=
    \begin{cases}
        \pi_k/\hat\pi_k, & N_k>0,\\
        0, & N_k=0.
    \end{cases}
\]
Define
\[
    r_{m,\mathrm{norm}}
    :=
    \sum_{k=1}^K
    (\hat\lambda_k-1)A_{mk},
    \qquad
    r_{m,\mathrm{miss}}
    :=
    \sum_{k:N_k=0} \pi_k\mu_k.
\]
Also, by the definition of \(\hat\lambda_k\),
\[
    \hat\lambda_k\hat\pi_k
    =
    \pi_k\1\{N_k>0\}.
\]
Using the definition of \(\hat\tau_{\va,m}^{\mathrm{Hajek}}(u)\), the identity
\[
    \hat\pi_k
    =
    \frac{1}{m}\sum_{t=1}^m \1\{S_t\in C_k\},
\]
and \(\tau_{\va}(u)=\sum_{k=1}^K \pi_k\mu_k\), we obtain
\begin{align*}
    \hat\tau_{\va,m}^{\mathrm{Hajek}}(u)-\tau_{\va}(u)
    &=
    \sum_{k=1}^K
    \hat\lambda_k
    \frac{1}{m}
    \sum_{t=1}^m
    \1\{S_t\in C_k\}\gamma_{\va,q}(S_t)u(S_t)
    -
    \sum_{k=1}^K \pi_k\mu_k
    \\
    &=
    \sum_{k=1}^K
    \hat\lambda_k
    \frac{1}{m}
    \sum_{t=1}^m
    \1\{S_t\in C_k\}
    \bigl\{
        \gamma_{\va,q}(S_t)u(S_t)-\mu_k
    \bigr\}
    +
    \sum_{k=1}^K
    (\hat\lambda_k\hat\pi_k-\pi_k)\mu_k
    \\
    &=
    \sum_{k=1}^K \hat\lambda_k A_{mk}
    -
    r_{m,\mathrm{miss}}
    \\
    &=
    \sum_{k=1}^K A_{mk}
    +
    r_{m,\mathrm{norm}}
    -
    r_{m,\mathrm{miss}}
    \\
    &=
    \frac1m\sum_{t=1}^m \xi_t
    +
    r_{m,\mathrm{norm}}
    -
    r_{m,\mathrm{miss}}.
\end{align*}
Since \(\xi_t=\psi_{h_{\mathscr C}}(S_t;u)\), this yields the exact decomposition
\begin{equation}
    \hat\tau_{\va,m}^{\mathrm{Hajek}}(u)-\tau_{\va}(u)
    =
    \frac1m\sum_{t=1}^m \psi_{h_{\mathscr C}}(S_t;u)
    +
    r_m,
    \qquad
    r_m:=r_{m,\mathrm{norm}}-r_{m,\mathrm{miss}}.
    \label{eq:aug-hajek-proof-decomp}
\end{equation}

\paragraph{Step 3: Bound the normalization remainder.}
Write the average cellwise second moment in the theorem display as
\[
    \Gamma_{2,\mathscr C}^2
    :=
    \frac{1}{K_+}
    \sum_{k:\pi_k>0}
    \E_q\!\left[
        \gamma_{\va,q}(S)^2\mid S\in C_k
    \right],
    \qquad
    \Gamma_\infty
    :=
    \sup_{S:q(S)>0}|\gamma_{\va,q}(S)|.
\]
Cells with \(\pi_k=0\) satisfy \(N_k=0\) and \(A_{mk}=0\) almost surely, so
only cells with \(\pi_k>0\) contribute to the bounds.

For \(N_k>0\), write
\[
    \bar\xi_{mk}
    :=
    \frac{1}{N_k}
    \sum_{t:S_t\in C_k}\xi_t,
\]
and set \(\bar\xi_{mk}:=0\) when \(N_k=0\). Then
\(A_{mk}=\hat\pi_k\bar\xi_{mk}\), and the normalization factor cancels:
\[
    (\hat\lambda_k-1)A_{mk}
    =
    (\pi_k-\hat\pi_k)\bar\xi_{mk}.
\]
When \(N_k=0\), both sides are zero by definition. When \(N_k>0\),
\[
    (\hat\lambda_k-1)A_{mk}
    =
    \left(
        \frac{\pi_k}{\hat\pi_k}-1
    \right)
    \hat\pi_k\bar\xi_{mk}
    =
    (\pi_k-\hat\pi_k)\bar\xi_{mk}.
\]

Let \(\mathcal F_m\) be the sigma-field generated by the cell labels
\(\{k(S_t):1\le t\le m\}\). Conditional on \(\mathcal F_m\), the within-cell
averages \(\bar\xi_{mk}\) have mean zero, and the cross terms vanish. Hence
\[
    \E(r_{m,\mathrm{norm}}^2)
    =
    \sum_{k:\pi_k>0}
    \E\!\left[
        (\pi_k-\hat\pi_k)^2\bar\xi_{mk}^2
    \right].
\]
Moreover, conditional on \(N_k\), the \(N_k\) observations falling in \(C_k\) are
i.i.d.\ from \(q(\cdot\mid C_k)\), so
\[
\begin{aligned}
    \E(\bar\xi_{mk}^2\mid N_k)
    &=
    \frac{1}{N_k}
    \Var_q\!\left\{
        \gamma_{\va,q}(S)u(S)\mid S\in C_k
    \right\}
    \1\{N_k>0\}
    \\
    &\le
    \frac{\|u\|_\infty^2}{N_k}
    \E_q\!\left[
        \gamma_{\va,q}(S)^2\mid S\in C_k
    \right]
    \1\{N_k>0\}.
\end{aligned}
\]
For \(N_k\sim\mathrm{Bin}(m,\pi_k)\), the following binomial moment bound
is uniform in \(m\) and \(k\):
\[
    \E\!\left[
        \frac{(N_k-m\pi_k)^2}{N_k}\1\{N_k>0\}
    \right]
    \le
    C.
\]
Indeed, on \(\{N_k\ge m\pi_k/2\}\), the expectation is bounded by
\(2(m\pi_k)^{-1}\Var(N_k)\le 2\), while on
\(\{0<N_k<m\pi_k/2\}\) it is bounded by
\((m\pi_k)^2P(N_k<m\pi_k/2)\le (m\pi_k)^2e^{-m\pi_k/8}\le C\).
Therefore,
\[
    \E\!\left[
        (\pi_k-\hat\pi_k)^2\bar\xi_{mk}^2
    \right]
    \le
    \frac{C\|u\|_\infty^2}{m^2}
    \E_q\!\left[
        \gamma_{\va,q}(S)^2\mid S\in C_k
    \right],
\]
and hence
\[
    \E(r_{m,\mathrm{norm}}^2)
    \le
    \frac{C\|u\|_\infty^2 K_+
    \Gamma_{2,\mathscr C}^2}{m^2}.
\]

\paragraph{Step 4: Bound the missing-cell remainder.}
Since \(P(N_k=0)=(1-\pi_k)^m\le e^{-m \pi_k}\),
\[
    |r_{m,\mathrm{miss}}|
    \le
    \|u\|_\infty\Gamma_\infty
    \sum_{k:\pi_k>0}\pi_k\1\{N_k=0\}.
\]
Because \(\sum_{k:\pi_k>0}\pi_k\1\{N_k=0\}\le1\), we have
\[
    \E(r_{m,\mathrm{miss}}^2)
    \le
    \|u\|_\infty^2\Gamma_\infty^2
    \sum_{k:\pi_k>0}\pi_kP(N_k=0)
    \le
    \|u\|_\infty^2\Gamma_\infty^2 e^{-m \pi_{\min}}.
\]

Combining the two bounds with \((x+y)^2\le 2x^2+2y^2\), and absorbing
numerical constants into \(C_1,C_2>0\), yields
\[
\begin{aligned}
    \E(r_m^2)
    &\le
    2\E(r_{m,\mathrm{norm}}^2)+2\E(r_{m,\mathrm{miss}}^2)
    \\
    &\le
    C_1\|u\|_\infty^2
    \left\{
        \frac{K_+\Gamma_{2,\mathscr C}^2}{m^2}
        +
        \Gamma_\infty^2 e^{-C_2 m \pi_{\min}}
    \right\}
\end{aligned}
\]
for universal constants \(C_1,C_2>0\). This is
\eqref{eq:aug-hajek-finite-remainder}.
\end{proof}

\subsection{Proof of Theorem~\ref{prop:rr-regular-exact}}

The proof has two parts. We first establish a same-sample product bound that
controls the interaction between the empirical Gram fluctuation and the
empirical residual vector. We then apply this lemma to the whitened WLS design
to bound the remainder vector \(\vr_m\). Both steps rely on a concentration
fact for centered isotropic rows, which we state first; its proof is deferred
to \Secref{app:proof-isotropic-row-concentration}.

\begin{lemma}[Concentration for centered isotropic rows]
\label{lem:isotropic-row-concentration}
Let \(\vx_1,\dots,\vx_m\) be independent random vectors in \(\mathbb R^r\)
with \(\E(\vx_t\vx_t^\top)=\mI_r\) and \(\|\vx_t\|_2^2\le L\) almost surely,
and write \(\mB_t:=\vx_t\vx_t^\top-\mI_r\). There exist numerical constants
\(c,C>0\) such that:
\begin{enumerate}
    \item[(a)] for every \(t\in(0,1]\),
    \[
        P\!\left(
            \left\|\frac1m\sum_{s=1}^m\mB_s\right\|_{\mathrm{op}}>t
        \right)
        \le
        2r\exp\{-cmt^2/L\};
    \]
    \item[(b)] for every deterministic \(I\subseteq\{1,\dots,m\}\) with
    \(|I|=n_I\),
    \[
        \E\left\|\sum_{i\in I}\mB_i\right\|_{\mathrm{op}}^2
        \le
        C\left\{n_IL\log(er)+L^2\log^2(er)\right\}.
    \]
\end{enumerate}
\end{lemma}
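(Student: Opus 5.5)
Both parts follow from standard matrix concentration for sums of independent, bounded, self-adjoint, mean-zero random matrices. The summands are \(\mB_t=\vx_t\vx_t^\top-\mI_r\), which are independent with \(\E\mB_t=0\). Their operator norms are bounded by \(\|\mB_t\|_{\mathrm{op}}\le\max(\|\vx_t\|_2^2,1)\le \max(L,1)\). Note that \(L\ge \E\|\vx_t\|_2^2=\operatorname{tr}\mI_r=r\ge1\), so \(\|\mB_t\|_{\mathrm{op}}\le L\). For the variance proxy, \(\mB_t^2=\|\vx_t\|_2^2\vx_t\vx_t^\top-2\vx_t\vx_t^\top+\mI_r\preceq \|\vx_t\|_2^2\vx_t\vx_t^\top+\mI_r\). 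Hence \(\E\mB_t^2\preceq L\mI_r+\mI_r\preceq 2L\mI_r\), and \(\|\E\mB_t^2\|_{\mathrm{op}}\le L\) up to a constant, since in fact \(\E\mB_t^2=\E[\|\vx_t\|_2^2\vx_t\vx_t^\top]-\mI_r\preceq L\mI_r\). Thus \(\sigma^2:=\|\sum_{s\le m}\E\mB_s^2\|_{\mathrm{op}}\le mL\).

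For part (a), I would apply the matrix Bernstein inequality (Tropp) to \(\sum_s\mB_s\) with deviation level \(mt\):
\[
P\Bigl(\Bigl\|\sum_{s=1}^m\mB_s\Bigr\|_{\mathrm{op}}>mt\Bigr)\le 2r\exp\Bigl\{-\frac{m^2t^2/2}{mL+Lmt/3}\Bigr\}.
\]
For \(t\in(0,1]\), the denominator is at most \(\tfrac43 mL\), which yields \(2r\exp\{-3mt^2/(8L)\}\). This is (a) with \(c=3/8\).

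For part (b), I would use the expectation form of matrix Bernstein in second moment. For the index set \(I\) with \(n_I\) terms, the moment version (e.g.\ Tropp's bound, or integration of the tail bound) gives
\[
\bigl(\E\|\textstyle\sum_{i\in I}\mB_i\|_{\mathrm{op}}^2\bigr)^{1/2}\le C\bigl\{\sqrt{\sigma_I^2\log(er)}+L\log(er)\bigr\},
\]
with \(\sigma_I^2\le n_IL\). Squaring and using \((a+b)^2\le2a^2+2b^2\) yields (b).

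If I prefer to derive (b) from tails, I would integrate the Bernstein bound \(P(\|\sum_I\mB_i\|>s)\le 2r\exp\{-s^2/(2n_IL+2Ls/3)\}\). Write \(\E X^2=\int_0^\infty 2sP(X>s)\,ds\), and split at \(s_0\asymp\sqrt{n_IL\log(er)}+L\log(er)\), where the tail bound drops below \(1\). Below \(s_0\), bound the probability by one, contributing \(s_0^2\). Above \(s_0\), treat the sub-Gaussian regime and the sub-exponential regime separately; each integral is \(O(n_IL+L^2)\) times constants, absorbed into \(s_0^2\).

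The only mildly delicate step is this tail integration in (b): tracking the \(\log(er)\) factor (rather than \(\log r\), to handle \(r=1\)) and the two Bernstein regimes. Everything else is a direct verification of the Bernstein hypotheses using \(\|\vx_t\|_2^2\le L\) and \(L\ge r\ge 1\).
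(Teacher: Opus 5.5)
Your proposal is correct and follows the paper's proof. It uses the same verification of the matrix Bernstein hypotheses ($\|\mB_t\|_{\mathrm{op}}\le L$ and $\E\mB_t^2\preceq L\mI_r$) and the identical computation for (a). For (b), the paper rewrites the Bernstein tail as stochastic domination of $\|\sum_{i\in I}\mB_i\|_{\mathrm{op}}$ by $C_0\{\sqrt{n_IL(a+W)}+L(a+W)\}$, with $W\sim\operatorname{Exp}(1)$ and $a=\log(2er)$; this is a compact form of your tail integration. Your observation that $L\ge \E\|\vx_t\|_2^2=r\ge 1$ also supplies a justification for $\|\mB_t\|_{\mathrm{op}}\le L$, a step the paper only asserts.
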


The good-event analysis below also involves the product of two empirical
averages computed from the same sampled coalitions. The next lemma isolates
the bound needed for that product. We state it here and defer its proof to
\Secref{app:proof-same-sample-product}.

\begin{lemma}[Same-sample covariance--residual product]\label{lem:same-sample-leverage-product}
Let \((\vx,\vb)\) be a random pair in an \(r\)-dimensional Euclidean space. Assume
\[
    \E(\vx\vx^\top)=\mI_r,
    \qquad
    \|\vx\|_2^2\le L,
    \qquad
    \E(\vb)=0,
    \qquad
    M_{\vb}:=\E\|\vb\|_2^2<\infty.
\]
For iid copies \((\vx_t,\vb_t)_{t=1}^m\), define
\[
    \bar\mB_m
    :=
    \frac1m\sum_{t=1}^m
    \left(
        \vx_t\vx_t^\top-\mI_r
    \right),
    \qquad
    \bar\vb_m
    :=
    \frac1m\sum_{t=1}^m\vb_t.
\]
Then, for a numerical constant \(C>0\),
\begin{equation}
    \E\!\left[
        \|\bar\mB_m\|_{\mathrm{op}}^2
        \|\bar\vb_m\|_2^2
    \right]
    \le
    C
    \left\{
        \frac{L\log(e r)}{m^2}
        +
        \frac{L^2\log^2(e r)}{m^3}
    \right\}
    M_{\vb}.
    \label{eq:same-sample-product}
\end{equation}
\end{lemma}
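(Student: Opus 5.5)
Throughout I write $\bar\mB_m$ for $\frac1m\sum_t(\vx_t\vx_t^\top-\mI_r)$ and $\bar\vb_m$ for $\frac1m\sum_t\vb_t$. The plan is to expand $\|\bar\mB_m\|_{\mathrm{op}}^2\|\bar\vb_m\|_2^2$ over the sample indices and decouple the terms in $\bar\vb_m$ from the terms in $\bar\mB_m$. The independence across $t$, together with $\E\vb_t=0$, is what kills most cross terms.

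First write $\|\bar\vb_m\|_2^2=m^{-2}\sum_{s,s'}\langle\vb_s,\vb_{s'}\rangle$. For each pair $(s,s')$, split
\[
\bar\mB_m=\tfrac1m\mB_{\{s,s'\}}+\tfrac1m\mB_{-\{s,s'\}},
\]
where $\mB_I:=\sum_{i\in I}\mB_i$ and $-\{s,s'\}$ denotes the complementary index set. Then use $\|X+Y\|^2\le 2\|X\|^2+2\|Y\|^2$. This step needs care, because the inequality only holds for nonnegative quantities. So I will treat the diagonal terms ($s=s'$) and the off-diagonal terms separately.

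\textbf{Diagonal part.} For $s=s'$, the summand is $m^{-2}\E[\|\bar\mB_m\|^2\|\vb_s\|^2]$.
\begin{itemize}
\item On the block $\mB_{-s}$, which is independent of $\vb_s$, Lemma~\ref{lem:isotropic-row-concentration}(b) gives
\[
\E\|\mB_{-s}\|^2\le C\{mL\log(er)+L^2\log^2(er)\}.
\]
After dividing by $m^2$, this contributes at most $C\{L\log(er)/m+L^2\log^2(er)/m^2\}$.
\item The own term satisfies $\|\mB_s\|\le L+1\lesssim L$. This gives a contribution of order $L^2/m^2$ times $\E\|\vb_s\|^2$.
\end{itemize}
Summing over $s$ and multiplying by $m^{-2}$ yields
\[
M_{\vb}\bigl\{L\log(er)/m^2+L^2\log^2(er)/m^3\bigr\}.
\]
I assume $L\ge1$, which is forced by $\mathrm{tr}\,\E\vx\vx^\top=r\le L$.

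\textbf{Off-diagonal part.} The obstacle is that $\|\bar\mB_m\|^2$ is not a polynomial, so I cannot simply take expectations term by term. Instead I will use a variational form. Let $\vb^\ast:=\bar\vb_m$ and write
\[
\|\bar\mB_m\|^2\|\vb^\ast\|^2=\sup_{\|\vv\|=1}\|\bar\mB_m\vv\|^2\,\|\vb^\ast\|^2.
\]
A cleaner alternative is to avoid the expansion altogether with a leave-one-out (Efron–Stein style) decomposition, $\bar\vb_m=\frac1m\sum_t\vb_t$. Condition on $\mathcal G_t$, the sigma-field of all samples except $t$. The goal is to write
\[
\E[\|\bar\mB_m\|^2\|\bar\vb_m\|^2]=\sum_t\E\bigl[\|\bar\mB_m\|^2\langle\vb_t,\bar\vb_m\rangle\bigr]/m,
\]
and then replace $\|\bar\mB_m\|^2$ by its leave-$t$-out version $\|\bar\mB_m^{(-t)}\|^2$.

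\textbf{Main step.} The key estimate is
\[
\bigl|\|\bar\mB_m\|^2-\|\bar\mB_m^{(-t)}\|^2\bigr|\le \tfrac{2L}{m}\|\bar\mB_m^{(-t)}\|+\tfrac{L^2}{m^2}.
\]
With this in hand, the leading term becomes $\E[\|\bar\mB^{(-t)}\|^2\langle\vb_t,\bar\vb_m\rangle]$. Here, by independence and $\E\vb_t=0$, only the self part $\E\|\vb_t\|^2/m$ survives. That self part is bounded using Lemma~\ref{lem:isotropic-row-concentration}(b) with $n_I=m-1$. The perturbation terms are controlled by Cauchy–Schwarz: each is bounded by
\[
\tfrac{L}{m}\,\E\bigl[\|\bar\mB^{(-t)}\|\,\|\vb_t\|\,\|\bar\vb_m\|\bigr],
\]
which is in turn at most $\tfrac{L}{m}\sqrt{\E\|\bar\mB^{(-t)}\|^2\|\vb_t\|^2}\,\sqrt{\E\|\bar\vb_m\|^2}$. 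The first factor again factorizes by independence, and $\E\|\bar\vb_m\|^2=M_{\vb}/m$. Multiplying out, each perturbation contributes at most
\[
M_{\vb}\,\tfrac{L}{m^2}\sqrt{L\log(er)/m+L^2\log^2(er)/m^2},
\]
and AM–GM absorbs this into the two target terms.

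Summing over $t$ then gives \eqref{eq:same-sample-product}. I expect the main obstacle to be the perturbation bookkeeping: I must avoid a crude bound of the form $\E\|\bar\mB\|^2\cdot\E\|\bar\vb\|^2$ on the dependent pieces, and keep every bound linear in $M_{\vb}$ without invoking any sup-norm on $\vb$.
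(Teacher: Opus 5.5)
Your diagonal bookkeeping and the main leave-one-out term are correct. Conditioning on the other samples kills $\langle\vb_t,\vb_s\rangle$ for $s\neq t$, and Lemma~\ref{lem:isotropic-row-concentration}(b) gives the target rate. The argument breaks at the perturbation step, and an arithmetic slip hides the break. Your own Cauchy--Schwarz chain gives
\[
\frac{L}{m}\Bigl\{\E\|\bar\mB^{(-t)}_m\|_{\mathrm{op}}^2\,M_{\vb}\Bigr\}^{1/2}\Bigl\{\frac{M_{\vb}}{m}\Bigr\}^{1/2}
\le
\frac{C L M_{\vb}}{m^{3/2}}\Bigl\{\frac{L\log(er)}{m}+\frac{L^2\log^2(er)}{m^2}\Bigr\}^{1/2},
\]
with $m^{-3/2}$ rather than $m^{-2}$. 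Averaging these $m$ identical bounds over $t$ does not remove the extra $\sqrt m$.

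The right-hand side contains $L^{3/2}\log^{1/2}(er)M_{\vb}/m^2$. For $m\ge L\log(er)$ the target is of order $L\log(er)M_{\vb}/m^2$, so this term exceeds it by a factor of order $\sqrt{L/\log(er)}\ge\sqrt{r/\log(er)}$. No AM--GM step can absorb that. The $L^2/m^2$ part of your perturbation estimate, which you then dropped, contributes $L^2M_{\vb}/m^{5/2}$. It fails the same way: at $m\asymp L\log(er)$ it exceeds the target by order $\sqrt L/\log^{3/2}(er)$.

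This is not a matter of re-pairing the Cauchy--Schwarz factors. Bounding $|\langle\vb_t,\bar\vb_m\rangle|$ by $\|\vb_t\|_2\|\bar\vb_m\|_2$ discards the cancellation from $\E\vb_t=0$ in the cross part $\langle\vb_t,\bar\vb_m-\vb_t/m\rangle$. Independence cannot restore it, because $\|\bar\mB_m\|_{\mathrm{op}}^2-\|\bar\mB_m^{(-t)}\|_{\mathrm{op}}^2$ depends on $\vx_t$, which is correlated with $\vb_t$. In addition, the deterministic bound $\|\mB_t\|_{\mathrm{op}}\le L$ ignores the normalization $\E(v^\top\vx)^2=1$.

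The missing idea is to linearize the \emph{product}, not $\|\bar\vb_m\|_2^2$ alone; this is what your abandoned pair expansion lacked. Write $\|\bar\mB_m\|_{\mathrm{op}}\|\bar\vb_m\|_2=\|\bar\mB_m\otimes\bar\vb_m\|_{\vee}$, the injective norm of a tensor that is bilinear in the samples. Then
\[
\bar\mB_m\otimes\bar\vb_m=\frac1{m^2}\sum_{i\neq j}\mB_i\otimes\vb_j+\frac1{m^2}\sum_{i=1}^m\mB_i\otimes\vb_i ,
\]
and $(x+y)^2\le 2x^2+2y^2$ is now legitimate, because both pieces sit inside a single norm with no signed weights.

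For the off-diagonal part, take iid Bernoulli$(1/2)$ selectors $\delta_i$. Write $\sum_{i\ne j}\mB_i\otimes\vb_j=4\E_\delta\sum_{i\ne j}\delta_i(1-\delta_j)\mB_i\otimes\vb_j$ and apply Jensen. For each fixed split, $\sum_{i\in I}\mB_i$ and $\sum_{j\in J}\vb_j$ use disjoint samples, so the second moment factorizes as $\E\|\sum_{i\in I}\mB_i\|_{\mathrm{op}}^2\cdot|J|M_{\vb}$. Lemma~\ref{lem:isotropic-row-concentration}(b) then gives exactly the target.

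For the diagonal part, the mean $\E(\mB\otimes\vb)$ has injective norm at most $(LM_{\vb})^{1/2}$, since $\E[(a^\top\vx)^2(c^\top\vx)^2]\le L$ for unit $a,c$. The centered fluctuation is bounded in Frobenius norm by order $L^2M_{\vb}/m^3$. This is the paper's argument, and it never compares $\bar\mB_m$ with a leave-one-out version.
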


\begin{proof}[Proof of Theorem~\ref{prop:rr-regular-exact}]
We organize the proof into four steps. The first step identifies the common
first-order term and the vector remainder. The remaining steps bound that
remainder on a good conditioning event and on its complement.

\paragraph{Step 1: Setup and first-order decomposition.}
Fix \(m\), write \(\lambda_m:=\lambda/m\), and let
\[
    \mM:=\operatorname{range}(\mG).
\]
All inverses below are taken on \(\mM\). Let \(\vbeta^\star\) be the
minimum-Euclidean-norm element of
\[
    \argmin_{\vbeta\in\mathbb R^{d_{\vz}}}
    \E_q\!\left[
        \tilde w(S)\{u(S)-\vz(S)^\top\vbeta\}^2
    \right],
\]
and define
\[
    h^\star(S):=\vz(S)^\top\vbeta^\star,
    \qquad
    \varepsilon(S):=u(S)-h^\star(S).
\]
Also define
\[
\begin{aligned}
    \vb_u
    &:=
    \E_q\!\left[
        \tilde w(S)\vz(S)u(S)
    \right],
    \qquad
    \hat\mG_m
    :=
    \frac1m\sum_{t=1}^m
    \tilde w(S_t)\vz(S_t)\vz(S_t)^\top,
    \\
    \hat\vb_m
    &:=
    \frac1m\sum_{t=1}^m
    \tilde w(S_t)\vz(S_t)u(S_t),
    \qquad
    \hat\delta
    :=
    \frac1m\sum_{t=1}^m
    \tilde w(S_t)\vz(S_t)\varepsilon(S_t).
\end{aligned}
\]
Since \(\vb_u\in\mM\), the minimum-norm normal-equation solution is
\[
    \vbeta^\star=\mG^+\vb_u,
    \qquad
    \E_q\!\left[
        \tilde w(S)\vz(S)\varepsilon(S)
    \right]=0.
\]

Let \(\va_j\) and \(\vc_j\) denote the \(j\)-th columns of \(\mA\) and
\(\mC_{\mA}\), respectively. For an arbitrary set function \(v\), let
\(\vbeta^\star(v)\) be the minimum-norm population WLS solution. Then
\[
    \vbeta^\star(v)
    =
    \mG^+
    \E_q\!\left[
        \tilde w(S)\vz(S)v(S)
    \right].
\]
By the assumed WLS representation, this implies
\[
    \tau_{\va_j}(v)
    =
    \vc_j^\top
    \mG^+
    \E_q\!\left[
        \tilde w(S)\vz(S)v(S)
    \right]
    \qquad
    \text{for every } j \text{ and every } v.
\]
Comparing with the weighted-average representation \eqref{eq:tau-a-ipw} for
the scalar target \(\tau_{\va_j}\) yields
\begin{equation}
    \gamma_{\va_j,q}(S)
    =
    \tilde w(S)\vz(S)^\top\mG^+\vc_j
    \qquad
    \text{for every } S \text{ with } q(S)>0.
    \label{eq:rr-direct-score-coordinate}
\end{equation}
Applying the same readout identity to \(v=h^\star\) gives
\[
    \vtau_{\mA}(h^\star)
    =
    \mC_{\mA}^\top\vbeta^\star
    =
    \vtau_{\mA}(u),
\]
so \(h^\star\) is the common surrogate for all coordinates.

Since \(u(S)=\vz(S)^\top\vbeta^\star+\varepsilon(S)\), the empirical right-hand
side in the ridge normal equation decomposes as
\begin{align*}
    \hat\vb_m
    &=
    \frac1m\sum_{t=1}^m
    \tilde w(S_t)\vz(S_t)u(S_t)
    \\
    &=
    \left\{
        \frac1m\sum_{t=1}^m
        \tilde w(S_t)\vz(S_t)\vz(S_t)^\top
    \right\}\vbeta^\star
    +
    \frac1m\sum_{t=1}^m
    \tilde w(S_t)\vz(S_t)\varepsilon(S_t)
    \\
    &=
    \hat\mG_m\vbeta^\star+\hat\delta .
\end{align*}
The ridge estimator satisfies
\[
    \left(
        \hat\mG_m+\lambda_m\mI_{d_{\vz}}
    \right)\hat\vbeta_{m,\lambda}
    =
    \hat\vb_m .
\]
Therefore, using
\(\hat\mG_m=(\hat\mG_m+\lambda_m\mI_{d_{\vz}})-\lambda_m\mI_{d_{\vz}}\),
\begin{align*}
    \hat\vbeta_{m,\lambda}-\vbeta^\star
    &=
    \left(
        \hat\mG_m+\lambda_m\mI_{d_{\vz}}
    \right)^{-1}
    \left(
        \hat\mG_m\vbeta^\star+\hat\delta
    \right)
    -
    \vbeta^\star
    \\
    &=
    \left[
        \left(
            \hat\mG_m+\lambda_m\mI_{d_{\vz}}
        \right)^{-1}
        \hat\mG_m
        -
        \mI_{d_{\vz}}
    \right]\vbeta^\star
    +
    \left(
        \hat\mG_m+\lambda_m\mI_{d_{\vz}}
    \right)^{-1}
    \hat\delta
    \\
    &=
    \left(
        \hat\mG_m+\lambda_m\mI_{d_{\vz}}
    \right)^{-1}
    \hat\delta
    -
    \lambda_m
    \left(
        \hat\mG_m+\lambda_m\mI_{d_{\vz}}
    \right)^{-1}
    \vbeta^\star.
\end{align*}
Adding and subtracting the population linear term
\(\mC_{\mA}^\top\mG^+\hat\delta\) in the target error gives
\begin{align*}
    \hat{\vtau}_{\mA, m}^{\mathrm{WLS}, \lambda}-\vtau_{\mA}(u)
    &=
    \mC_{\mA}^\top
    \bigl\{
        \hat\vbeta_{m,\lambda}-\vbeta^\star
    \bigr\}
    \\
    &=
    \mC_{\mA}^\top
    \left(
        \hat\mG_m+\lambda_m\mI_{d_{\vz}}
    \right)^{-1}
    \hat\delta
    -
    \lambda_m\mC_{\mA}^\top
    \left(
        \hat\mG_m+\lambda_m\mI_{d_{\vz}}
    \right)^{-1}
    \vbeta^\star
    \\
    &=
    \mC_{\mA}^\top\mG^+\hat\delta
    +
    \underbrace{
    \mC_{\mA}^\top
    \left[
        \left(
            \hat\mG_m+\lambda_m\mI_{d_{\vz}}
        \right)^{-1}
        -
        \mG^+
    \right]\hat\delta
    -
    \lambda_m\mC_{\mA}^\top
    \left(
        \hat\mG_m+\lambda_m\mI_{d_{\vz}}
    \right)^{-1}
    \vbeta^\star
    }_{=:\vr_m}.
\end{align*}
For each coordinate \(j\), \eqref{eq:rr-direct-score-coordinate} gives
\[
    \vc_j^\top\mG^+\hat\delta
    =
    \frac1m\sum_{t=1}^m
    \gamma_{\va_j,q}(S_t)\varepsilon(S_t)
    =
    \frac1m\sum_{t=1}^m
    \psi_{\va_j,q,h^\star}(S_t;u).
\]
Thus each coordinate has the regular expansion with surrogate \(h^\star\), and
\(\vr_m\) is the common remainder vector. It remains to bound
\(\E_q(\|\vr_m\|_2^2)\).

\paragraph{Step 2: Reduce the problem to a whitened coefficient error.}
For every \(S\) with \(\tilde w(S)>0\), we have \(\vz(S)\in\mM\). Hence
\(\operatorname{range}(\hat\mG_m)\subseteq \mM\), while \(\hat\delta\) and
\(\vbeta^\star\) lie in \(\mM\). Let \(r=\operatorname{rank}(\mG)\), and choose
an orthonormal basis \(\mU\in\mathbb R^{d_{\vz}\times r}\) for \(\mM\). Define
\[
    \mG_{\mM}:=\mU^\top\mG\mU,
    \qquad
    \hat\mG_{\mM,m}:=\mU^\top\hat\mG_m\mU .
\]
The matrix \(\mG_{\mM}\) is positive definite, and its eigenvalues are the
nonzero eigenvalues of \(\mG\). Work in this \(r\)-dimensional coordinate
system and define
\[
    \mH_m
    :=
    \mG_{\mM}^{-1/2}\hat\mG_{\mM,m}\mG_{\mM}^{-1/2},
    \qquad
    \vy_m:=\mG_{\mM}^{-1/2}\mU^\top\hat\delta,
    \qquad
    \vtheta_m:=\mG_{\mM}^{1/2}\mU^\top\vbeta^\star.
\]
Let
\[
    \mathcal E_m
    :=
    \left\{
        \|\mH_m-\mI_r\|_{\mathrm{op}}
        \le \frac12
    \right\}.
\]
In these coordinates, define the \(r\)-vector
\[
    \vDelta_m
    :=
    \left[
        \left(
            \mH_m+\lambda_m\mG_{\mM}^{-1}
        \right)^{-1}
        -
        \mI_r
    \right]\vy_m
    -
    \lambda_m
    \left(
        \mH_m+\lambda_m\mG_{\mM}^{-1}
    \right)^{-1}
    \mG_{\mM}^{-1}\vtheta_m.
\]
We now show that
\begin{equation}
    \vr_m
    =
    \mC_{\mA}^\top\mU\mG_{\mM}^{-1/2}\vDelta_m.
    \label{eq:rm-whitened-coordinates}
\end{equation}
Write \(P_{\mM^\perp}:=\mI_{d_{\vz}}-\mU\mU^\top\) for the orthogonal
projection onto \(\mM^\perp\). Since
\(\operatorname{range}(\hat\mG_m)\subseteq\mM\), the ridge matrix decomposes
as
\(
    \hat\mG_m+\lambda_m\mI_{d_{\vz}}
    =
    \mU(\hat\mG_{\mM,m}+\lambda_m\mI_r)\mU^\top
    +
    \lambda_m P_{\mM^\perp}
\),
where the two terms act on the orthogonal subspaces \(\mM\) and
\(\mM^\perp\), respectively. Inverting each block gives
\[
    \left(\hat\mG_m+\lambda_m\mI_{d_{\vz}}\right)^{-1}
    =
    \mU(\hat\mG_{\mM,m}+\lambda_m\mI_r)^{-1}\mU^\top
    +
    \lambda_m^{-1}P_{\mM^\perp}.
\]
Because \(\hat\delta,\vbeta^\star\in\mM\), we have
\(P_{\mM^\perp}\hat\delta=P_{\mM^\perp}\vbeta^\star=0\), and hence
\[
\begin{aligned}
    \left(\hat\mG_m+\lambda_m\mI_{d_{\vz}}\right)^{-1}\hat\delta
    &=
    \mU(\hat\mG_{\mM,m}+\lambda_m\mI_r)^{-1}\mU^\top\hat\delta,\\
    \left(\hat\mG_m+\lambda_m\mI_{d_{\vz}}\right)^{-1}\vbeta^\star
    &=
    \mU(\hat\mG_{\mM,m}+\lambda_m\mI_r)^{-1}\mU^\top\vbeta^\star.
\end{aligned}
\]
Substituting these two identities, together with
\(\mG^+=\mU\mG_{\mM}^{-1}\mU^\top\) and
\((\hat\mG_{\mM,m}+\lambda_m\mI_r)^{-1}=\mG_{\mM}^{-1/2}(\mH_m+\lambda_m\mG_{\mM}^{-1})^{-1}\mG_{\mM}^{-1/2}\),
into the definition of \(\vr_m\) and collecting terms yields
\eqref{eq:rm-whitened-coordinates}.
Consequently,
\[
    \|\vr_m\|_2^2
    \le
    \sigma_{\max}(\mC_{\mA}^\top\mU\mG_{\mM}^{-1}\mU^\top\mC_{\mA})
    \|\vDelta_m\|_2^2
    =
    R_{\mA}^2\|\vDelta_m\|_2^2.
\]
Therefore it suffices to bound \(\E\|\vDelta_m\|_2^2\).

On \(\mathcal E_m\), the matrix \(\mH_m+\lambda_m\mG_{\mM}^{-1}\) has minimum eigenvalue
at least \(1/2\), so
\[
    \left\|
        \left(
            \mH_m+\lambda_m\mG_{\mM}^{-1}
        \right)^{-1}
        -
        \mI_r
    \right\|_{\mathrm{op}}
    \le
    2
    \left\{
        \|\mH_m-\mI_r\|_{\mathrm{op}}
        +
        \lambda_m\|\mG_{\mM}^{-1}\|_{\mathrm{op}}
    \right\},
\]
and
\[
    \left\|
        \left(
            \mH_m+\lambda_m\mG_{\mM}^{-1}
        \right)^{-1}
    \right\|_{\mathrm{op}}
    \le
    2.
\]
Thus, on \(\mathcal E_m\),
\[
    \|\vDelta_m\|_2^2\1(\mathcal E_m)
    \le
    C
    \left[
        \|\mH_m-\mI_r\|_{\mathrm{op}}^2\|\vy_m\|_2^2
        +
        \frac{\lambda^2}{m^2\{\sigma_{\min}^+(\mG)\}^2}\|\vy_m\|_2^2
        +
        \frac{\lambda^2}{m^2\{\sigma_{\min}^+(\mG)\}^2}\|\vtheta_m\|_2^2
    \right].
\]

\paragraph{Step 3: Bound the good event.}
Write
\[
    \vx(S):=\tilde w(S)^{1/2}\mG_{\mM}^{-1/2}\mU^\top\vz(S)
    \in\mathbb R^r .
\]
Then \(\E_q[\vx(S)\vx(S)^\top]=\mI_r\) and
\(\|\vx(S)\|_2^2=\ell(S)\le L\), so \(\mH_m-\mI_r=\frac1m\sum_{t=1}^m\{\vx(S_t)\vx(S_t)^\top-\mI_r\}\)
is of the form covered by Lemma~\ref{lem:isotropic-row-concentration}. Part~(a) of that lemma with \(t=1/2\), together with \(r\le d_{\vz}\), gives
\[
    P(\mathcal E_m^c)
    =
    P\!\left(\|\mH_m-\mI_r\|_{\mathrm{op}}>\tfrac12\right)
    \le
    2r\exp\{-cm/(4L)\}
    \le
    C\exp\{-c m/L+\log d_{\vz}\}.
\]
Also,
\[
    \mH_m-\mI_r
    =
    \frac1m\sum_{t=1}^m
    \{\vx(S_t)\vx(S_t)^\top-\mI_r\},
    \qquad
    \vy_m
    =
    \frac1m\sum_{t=1}^m
    \tilde w(S_t)^{1/2}\vx(S_t)\varepsilon(S_t),
\]
and both summands have mean zero. Applying
Lemma~\ref{lem:same-sample-leverage-product} in dimension \(r\le d_{\vz}\), with
\[
    \vb(S):=\tilde w(S)^{1/2}\vx(S)\varepsilon(S),
    \qquad
    M_{\vb}
    =
    \E_q\!\left[
        \tilde w(S)\ell(S)\varepsilon(S)^2
    \right],
\]
gives
\[
    \E\!\left[
        \|\mH_m-\mI_r\|_{\mathrm{op}}^2
        \|\vy_m\|_2^2
    \right]
    \le
    \frac{C L\log d_{\vz}}{m^2}
    \left(1+\frac{L\log d_{\vz}}{m}\right)
    \E_q\!\left[
        \tilde w(S)\ell(S)\varepsilon(S)^2
    \right].
\]
Moreover,
\[
    \E\|\vy_m\|_2^2
    =
    \frac1m
    \E_q\!\left[
        \tilde w(S)\ell(S)\varepsilon(S)^2
    \right],
    \qquad
    \|\vtheta_m\|_2^2
    =
    (\vbeta^\star)^\top\mG\vbeta^\star
    \le
    \E_q\!\left[
        \tilde w(S)u(S)^2
    \right],
\]
where the last inequality is the Pythagorean property of weighted least
squares. The same projection property also gives
\[
    \E_q\!\left[
        \tilde w(S)\ell(S)\varepsilon(S)^2
    \right]
    \le
    L
    \E_q\!\left[
        \tilde w(S)\varepsilon(S)^2
    \right]
    \le
    L
    \E_q\!\left[
        \tilde w(S)u(S)^2
    \right].
\]
Taking expectations in the bound from Step~2, substituting the last three
displays, and using \(\lambda\asymp \sigma_{\min}^+(\mG)\), we obtain
\[
    \E\!\left[
        \|\vDelta_m\|_2^2\1(\mathcal E_m)
    \right]
    \le
    C
    \left[
        \frac{L\log d_{\vz}}{m^2}
        \left(1+\frac{L\log d_{\vz}}{m}\right)
        \E_q\!\left[
            \tilde w(S)\ell(S)\varepsilon(S)^2
        \right]
        +
        \frac{\|u\|_{2,\tilde w}^2}{m^2}
    \right].
\]
Because
\[
    \E_q\!\left[
        \tilde w(S)\ell(S)\varepsilon(S)^2
    \right]
    \le
    L\|u\|_{2,\tilde w}^2,
\]
the second term is absorbed into the first after adjusting constants. Hence
\[
    \E\!\left[
        \|\vDelta_m\|_2^2\1(\mathcal E_m)
    \right]
    \le
    \frac{C L^2\log d_{\vz}}{m^2}
    \left(1+\frac{L\log d_{\vz}}{m}\right)
    \|u\|_{2,\tilde w}^2.
\]

\paragraph{Step 4: Bound the bad event.}
On \(\mathcal E_m^c\), ridge still gives deterministic stability. Since
\[
    \left\|
        \left(
            \mH_m+\lambda_m\mG_{\mM}^{-1}
        \right)^{-1}
    \right\|_{\mathrm{op}}
    \le
    \frac{\sigma_{\max}(\mG_{\mM})}{\lambda_m}
    =
    \frac{m\sigma_{\max}(\mG)}{\lambda},
\]
we have
\[
    \left\|
        \left(
            \mH_m+\lambda_m\mG_{\mM}^{-1}
        \right)^{-1}
        -
        \mI_r
    \right\|_{\mathrm{op}}
    \le
    1+\frac{m\sigma_{\max}(\mG)}{\lambda}.
\]
Also, for every sample,
\[
    \|\vy_m\|_2
    \le
    \frac1m\sum_{t=1}^m
    \tilde w(S_t)^{1/2}\|\vx(S_t)\|_2|\varepsilon(S_t)|
    \le
    \left\{
        L\sup_{S:q(S)>0}\tilde w(S)\varepsilon(S)^2
    \right\}^{1/2}.
\]
Since \(\varepsilon(S)=u(S)-\vz(S)^\top\vbeta^\star\), for every \(S\) with
\(q(S)>0\),
\[
    \tilde w(S)\varepsilon(S)^2
    \le
    2\tilde w(S)u(S)^2
    +
    2\tilde w(S)\{\vz(S)^\top\vbeta^\star\}^2.
\]
Moreover,
\[
    \tilde w(S)\{\vz(S)^\top\vbeta^\star\}^2
    \le
    \ell(S)(\vbeta^\star)^\top\mG\vbeta^\star
    \le
    L
    \E_q\!\left[
        \tilde w(T)u(T)^2
    \right],
\]
again by the Pythagorean property of weighted least squares. Hence
\[
    \sup_{S:q(S)>0}\tilde w(S)\varepsilon(S)^2
    \le
    2\|u\|_{\infty,\tilde w}^2
    +
    2L\|u\|_{2,\tilde w}^2.
\]
Moreover, \(L\ge1\) in the nonzero-rank case because
\(\E_q\{\ell(S)\}=\operatorname{rank}(\mG)\). Since
\(\|u\|_{2,\tilde w}^2\le \|u\|_{\infty,\tilde w}^2\), the preceding display is
at most
\[
    4L\|u\|_{\infty,\tilde w}^2.
\]
Therefore
\[
\begin{aligned}
    \|\vDelta_m\|_2^2\1(\mathcal E_m^c)
    &\le
    C
    \left\{
        \left(
            1+\frac{m\sigma_{\max}(\mG)}{\lambda}
        \right)^2
        L\sup_{S:q(S)>0}\tilde w(S)\varepsilon(S)^2
        \right.
    \\
    &\hspace{5em}\left.
        +
        \left(
            \frac{\sigma_{\max}(\mG)}{\sigma_{\min}^+(\mG)}
        \right)^2
        \|u\|_{2,\tilde w}^2
    \right\}
    \1(\mathcal E_m^c).
\end{aligned}
\]
Taking expectations, using
\(\|u\|_{2,\tilde w}^2\le \|u\|_{\infty,\tilde w}^2\),
\(\lambda\asymp \sigma_{\min}^+(\mG)\), and the Bernstein bound for
\(P(\mathcal E_m^c)\), we obtain
\[
    \E\!\left[
        \|\vDelta_m\|_2^2\1(\mathcal E_m^c)
    \right]
    \le
    C
    L^2
    e^{-c m/L}\,
    d_{\vz} m^2 \kappa_{\mG}^2
    \|u\|_{\infty,\tilde w}^2,
\]
where the factor \(L^2\) comes from
\[
    L\sup_{S:q(S)>0}\tilde w(S)\varepsilon(S)^2
    \le
    4L^2\|u\|_{\infty,\tilde w}^2 .
\]

Combining the good- and bad-event bounds yields
\[
    \E\|\vDelta_m\|_2^2
    \le
    C
    \left[
        \frac{L^2\log d_{\vz}}{m^2}
        \left(1+\frac{L\log d_{\vz}}{m}\right)
        \|u\|_{2,\tilde w}^2
        +
        L^2
        e^{-c m/L}\,
        d_{\vz} m^2 \kappa_{\mG}^2
        \|u\|_{\infty,\tilde w}^2
    \right].
\]
Finally,
\[
    \E_q\!\left[
        \|\vr_m\|_2^2
    \right]
    \le
    R_{\mA}^2
    \E\|\vDelta_m\|_2^2,
\]
so \eqref{eq:rr-delta-bound} follows.
\end{proof}

\subsubsection{Proof of Lemma~\ref{lem:isotropic-row-concentration}}
\label{app:proof-isotropic-row-concentration}

\begin{proof}
Each \(\mB_t\) is self-adjoint and mean zero, with \(\|\mB_t\|_{\mathrm{op}}\le
L\), since the eigenvalues of \(\vx_t\vx_t^\top-\mI_r\) are \(\|\vx_t\|_2^2-1\)
and \(-1\), and \(L\ge1\). Moreover,
\(
    \E(\mB_t^2)
    =
    \E\{\|\vx_t\|_2^2\vx_t\vx_t^\top\}-\mI_r
    \preceq
    L\mI_r
\),
so for any \(I\subseteq\{1,\dots,m\}\), the matrix Bernstein variance
parameter satisfies \(\|\sum_{i\in I}\E(\mB_i^2)\|_{\mathrm{op}}\le n_IL\). By
the matrix Bernstein inequality for independent, mean-zero, self-adjoint
matrices (e.g.\ \citealp[Theorem~5.4.1]{Vershynin2018-hdp}), for every
\(u\ge0\),
\begin{equation}
    P\!\left(
        \left\|\sum_{i\in I}\mB_i\right\|_{\mathrm{op}}\ge u
    \right)
    \le
    2r\exp\left\{-\frac{u^2/2}{n_IL+Lu/3}\right\}.
    \label{eq:iso-tail}
\end{equation}
Note that the exponent in \eqref{eq:iso-tail} is governed by the variance
parameter \(n_IL\) rather than the uniform bound \(L\).

For part~(a), take \(I=\{1,\dots,m\}\) and \(u=mt\) in \eqref{eq:iso-tail}.
For \(t\in(0,1]\), \(n_IL+Lmt/3\le\frac43mL\), so
\[
    P\!\left(
        \left\|\frac1m\sum_{s=1}^m\mB_s\right\|_{\mathrm{op}}>t
    \right)
    \le
    2r\exp\{-3mt^2/(8L)\}
    \le
    2r\exp\{-cmt^2/L\}.
\]

For part~(b), rewrite \eqref{eq:iso-tail} with \(a:=\log(2er)\): for every
\(s\ge0\),
\[
    P\!\left(
        \left\|\sum_{i\in I}\mB_i\right\|_{\mathrm{op}}
        >
        C_0\{\sqrt{n_IL(a+s)}+L(a+s)\}
    \right)
    \le
    e^{-s}.
\]
Let \(W\sim\operatorname{Exp}(1)\) be independent of the data; this gives the
stochastic domination
\(
    \|\sum_{i\in I}\mB_i\|_{\mathrm{op}}
    \preceq_{\mathrm{st}}
    C_0\{\sqrt{n_IL(a+W)}+L(a+W)\}
\).
Since \((\sqrt p+q)^2\le2p+2q^2\) and \(\E(W)=1\), \(\E(W^2)=2\),
\[
    \E\left\|\sum_{i\in I}\mB_i\right\|_{\mathrm{op}}^2
    \le
    2C_0^2\{n_IL(a+1)+L^2(a^2+2a+2)\}
    \le
    C\{n_ILa+L^2a^2\},
\]
using \(a\ge\log(2e)>1\) in the last step. Since \(a=\log(2er)\le2\log(er)\)
for \(r\ge1\), this proves part~(b).
\end{proof}

\subsubsection{Proof of Lemma~\ref{lem:same-sample-leverage-product}}
\label{app:proof-same-sample-product}

This lemma is a same-sample version of a standard decoupling estimate for
second-order \(U\)-statistic-type sums; see, for example, the decoupling theory
for \(U\)-statistics and \(U\)-processes in \citet[Chapter~3]{de-la-Pena1999-bt}.
We give a self-contained proof because only this elementary split-sample form is
needed here.

\begin{proof}
Write
\[
    \mB_t:=\vx_t\vx_t^\top-\mI_r.
\]
We view \(\bar\mB_m\otimes \bar\vb_m\) as an element of the injective tensor
product with norm
\[
    \|\mT\|_{\vee}
    :=
    \sup_{\|a\|_2=\|b\|_2=\|c\|_2=1}
    |\mT(a,b,c)|.
\]
For a rank-one tensor \(\mB\otimes\vb\), this norm is
\[
    \|\mB\otimes\vb\|_{\vee}
    =
    \|\mB\|_{\mathrm{op}}\|\vb\|_2.
\]
Hence the left-hand side of \eqref{eq:same-sample-product} equals
\[
    \E\|\bar\mB_m\otimes\bar\vb_m\|_{\vee}^2.
\]
Decompose
\[
    \bar\mB_m\otimes\bar\vb_m
    =
    \frac1{m^2}
    \sum_{i\ne j}\mB_i\otimes\vb_j
    +
    \frac1{m^2}
    \sum_{i=1}^m\mB_i\otimes\vb_i
    =:\mU_m+\mD_m.
\]

\paragraph{Step 1: Control the off-diagonal term.}
Let \(\delta_1,\dots,\delta_m\) be iid \(\operatorname{Bernoulli}(1/2)\),
independent of the data, and define
\[
    \mU_m(\delta)
    :=
    \frac1{m^2}
    \sum_{i\ne j}
    \delta_i(1-\delta_j)\mB_i\otimes\vb_j.
\]
Since \(\E_\delta[\delta_i(1-\delta_j)]=1/4\) for \(i\ne j\), conditional on
the data,
\[
    \mU_m=4\E_\delta\{\mU_m(\delta)\}.
\]
Jensen's inequality therefore gives
\[
    \E\|\mU_m\|_{\vee}^2
    \le
    16\E\E_\delta\|\mU_m(\delta)\|_{\vee}^2.
\]
For a fixed split, let \(I=\{i:\delta_i=1\}\) and \(J=\{j:\delta_j=0\}\).
Then \(I\cap J=\varnothing\), so
\[
    \mU_m(\delta)
    =
    \frac1{m^2}
    \left(\sum_{i\in I}\mB_i\right)
    \otimes
    \left(\sum_{j\in J}\vb_j\right).
\]
The two sums involve disjoint observations and are therefore independent.
Thus, for every fixed split,
\[
\begin{aligned}
    \E\|\mU_m(\delta)\|_{\vee}^2
    &=
    \frac1{m^4}
    \E\left\|\sum_{i\in I}\mB_i\right\|_{\mathrm{op}}^2
    \E\left\|\sum_{j\in J}\vb_j\right\|_2^2 .
\end{aligned}
\]
Because \(\E\vb=0\),
\[
    \E\left\|\sum_{j\in J}\vb_j\right\|_2^2
    =
    |J|M_{\vb}
    \le
    mM_{\vb}.
\]
Lemma~\ref{lem:isotropic-row-concentration}(b), applied to the deterministic
subset \(I\), gives
\[
    \E\left\|\sum_{i\in I}\mB_i\right\|_{\mathrm{op}}^2
    \le
    C
    \left\{
        |I|L\log(e r)
        +
        L^2\log^2(e r)
    \right\}
    \le
    C
    \left\{
        mL\log(e r)
        +
        L^2\log^2(e r)
    \right\}.
\]
Consequently, uniformly over the split,
\[
    \E\|\mU_m(\delta)\|_{\vee}^2
    \le
    C
    \left\{
        \frac{L\log(e r)}{m^2}
        +
        \frac{L^2\log^2(e r)}{m^3}
    \right\}
    M_{\vb}.
\]
Averaging over \(\delta\) gives the same bound for \(\E\|\mU_m\|_\vee^2\).

\paragraph{Step 2: Control the diagonal term.}
We next bound \(\mD_m\). First,
\[
    \mD_m
    =
    \frac1m\E(\mB\otimes\vb)
    +
    \frac1{m^2}
    \sum_{i=1}^m
    \left\{
        \mB_i\otimes\vb_i-\E(\mB\otimes\vb)
    \right\}.
\]
For the mean term, for any unit \(a,b,c\),
\[
\begin{aligned}
    \left|
        \E\left[
            (a^\top\mB b)(c^\top\vb)
        \right]
    \right|
    &=
    \left|
        \E\left[
            (a^\top\vx)(b^\top\vx)(c^\top\vb)
        \right]
    \right|
    \\
    &\le
    \left\{
        \E[(a^\top\vx)^2(b^\top\vx)^2]
    \right\}^{1/2}
    \left\{
        \E[(c^\top\vb)^2]
    \right\}^{1/2}
    \le
    (LM_{\vb})^{1/2},
\end{aligned}
\]
where the term involving \(\mI_r\) vanishes because \(\E\vb=0\), and the last
inequality bounds the two factors separately: \((b^\top\vx)^2\le\|\vx\|_2^2\le
L\) and \(\E[(a^\top\vx)^2]=a^\top\E(\vx\vx^\top)a=1\) give
\(\E[(a^\top\vx)^2(b^\top\vx)^2]\le L\), while \(\E[(c^\top\vb)^2]\le\E\|\vb\|_2^2=M_{\vb}\)
since \(c\) is a unit vector. Therefore
\[
    \left\|
        \frac1m\E(\mB\otimes\vb)
    \right\|_{\vee}^2
    \le
    \frac{L M_{\vb}}{m^2}.
\]
For the centered diagonal fluctuation, a direct second-moment bound is enough.
Use \(\|\cdot\|_{\vee}\le \|\cdot\|_F\). Since
\(\E\|\vx\|_2^2=r\) and \(\|\vx\|_2^2\le L\), we have \(r\le L\). Hence
\[
    \|\mB\|_F
    =
    \|\vx\vx^\top-\mI_r\|_F
    \le
    \|\vx\vx^\top\|_F+\|\mI_r\|_F
    \le
    L+\sqrt r
    \le
    C L .
\]
Therefore, using independence and centering in the Hilbert space equipped
with the Frobenius norm,
\[
\begin{aligned}
    \E
    \left\|
        \frac1{m^2}
        \sum_{i=1}^m
        \left\{
            \mB_i\otimes\vb_i-\E(\mB\otimes\vb)
        \right\}
    \right\|_{\vee}^2
    &\le
    \frac1{m^4}
    \E
    \left\|
        \sum_{i=1}^m
        \left\{
            \mB_i\otimes\vb_i-\E(\mB\otimes\vb)
        \right\}
    \right\|_F^2
    \\
    &=
    \frac1{m^3}
    \E
    \left\|
        \mB\otimes\vb-\E(\mB\otimes\vb)
    \right\|_F^2
    \\
    &\le
    \frac1{m^3}
    \E\|\mB\otimes\vb\|_F^2
    =
    \frac1{m^3}
    \E\left(\|\mB\|_F^2\|\vb\|_2^2\right)
    \\
    &\le
    \frac{C L^2 M_{\vb}}{m^3}
    \le
    \frac{
        C L^2\log^2(e r)M_{\vb}
    }{m^3}.
\end{aligned}
\]
Thus
\[
    \E\|\mD_m\|_{\vee}^2
    \le
    C
    \left\{
        \frac{L\log(e r)}{m^2}
        +
        \frac{L^2\log^2(e r)}{m^3}
    \right\}
    M_{\vb},
\]
where we used \(\log(e r)\ge1\).

\paragraph{Step 3: Combine the two pieces.}
Combining the off-diagonal and diagonal bounds proves
\eqref{eq:same-sample-product}.
\end{proof}

\subsection{Proofs of Other Theoretical Results}
\label{app:proof-other-theory}

\subsubsection{Proof of Proposition~\ref{prop:aug-hajek-regular}}

\begin{proof}
First verify that the proposed function belongs to the stated working class.
For \(S\in C_k\), if \(\gamma_{\va,q}\) is not identically zero on \(C_k\),
then the assumption makes \(\gamma_{\va,q}(S)\ne0\) throughout
that cell, and the definition of \(h_{\mathscr C}\) gives
\[
    \gamma_{\va,q}(S)h_{\mathscr C}(S)=\mu_k .
\]
If \(\gamma_{\va,q}\equiv0\) on \(C_k\), then
\(\mu_k=\E_q[\gamma_{\va,q}(S)u(S)\mid S\in C_k]=0\) when
\(\pi_k>0\), and \(\mu_k=0\) by convention when \(\pi_k=0\). In this case
\(\gamma_{\va,q}(S)h_{\mathscr C}(S)=0=\mu_k\). Hence
\[
    \gamma_{\va,q}(S)h_{\mathscr C}(S)
    =
    \sum_{k=1}^K \mu_k\1\{S\in C_k\},
\]
so \(h_{\mathscr C}\in\mathcal H_{\mathscr C,\gamma_{\va,q}}\).

The proof of Theorem~\ref{prop:non-asymptotic-hajek} gives the exact
decomposition
\[
   \hat\tau_{\va, m}^{\mathrm{Hajek}}(u)-\tau_{\va}(u)
    =
    \frac1m\sum_{t=1}^m \psi_{h_{\mathscr C}}(S_t;u)
    +
    r_m
\]
with the remainder bound in \eqref{eq:aug-hajek-finite-remainder}. In the
fixed finite-player setting, \(K_+\), \(\Gamma_{2,\mathscr C}\),
\(\Gamma_\infty\), \(\pi_{\min}\), and \(\|u\|_\infty\) do not vary with
\(m\), and \(\pi_{\min}>0\) by definition over the active cells. Therefore
\[
    \E_q(r_m^2)
    =
    O(m^{-2})+O\{e^{-c m}\}
    =
    o(m^{-1})
\]
for some \(c>0\). This is the regularity condition in
Definition~\ref{def:regular}, relative to
\(\mathcal H_{\mathscr C,\gamma_{\va,q}}\).
\end{proof}

\subsubsection{Proof of Proposition~\ref{prop:rr-regular}}

\begin{proof}
Apply Theorem~\ref{prop:rr-regular-exact} to the scalar target by taking
\(d=1\), \(\mA=\va\), and \(\mC_{\mA}=\vc_{\va}\). The proof of that theorem
constructs the population WLS projection
\[
    h^\star(S)=\vz(S)^\top\vbeta^\star,
\]
where \(\vbeta^\star\) is a minimum-Euclidean-norm population WLS solution, and
establishes the expansion
\[
    \hat{\tau}_{\va, m}^{\mathrm{WLS}, \lambda}(u)-\tau_{\va}(u)
    =
    \frac1m\sum_{t=1}^m
    \psi_{h^\star}(S_t;u)
    +
    r_m .
\]

The function \(h^\star\) is an element of
\(\mathcal H_{\mathrm{WLS}}\), and it is a population WLS projection of \(u\)
onto this feature span. If the population projection is not unique, let
\(h_w(S)=\vz(S)^\top\vbeta_w\) be any population WLS projection. Then
\(\vbeta_w-\vbeta^\star\) lies in the null space of the population Gram matrix,
so
\[
    \sum_{S\subseteq[n]}w(S)
    \{\vz(S)^\top(\vbeta_w-\vbeta^\star)\}^2
    =
    0.
\]
Thus \(h_w(S)=h^\star(S)\) for every \(S\) with \(w(S)>0\). The expression for the weight \(\gamma_{\va,q}\)
derived in the proof of Theorem~\ref{prop:rr-regular-exact},
\[
    \gamma_{\va,q}(S)
    =
    \tilde w(S)\vz(S)^\top\mG^+\vc_{\va},
\]
also implies that \(\gamma_{\va,q}(S)=0\) whenever \(w(S)=0\). Hence
\(\gamma_{\va,q}(S)\{h_w(S)-h^\star(S)\}=0\) for all \(S\) with \(q(S)>0\).
The WLS readout identity gives \(\tau_{\va}(h_w)=\tau_{\va}(h^\star)\).
Consequently,
\[
    \psi_{h_w}(S;u)=\psi_{h^\star}(S;u)
    \qquad q\text{-almost surely}.
\]
The same expansion may therefore be written with \(h_w\).

In the fixed finite-player setting, with the ridge penalty \(\lambda\asymp\sigma_{\min}^+(\mG)\) as in Theorem~\ref{prop:rr-regular-exact}, the displayed bound \eqref{eq:rr-delta-bound} gives
\[
    \E_q(r_m^2)
    =
    O(m^{-2})+O\{m^2 e^{-c m}\}
    =
    o(m^{-1})
\]
for some \(c>0\). Therefore
\(\hat{\tau}_{\va,m}^{\mathrm{WLS},\lambda}\) is regular relative to
\(\mathcal H_{\mathrm{WLS}}\).
\end{proof}

\subsubsection{Proof of Lemma~\ref{lem:regular-mse-vector}}

\begin{proof}
For \(t=1,\dots,m\), define the \(d\)-vector
\[
    \bm{\Psi}_t
    :=
    \bigl(
        \psi_{\va_1,q,h_1}(S_t;u),
        \dots,
        \psi_{\va_d,q,h_d}(S_t;u)
    \bigr)^\top,
\]
and set
\[
    \bar{\bm{\Psi}}_m:=\frac1m\sum_{t=1}^m\bm{\Psi}_t .
\]
Each coordinate has mean zero because
\[
    \E_q\{\psi_{\va_j,q,h_j}(S;u)\}
    =
    \E_q[\gamma_{\va_j,q}(S)\{u(S)-h_j(S)\}]
    +
    \tau_{\va_j}(h_j)
    -
    \tau_{\va_j}(u)
    =
    0.
\]
The coordinatewise expansion \eqref{eq:coordinate-regular-expansion} gives
\[
    \hat{\vtau}_{\mA,m}(u)-\vtau_{\mA}(u)
    =
    \bar{\bm{\Psi}}_m+\vr_m .
\]
Since the samples are iid and each coordinate of \(\bm{\Psi}_t\) is centered,
\[
\begin{aligned}
    \E_q\|\bar{\bm{\Psi}}_m\|_2^2
    &=
    \sum_{j=1}^d
    \E_q\left[
        \left(
            \frac1m\sum_{t=1}^m
            \psi_{\va_j,q,h_j}(S_t;u)
        \right)^2
    \right] \\
    &=
    \frac1m
    \sum_{j=1}^d
    \Var_q[
        \gamma_{\va_j,q}(S)\{u(S)-h_j(S)\}
    ]
    =
    \frac{V(\mA;q,\vh)}{m}.
\end{aligned}
\]
The assumption \(V^*(\mA;q)>0\) ensures that
\(V(\mA;q,\vh)>0\), so the ratio below is well defined.

Using the decomposition
\(\|\bar{\bm{\Psi}}_m+\vr_m\|_2^2-\|\bar{\bm{\Psi}}_m\|_2^2
=2\bar{\bm{\Psi}}_m^\top\vr_m+\|\vr_m\|_2^2\) and Cauchy's inequality,
\[
\begin{aligned}
    \left|
        \E_q\|\bar{\bm{\Psi}}_m+\vr_m\|_2^2
        -
        \E_q\|\bar{\bm{\Psi}}_m\|_2^2
    \right|
    &\le
    2
    \left\{\E_q\|\bar{\bm{\Psi}}_m\|_2^2\right\}^{1/2}
    \left\{\E_q\|\vr_m\|_2^2\right\}^{1/2}
    +
    \E_q\|\vr_m\|_2^2 .
\end{aligned}
\]
Dividing both sides by \(V(\mA;q,\vh)/m\) yields
\eqref{eq:mse-ratio-bound}.

Finally, if
\[
    \E_q\|\vr_m\|_2^2
    \le
    C\epsilon^2\frac{V(\mA;q,\vh)}{m}
\]
with a sufficiently small universal constant \(C\), then the right-hand side of
\eqref{eq:mse-ratio-bound} is at most \(\epsilon\) for every
\(0<\epsilon\le1\). This proves the stated consequence.
\end{proof}

\section{Details for Non-Asymptotic Results} \label{app:non-asymptotic-details}

\subsection{Details for H\'ajek-Type Weighted-Average Estimators} \label{app:non-asymptotic-hajek}

We spell out the specialization of Theorem~\ref{prop:non-asymptotic-hajek} to the Shapley value of a fixed player \(i\). For the Shapley value, the weighted-average coefficient in \eqref{eq:tau-a-linear} is
\begin{equation}
    \rho_i(S)
    =
    \begin{cases}
        \displaystyle
        \frac{1}{|S|\binom{n}{|S|}},
        & i\in S,\ 1\le |S|\le n,\\[2ex]
        \displaystyle
        -\frac{1}{(n-|S|)\binom{n}{|S|}},
        & i\notin S,\ 0\le |S|\le n-1.
    \end{cases}
    \label{eq:app-shapley-rho}
\end{equation}
The endpoint coalitions \(\varnothing\) and \([n]\) are deterministic in the usual Shapley-value implementations, and their contribution \(\{u([n])-u(\varnothing)\}/n\) can be added without sampling error. Hence the rate calculations below concern the random part supported on the interior strata \(1\le |S|\le n-1\).

Both OFA and Stratified SVARM use self-normalization over cells indexed by membership of player \(i\) and coalition size. For \(s=1,\dots,n-1\), define
\[
    C_{i,s}^{+}
    :=
    \{S\subseteq[n]: i\in S,\ |S|=s\},
    \qquad
    C_{i,s}^{-}
    :=
    \{S\subseteq[n]: i\notin S,\ |S|=s\}.
\]
Cells that are empty, such as \(C_{i,n}^{-}\) and \(C_{i,0}^{+}\), are ignored. If the sampling law first draws a size \(s\) with probability \(q_s\) and then draws uniformly among coalitions of that size, then
\[
    q(S)=\frac{q_{|S|}}{\binom{n}{|S|}},
    \qquad
    \pi_{i,s}^{+}:=\PP_q(S\in C_{i,s}^{+})=\frac{s}{n}q_s,
    \qquad
    \pi_{i,s}^{-}:=\PP_q(S\in C_{i,s}^{-})=\frac{n-s}{n}q_s.
\]
For estimating \(\evphi_i\), the weight \(\gamma_{i,q}(S):=\rho_i(S)/q(S)\) is constant on each membership-size cell:
\begin{equation}
    \gamma_{i,q}(S)
    =
    \begin{cases}
        \displaystyle
        \frac{1}{s q_s},
        & S\in C_{i,s}^{+},\\[2ex]
        \displaystyle
        -\frac{1}{(n-s)q_s},
        & S\in C_{i,s}^{-}.
    \end{cases}
    \label{eq:app-hajek-shapley-gamma}
\end{equation}
Thus the condition in Proposition~\ref{prop:aug-hajek-regular} holds for the membership-size partition.

\begin{proof}[Proof of Corollary~\ref{corollary:hajek-ofa-svarm-rates}]
Let
\[
    K_+=|\{C\in\mathscr C:\PP_q(S\in C)>0\}|.
\]
For the membership-size partition above, \(K_+=2(n-1)=O(n)\). We bound the two design quantities in Theorem~\ref{prop:non-asymptotic-hajek},
\[
    \Gamma_{2,\mathscr C}^2
    =
    \frac1{K_+}\sum_{C\in\mathscr C:\PP_q(S\in C)>0}
    \E_q[\gamma_{i,q}(S)^2\mid S\in C],
    \qquad
    \Gamma_\infty
    =
    \sup_{S:q(S)>0}|\gamma_{i,q}(S)|.
\]
Because \(\gamma_{i,q}\) is constant on each cell, the conditional expectations are just the squared cell values in \eqref{eq:app-hajek-shapley-gamma}.

\paragraph{OFA.}
For Shapley-value estimation, the OFA size distribution is
\[
    q_s^{\mathrm{OFA}}
    =
    \frac{\{s(n-s)\}^{-1/2}}
    {\sum_{r=1}^{n-1}\{r(n-r)\}^{-1/2}},
    \qquad s=1,\dots,n-1.
\]
The normalizing denominator is \(\Theta(1)\). Substituting this \(q_s\) into \eqref{eq:app-hajek-shapley-gamma} gives
\[
    |\gamma_{i,q}(S)|^2
    \asymp
    \begin{cases}
        (n-s)/s, & S\in C_{i,s}^{+},\\
        s/(n-s), & S\in C_{i,s}^{-}.
    \end{cases}
\]
Therefore
\[
\begin{aligned}
    \Gamma_{2,\mathscr C}^2
    &\lesssim
    \frac1n
    \sum_{s=1}^{n-1}
    \left\{
        \frac{n-s}{s}
        +
        \frac{s}{n-s}
    \right\}
    =
    O(\log n),
    \\
    \Gamma_\infty^2
    &=
    O(n).
\end{aligned}
\]
Moreover,
\[
    \pi_{\min}
    =
    \min_{1\le s\le n-1}
    \left\{
        \frac{s}{n}q_s^{\mathrm{OFA}},
        \frac{n-s}{n}q_s^{\mathrm{OFA}}
    \right\}
    =
    \Theta(n^{-3/2}).
\]
Applying Theorem~\ref{prop:non-asymptotic-hajek} yields
\[
    \E_q(r_m^2)
    \le
    C_1\|u\|_\infty^2
    \left\{
        \frac{n\log n}{m^2}
        +
        n e^{-C_2m/n^{3/2}}
    \right\},
\]
after absorbing constants into \(C_1,C_2\).

\paragraph{Stratified SVARM.}
Stratified SVARM uses the harmonic size allocation
\[
    q_s^{\mathrm{SVARM}}
    =
    \frac{\{\min(s,n-s)\}^{-1}}
    {\sum_{r=1}^{n-1}\{\min(r,n-r)\}^{-1}},
    \qquad s=1,\dots,n-1.
\]
The denominator is \(\Theta(\log n)\), so
\[
    q_s^{\mathrm{SVARM}}
    \asymp
    \frac{1}{\min(s,n-s)\log n}.
\]
Using \eqref{eq:app-hajek-shapley-gamma},
\[
    |\gamma_{i,q}(S)|
    \lesssim
    \log n
\]
on both \(C_{i,s}^{+}\) and \(C_{i,s}^{-}\). Hence
\[
    \Gamma_{2,\mathscr C}^2=O(\log^2 n),
    \qquad
    \Gamma_\infty^2=O(\log^2 n).
\]
The smallest active cell probability satisfies
\[
    \pi_{\min}
    =
    \min_{1\le s\le n-1}
    \left\{
        \frac{s}{n}q_s^{\mathrm{SVARM}},
        \frac{n-s}{n}q_s^{\mathrm{SVARM}}
    \right\}
    =
    \Theta((n\log n)^{-1}).
\]
Theorem~\ref{prop:non-asymptotic-hajek} therefore gives
\[
    \E_q(r_m^2)
    \le
    C_1\|u\|_\infty^2
    \left\{
        \frac{n\log^2 n}{m^2}
        +
        \log^2 n\, e^{-C_2m/(n\log n)}
    \right\}.
\]
This proves the two displayed bounds in Corollary~\ref{corollary:hajek-ofa-svarm-rates}.
\end{proof}

\subsection{Details for Ridge-Penalized WLS Estimators} \label{app:non-asymptotic-rr}

We next instantiate Theorem~\ref{prop:rr-regular-exact} for the WLS geometry used by KernelSHAP and LeverageSHAP. Let
\[
    \mA:=\mI-\frac1n\vone\vone^\top
\]
be the projection onto the sum-zero subspace, as in Corollary~\ref{corollary:ls-kernel-leverage-rates}. For \(1\le |S|\le n-1\), define the centered coalition feature
\[
    \vz(S)
    :=
    \bigl(\1\{1\in S\},\dots,\1\{n\in S\}\bigr)^\top
    -
    \frac{|S|}{n}\vone
    \in\R^n
\]
and the Shapley kernel weight
\[
    w(S)
    :=
    \frac{n-1}{\binom{n}{|S|}|S|(n-|S|)}.
\]
The population WLS problem with this feature map recovers the centered Shapley vector \(\mA\vphi(u)\); the deterministic efficiency component \(\{u([n])-u(\varnothing)\}\vone/n\) can then be added separately if the full Shapley vector is needed.

For this design,
\begin{equation}
    \mG
    =
    \sum_{1\le |S|\le n-1}
    w(S)\vz(S)\vz(S)^\top
    =
    \frac{n-1}{n}\mA,
    \qquad
    \mG^+
    =
    \frac{n}{n-1}\mA.
    \label{eq:app-kernelshap-gram}
\end{equation}
Indeed, conditional on \(|S|=s\), the centered inclusion vector has covariance
\[
    \frac{s(n-s)}{n(n-1)}\mA,
\]
and summing the weighted contribution over \(s=1,\dots,n-1\) gives \eqref{eq:app-kernelshap-gram}. Consequently, \(\kappa_{\mG}=1\). Also,
\begin{equation}
    \|\vz(S)\|_2^2
    =
    \frac{|S|(n-|S|)}{n},
    \qquad
    \vz(S)^\top\mG^+\vz(S)
    =
    \frac{|S|(n-|S|)}{n-1}.
    \label{eq:app-feature-leverage-core}
\end{equation}
For this target, the readout matrix is \(\mC_{\mA}=\mA\), so \(R_{\mA}^2=\sigma_{\max}(\mA^\top\mG^+\mA)=O(1)\), and the feature dimension is \(d_\vz=n\).

\begin{proof}[Proof of Corollary~\ref{corollary:ls-kernel-leverage-rates}]
We evaluate the quantities in Theorem~\ref{prop:rr-regular-exact} under the two sampling laws.

\paragraph{KernelSHAP.}
KernelSHAP samples coalitions proportionally to the Shapley kernel weight:
\[
    q_{\mathrm{KS}}(S)
    =
    \frac{w(S)}{W_{\mathrm{KS}}},
    \qquad
    W_{\mathrm{KS}}
    :=
    \sum_{1\le |T|\le n-1}w(T)
    =
    \frac{2(n-1)}{n}H_{n-1}
    =
    O(\log n),
\]
where \(H_{n-1}=\sum_{s=1}^{n-1}s^{-1}\). Equivalently,
\[
    q_{\mathrm{KS}}(S)
    =
    \frac{n}{2H_{n-1}}
    \frac{1}{\binom{n}{|S|}|S|(n-|S|)}.
\]
Thus \(\tilde w(S)=w(S)/q_{\mathrm{KS}}(S)=W_{\mathrm{KS}}\). Combining this with \eqref{eq:app-feature-leverage-core},
\[
    \ell(S)
    =
    \tilde w(S)\vz(S)^\top\mG^+\vz(S)
    =
    W_{\mathrm{KS}}
    \frac{|S|(n-|S|)}{n-1},
\]
so
\[
    L_{\mathrm{KS}}
    =
    \sup_S\ell(S)
    =
    O(n\log n).
\]
The weighted norms are bounded by
\[
    \|u\|_{2,\tilde w}^2
    \le
    W_{\mathrm{KS}}\|u\|_\infty^2
    =
    O(\log n)\|u\|_\infty^2,
    \qquad
    \|u\|_{\infty,\tilde w}^2
    \le
    O(\log n)\|u\|_\infty^2.
\]
Substituting \(R_{\mA}^2=O(1)\), \(d_\vz=n\), \(\kappa_{\mG}=1\), \(L_{\mathrm{KS}}=O(n\log n)\), and the preceding norm bounds into Theorem~\ref{prop:rr-regular-exact} gives
\[
\begin{aligned}
    \E_q(\|\vr_m\|_2^2)
    &\le
    C_1\|u\|_\infty^2
    \bigg[
        \frac{n^2\log^4 n}{m^2}
        \left(1+\frac{n\log^2 n}{m}\right)
        \\
        &\hspace{6em}
        +
        n^3m^2\log^3 n\,
        e^{-C_2m/(n\log n)}
    \bigg].
\end{aligned}
\]

\paragraph{LeverageSHAP.}
LeverageSHAP samples proportionally to the WLS leverage contribution
\[
    w(S)\vz(S)^\top\mG^+\vz(S).
\]
By \eqref{eq:app-feature-leverage-core},
\[
    w(S)\vz(S)^\top\mG^+\vz(S)
    =
    \frac{1}{\binom{n}{|S|}}.
\]
The total leverage mass is \(n-1=\operatorname{rank}(\mG)\), so
\[
    q_{\mathrm{Lev}}(S)
    =
    \frac{1}{(n-1)\binom{n}{|S|}},
    \qquad
    1\le |S|\le n-1.
\]
Equivalently, LeverageSHAP draws the size uniformly from \(\{1,\dots,n-1\}\) and then draws a coalition uniformly within that size. For this law,
\[
    \tilde w(S)
    =
    \frac{w(S)}{q_{\mathrm{Lev}}(S)}
    =
    \frac{(n-1)^2}{|S|(n-|S|)}.
\]
Using \eqref{eq:app-feature-leverage-core},
\[
    \ell(S)
    =
    \frac{(n-1)^2}{|S|(n-|S|)}
    \frac{|S|(n-|S|)}{n-1}
    =
    n-1,
\]
and hence \(L_{\mathrm{Lev}}=n-1=O(n)\). The weighted norms satisfy
\[
    \|u\|_{2,\tilde w}^2
    =
    \sum_{1\le |S|\le n-1} w(S)u(S)^2
    \le
    W_{\mathrm{KS}}\|u\|_\infty^2
    =
    O(\log n)\|u\|_\infty^2,
\]
and
\[
    \|u\|_{\infty,\tilde w}^2
    \le
    O(n)\|u\|_\infty^2,
\]
because \(|S|(n-|S|)\ge n-1\) on the interior strata. Substituting
\[
    R_{\mA}^2=O(1),\quad
    d_\vz=n,\quad
    \kappa_{\mG}=1,\quad
    L_{\mathrm{Lev}}=O(n)
\]
into Theorem~\ref{prop:rr-regular-exact} gives
\[
    \E_q(\|\vr_m\|_2^2)
    \le
    C_1\|u\|_\infty^2
    \left[
        \frac{n^2\log^2 n}{m^2}
        \left(1+\frac{n\log n}{m}\right)
        +
        n^4m^2 e^{-C_2m/n}
    \right].
\]
This proves Corollary~\ref{corollary:ls-kernel-leverage-rates}.
\end{proof}

\providecommand{\revised}[1]{{\color{darkpurple}#1}} % marks text revised in this round

\section{Details of Algorithms}\label{app:algorithm}

This appendix gives implementation details for the EASE construction in
\Secref{sec:ease}.  We keep the notation of the main paper: the target vector is
\[
    \vtau_{\mA}(u)
    =
    \bigl(
        \tau_{\va_1}(u),\dots,\tau_{\va_d}(u)
    \bigr)^\top,
    \qquad
    \mA=[\va_1,\dots,\va_d].
\]
Define the vector of weighted-average coefficients
\begin{equation}
    \bm{\rho}_{\mA}(S)
    :=
    \bigl(
        \rho_{\va_1}(S),\dots,\rho_{\va_d}(S)
    \bigr)^\top ,
    \qquad S\subseteq[n].
    \label{eq:algorithm-rho-vector}
\end{equation}
Then
\(
    \vtau_{\mA}(u)=\sum_{S\subseteq[n]}\bm{\rho}_{\mA}(S)u(S)
\).
\Secref{sec:ease} leaves the candidate class \(\mathcal Q\) abstract; \Secref{app:candidate-class} gives the partition-based class our implementation uses. \Secref{app:sampling-details} then gives the reference law \(q_0\) used to initialize Algorithm~\ref{alg:vector-aipw-profiled}, and \Secref{app:flooring-learned-q} gives the floored law that our implementation uses as the Stage-2 sampling law \(\hat q\); all three apply to the general vector-surrogate setting \(\vh=(h_1,\dots,h_d)\). \Secref{app:shared-linear-online} specializes further to the single, shared linear surrogate \(h_{\vbeta}(S)=\vx(S)^\top\vbeta\) that EASE-FO and EASE-SP both use in our numerical experiments, differing only in the feature map \(\vx(S)\). \Secref{app:pilot-pooling} describes how the pilot sample is folded into the final cross-fitted estimate, and \Secref{app:complement-pair-algorithm} specializes complement-pair sampling to sign-symmetric targets.

\subsection{Choice of the Candidate Class \(\mathcal Q\)}
\label{app:candidate-class}

Our implementation instantiates \(\mathcal Q\) as a partition-based class: fix a partition \(\mathscr C=\{C_1,\dots,C_K\}\) of the coalition space \(2^{[n]}\), and consider laws that are constant within each cell,
\begin{equation}
    \mathcal Q_{\mathscr C}
    =
    \Bigl\{
        q_{\bm\nu}
        :
        q_{\bm\nu}(S)=\nu_k \text{ for } S\in C_k,\;
        \nu_k\ge0,\;
        \textstyle\sum_{k=1}^K|C_k|\nu_k=1
    \Bigr\}.
    \label{eq:algorithm-partition-class}
\end{equation}
Restricting to \(\mathcal Q_{\mathscr C}\) reduces the search over distributions on \(2^{[n]}\) to a search over the \(K\) within-cell sampling probabilities \((\nu_1,\dots,\nu_K)\), which is what makes the pilot optimization in \eqref{eq:pilot-design-optimization} tractable.

Throughout our numerical experiments, \(\mathscr C\) is the coalition-size partition, \(C_k=\{S\subseteq[n]:|S|=k\}\) for \(k=0,\dots,n\): the sampling probability of a coalition depends on \(S\) only through \(|S|\), so coalitions of the same size share a sampling probability. The next two subsections fix this class and derive the reference law and the flooring rule within it.

\subsection{Initialization for a Fixed Sampling Partition}
\label{app:sampling-details}

Fix the partition \(\mathscr C=\{C_1,\dots,C_K\}\) from \Secref{app:candidate-class}. For a law that is constant within each cell, write \(q(S)=\nu_k\) for \(S\in C_k\), with \(\sum_{k=1}^K |C_k|\nu_k=1\). For a general vector surrogate \(\vh=(h_1,\dots,h_d)\), the cellwise residual scale is
\begin{equation}
    M_k(\vh)
    :=
    \frac{1}{|C_k|}
    \sum_{S\in C_k}
    \sum_{j=1}^d
    \rho_{\va_j}(S)^2
    \{u(S)-h_j(S)\}^2 .
    \label{eq:algorithm-general-cell-moment}
\end{equation}
For fixed \(\vh\), the uncentered version of the first-order MSE criterion \(V(\mA;q,\vh)\)
within this partition is
\[
    \sum_{k=1}^K
    \frac{|C_k|M_k(\vh)}{\nu_k}.
\]
Minimizing this expression over \((\nu_1,\dots,\nu_K)\) gives
\begin{equation}
    q^\star(S;\vh)=\nu_k^\star(\vh),
    \qquad
    \nu_k^\star(\vh)
    =
    \frac{\sqrt{M_k(\vh)}}
    {\sum_{r=1}^K |C_r|\sqrt{M_r(\vh)}} ,
    \qquad S\in C_k .
    \label{eq:algorithm-oracle-partition-q}
\end{equation}
This is a Neyman-type allocation for stratified sampling, with the cells acting as strata~\citep{Neyman1934-wp}.

Before any informative surrogate has been learned, the residual terms in
\eqref{eq:algorithm-general-cell-moment} are unknown.  A simple initialization is
obtained by treating the residuals \(|u(S)-h_j(S)|\) as uniformly bounded by a
common constant, or more generally as having the same order across cells and
coordinates.  Under either interpretation, the unknown common residual scale
drops out of \eqref{eq:algorithm-oracle-partition-q}.  We therefore initialize
from
\begin{equation}
    M_{k,0}
    :=
    \frac{1}{|C_k|}
    \sum_{S\in C_k}
    \sum_{j=1}^d
    \rho_{\va_j}(S)^2,
    \qquad
    q^{\mathrm{init}}(S)
    =
    \nu_k^{\mathrm{init}}
    :=
    \frac{\sqrt{M_{k,0}}}
    {\sum_{r=1}^K |C_r|\sqrt{M_{r,0}}},
    \quad S\in C_k .
    \label{eq:algorithm-init-law}
\end{equation}
Thus the pilot allocation is determined by the target weights and the chosen partition alone. Our implementation takes \(q_0=q^{\mathrm{init}}\) as the reference law in Algorithm~\ref{alg:vector-aipw-profiled}; the same law initializes the pilot optimization and serves as the flooring anchor in \Secref{app:flooring-learned-q}.

\subsection{Flooring the Learned Sampling Law}
\label{app:flooring-learned-q}

Let \(\tilde q\) denote the sampling-law component of a solution to the pilot optimization problem in \eqref{eq:pilot-design-optimization}. In finite samples, \(\tilde q\) can be too aggressive: pilot noise may lead to a very small probability for a target-relevant cell, which in turn inflates the correction weights \(\rho_{\va_j}(S)/q(S)\). For the partition-based class \(\mathcal Q_{\mathscr C}\), we therefore floor \(\tilde q\) toward the initialization law \(q^{\mathrm{init}}\) from \eqref{eq:algorithm-init-law}:
\begin{equation}
    q^{\mathrm{floor}}(S)
    =
    (1-\epsilon)\tilde q(S)
    +
    \epsilon\, q^{\mathrm{init}}(S),
    \qquad S\in C_k,
    \quad \epsilon\in(0,1).
    \label{eq:algorithm-floored-q}
\end{equation}
Every coalition \(S\in C_k\) therefore receives probability at least \(\epsilon\, q^{\mathrm{init}}(S)\) under \(q^{\mathrm{floor}}\). The floor prevents pilot error from producing nearly infinite inverse-probability weights while preserving the learned allocation when \(\epsilon\) is small. Flooring toward \(q^{\mathrm{init}}\), rather than toward a cell-uniform law, keeps the floor itself target-informed: it redistributes mass back toward the same \(\sqrt{M_{k,0}}\)-weighted allocation used to initialize the pilot, rather than toward cells the target weights \(\bm\rho_{\mA}\) may rule out entirely. Our implementation uses \(\epsilon=10^{-8}\) throughout and takes \(\hat q=q^{\mathrm{floor}}\) as the Stage-2 sampling law in Algorithm~\ref{alg:vector-aipw-profiled}.

\subsection{Shared Linear Surrogate Learned On the Fly}
\label{app:shared-linear-online}

We now specialize to a practical regime in which a single surrogate is shared
across all coordinates and is linear in a chosen feature vector:
\[
    h_{\vbeta}(S)=\vx(S)^\top\vbeta,
    \qquad
    \vx(S)\in\mathbb R^p .
\]

Assume throughout this subsection that the sampling law \(q\) is fixed and
positive on the support of \(\bm{\rho}_{\mA}\).  For observations
\(S_t\sim q\), set
\[
    \bm{\omega}_t
    :=
    \frac{\bm{\rho}_{\mA}(S_t)}{q(S_t)},
    \qquad
    a_t:=\|\bm{\omega}_t\|_2^2,
    \qquad
    \vx_t:=\vx(S_t).
\]
Given the first \(t\) observations, the profiled shared-linear objective is
\begin{equation}
    L_t(\vbeta,\vmu)
    :=
    \sum_{s=1}^t
    \left\|
        \bm{\omega}_s
        \{Y_s-\vx_s^\top\vbeta\}
        -
        \vmu
    \right\|_2^2
    +
    \lambda\|\vbeta\|_2^2,
    \qquad \lambda>0 .
    \label{eq:algorithm-shared-linear-loss}
\end{equation}
This is the empirical profiled criterion from
\eqref{eq:empirical-profiled-training} specialized to a shared linear surrogate
under a fixed sampling law.

The minimizer can be updated from four additive sufficient statistics:
\begin{align}
    \mR_t
    &:=
    \sum_{s=1}^t
    a_s\vx_s\vx_s^\top,
    &
    \vd_t
    &:=
    \sum_{s=1}^t
    a_s\vx_sY_s,
    \label{eq:algorithm-online-stats-1}
    \\
    \mU_t
    &:=
    \sum_{s=1}^t
    \bm{\omega}_s\vx_s^\top,
    &
    \vv_t
    &:=
    \sum_{s=1}^t
    \bm{\omega}_sY_s .
    \label{eq:algorithm-online-stats-2}
\end{align}
Profiling out \(\vmu\) gives
\begin{equation}
    \hat\vbeta_t
    =
    \left(
        \mR_t
        -
        \frac{1}{t}\mU_t^\top\mU_t
        +
        \lambda\mI_p
    \right)^{-1}
    \left(
        \vd_t
        -
        \frac{1}{t}\mU_t^\top\vv_t
    \right),
    \label{eq:algorithm-online-beta}
\end{equation}
and
\[
    \hat\vmu_t
    =
    \frac{1}{t}
    \bigl(
        \vv_t-\mU_t\hat\vbeta_t
    \bigr).
\]
Each statistic in
\eqref{eq:algorithm-online-stats-1}--\eqref{eq:algorithm-online-stats-2}
updates by addition when a new utility value is observed.  Hence the shared
linear surrogate can be updated on the fly without resolving the full least
squares problem from scratch.

This is the surrogate class used by both EASE variants in our numerical experiments: EASE-FO takes \(\vx(S)\) to be an intercept, the player indicators \(\mathbb I(i\in S)\), and the coalition-size terms \(\log(1+|S|)\) and \((|S|/n)^2\); EASE-SP takes \(\vx(S)\) to be the player-by-coalition-size indicators \(\mathbb I(|S|=s,i\in S)\). The two differ only in this feature map, not in how the surrogate is fit.

\subsection{Incorporating the Pilot Sample into the Final Estimate}
\label{app:pilot-pooling}

Algorithm~\ref{alg:vector-aipw-profiled} as stated draws an independent estimation sample of size \(m\) from \(\hat q\) for Stage~2 and cross-fits only on those \(m\) observations. Our implementation instead pools the \(m_0\) pilot observations \(\{(S_t^{(0)},Y_t^{(0)})\}_{t=1}^{m_0}\), drawn from \(q_0\), together with the \(m\) Stage-2 observations \(\{(S_t,Y_t)\}_{t=1}^{m}\), drawn from \(\hat q\), into a single pooled sample of size \(m_0+m\), and uses all of it in the final cross-fitted estimate. Write \(q_t=q_0\) for the pilot indices and \(q_t=\hat q\) for the Stage-2 indices, so the pooled sample is \(\{(S_t,Y_t,q_t)\}_{t=1}^{m_0+m}\). The fold partition \(\{I_b\}_{b=1}^B\) of \(\{1,\dots,m_0+m\}\) is drawn over these pooled indices, so pilot observations, like Stage-2 observations, serve as training data for some folds and as held-out data for others. On each training fold \(\mathcal T_b=\{1,\dots,m_0+m\}\setminus I_b\), EASE solves
\begin{equation}
    (\hat\vh^{(-b)},\hat\vmu^{(-b)})
    \in
    \argmin_{\vh\in\mathcal H,\,\vmu\in\R^d}
    \sum_{t\in\mathcal T_b}
    \sum_{j=1}^d
    \left[
        \frac{\rho_{\va_j}(S_t)}{q_t(S_t)}
        \{Y_t-h_j(S_t)\}
        -
        \evmu_j
    \right]^2,
    \label{eq:algorithm-pooled-training}
\end{equation}
and, for each \(j=1,\dots,d\), sets
\begin{equation}
    \hat\tau_{\va_j}^{(b)}
    =
    \tau_{\va_j}(\hat h_j^{(-b)})
    +
    \frac{1}{|I_b|}
    \sum_{t\in I_b}
    \frac{\rho_{\va_j}(S_t)}{q_t(S_t)}
    \{Y_t-\hat h_j^{(-b)}(S_t)\}.
    \label{eq:algorithm-pooled-fold-estimate}
\end{equation}
The pooled cross-fitted estimator is
\begin{equation}
    \hat\vtau_{\mA}
    =
    \sum_{b=1}^B
    \frac{|I_b|}{m_0+m}
    \bigl(
        \hat\tau_{\va_1}^{(b)},\dots,\hat\tau_{\va_d}^{(b)}
    \bigr)^\top,
    \label{eq:algorithm-pooled-estimator}
\end{equation}
the same combination rule as in Algorithm~\ref{alg:vector-aipw-profiled}, but with \(m\) replaced by \(m_0+m\) and \(\hat q(S_t)\) replaced by \(q_t(S_t)\). Because each observation retains the law under which it was actually drawn, \(q_t\), rather than being reweighted by a single \(\hat q\), every term in \eqref{eq:algorithm-pooled-training}--\eqref{eq:algorithm-pooled-estimator} is unbiased for its own coordinate of \(\vtau_{\mA}(u)\) regardless of which stage generated it. Pooling the two samples is therefore free of bias; it only affects variance, and typically reduces it by using \(m_0+m\) rather than \(m\) observations in the final cross-fit.

\subsection{Complement-Pair Sampling}
\label{app:complement-pair-algorithm}

Complement-pair sampling observes \(u(S)\) and \(u(S^c)\) together.  In this subsection, we only
consider the common sign-symmetric case in which, for every target coordinate
and every coalition,
\begin{equation}
    \rho_{\va_j}(S^c)=-\rho_{\va_j}(S),
    \qquad j=1,\dots,d.
    \label{eq:algorithm-complement-antisymmetry}
\end{equation}
With the sign convention in \eqref{eq:tau-a-linear}, this is the relation
satisfied by the Shapley values, Beta Shapley values with symmetric weights, and
Banzhaf values. In practice, complement pair sampling is a natural choice for sign-symmetric targets~\citep{Witter2025-gz}. 

Under \eqref{eq:algorithm-complement-antisymmetry}, the target can be written as
a sum over complement contrasts:
\[
    \vtau_{\mA}(u)
    =
    \frac{1}{2}
    \sum_{S\subseteq[n]}
    \bm{\rho}_{\mA}(S)
    \{u(S)-u(S^c)\}.
\]
Suppose an ordered coalition \(S\) is drawn from a law \(q\), but the
observation reveals the pair \((S,S^c)\).  The unordered pair mass is
\[
    q_{\mathrm{pair}}(S)
    :=
    q(S)+q(S^c),
\]
which is the same for \(S\) and \(S^c\).  For a general vector surrogate
\(\vh=(h_1,\dots,h_d)\), define the coordinatewise residual contrast
\[
    \Delta_{\vh,j}(S)
    :=
    \{u(S)-h_j(S)\}
    -
    \{u(S^c)-h_j(S^c)\},
    \qquad j=1,\dots,d,
\]
and let
\(
    \bm{\Delta}_{\vh}(S)
    =
    (\Delta_{\vh,1}(S),\dots,\Delta_{\vh,d}(S))^\top
\).
Then the paired AIPW term
\begin{equation}
    \widehat{\vtau}^{\mathrm{pair}}(S;\vh)
    :=
    \vtau_{\mA}(\vh)
    +
    \frac{
        \bm{\rho}_{\mA}(S)\odot\bm{\Delta}_{\vh}(S)
    }{
        q_{\mathrm{pair}}(S)
    }
    \label{eq:algorithm-complement-score}
\end{equation}
is unbiased when \(S\sim q\), provided \(q_{\mathrm{pair}}(S)>0\) whenever
\(\bm{\rho}_{\mA}(S)\neq\vzero\).  Indeed, the two ordered representatives
\(S\) and \(S^c\) of the same pair contribute the same contrast because
\(\bm{\rho}_{\mA}(S^c)=-\bm{\rho}_{\mA}(S)\) and
\(\bm{\Delta}_{\vh}(S^c)=-\bm{\Delta}_{\vh}(S)\).

The same residual-aware sampling idea can be applied at the pair level.  For a
partition \(\mathscr P=\{P_1,\dots,P_L\}\) of complement pairs, define
\[
    M_\ell^{\mathrm{pair}}(\vh)
    :=
    \frac{1}{|P_\ell|}
    \sum_{\{S,S^c\}\in P_\ell}
    \left\|
        \bm{\rho}_{\mA}(S)\odot\bm{\Delta}_{\vh}(S)
    \right\|_2^2 .
\]
The corresponding plug-in allocation assigns more pair mass to cells with large
estimated residual contrasts.  As in the single-coalition case, a floor should
be added to the learned pair law to keep \(q_{\mathrm{pair}}(S)\) bounded away
from zero on active pairs.

\section{Additional Experimental Details and Results}
\label{app:additional-experiments}

\subsection{SOU Games: Closed-Form Values and Construction of \(\mathscr S_{\mathrm{high}}\)}
\label{app:sou-closed-form}

\paragraph{Closed-form probabilistic values.} For a sum-of-unanimity game \(u(S)=\sum_{T\in\mathscr S}\omega_T\mathbb I(T\subseteq S)\), the probabilistic value of a unanimity game \(u_T\) is well known to depend on \(T\) only through \(|T|\) and whether \(i\in T\): \(\phi_i(u_T)=c_{|T|}\) if \(i\in T\) and \(0\) otherwise, for a constant \(c_s\) that depends on the target probabilistic value. By linearity, \(\phi_i(u)=\sum_{T\in\mathscr S:\,i\in T}\omega_T c_{|T|}\), which is what we use as ground truth. The coefficients \(c_s\) for the targets considered in this paper are
\[
    c_s=
    \begin{cases}
        1/s, & \text{Shapley value},\\[2pt]
        p^{\,s-1}, & \text{weighted Banzhaf value with parameter } p,\\[2pt]
        B(\alpha,\beta+s-1)/B(\alpha,\beta), & \text{Beta Shapley value with parameters } (\alpha,\beta),
    \end{cases}
\]
where \(B(\cdot,\cdot)\) is the Beta function.

\paragraph{Generating \(\mathscr S_{\mathrm{high}}\).} Each of the \(n^2\) coalitions in \(\mathscr S_{\mathrm{high}}\) is generated independently: its cardinality is drawn uniformly from \(\{3,\dots,\min(40,n-1)\}\), and, given the cardinality, its members are drawn uniformly without replacement from \([n]\). Duplicate coalitions across draws are permitted; because each draw also receives an independent weight, a repeated coalition simply contributes the sum of its independent weights. The coefficient variance \(\sigma^2\) defaults to \(|\mathscr S_{\mathrm{low}}|+|\mathscr S_{\mathrm{high}}|\), which is the value used throughout.

\subsection{Additional AUCC Results for the SOU Benchmark}
\label{app:additional-aucc-sou}

The main text reports the SOU AUCC benchmark (Figure~\ref{fig:sou-aucc-benchmark}) for \(\eta=0.25\) and \(n=40\). This subsection extends the comparison along two axes: larger \(\eta\) at \(n=40\), and larger \(n\) across all three \(\eta\) values.

\paragraph{Larger \(\eta\).}
Figure~\ref{fig:sou-aucc-benchmark-additional} reports \(\eta\in\{0.5,0.75\}\) at \(n=40\). The qualitative pattern matches the \(\eta=0.25\) benchmark in Figure~\ref{fig:sou-aucc-benchmark}: EASE-FO and EASE-SP remain among the most sample-efficient estimators, with EASE-SP typically strongest for the Shapley and Beta-Shapley targets. Increasing \(\eta\) from \(0.5\) to \(0.75\) changes the scale of the errors but not the ranking.

\begin{figure}[!htbp]
    \centering
    \includegraphics[width=\linewidth]{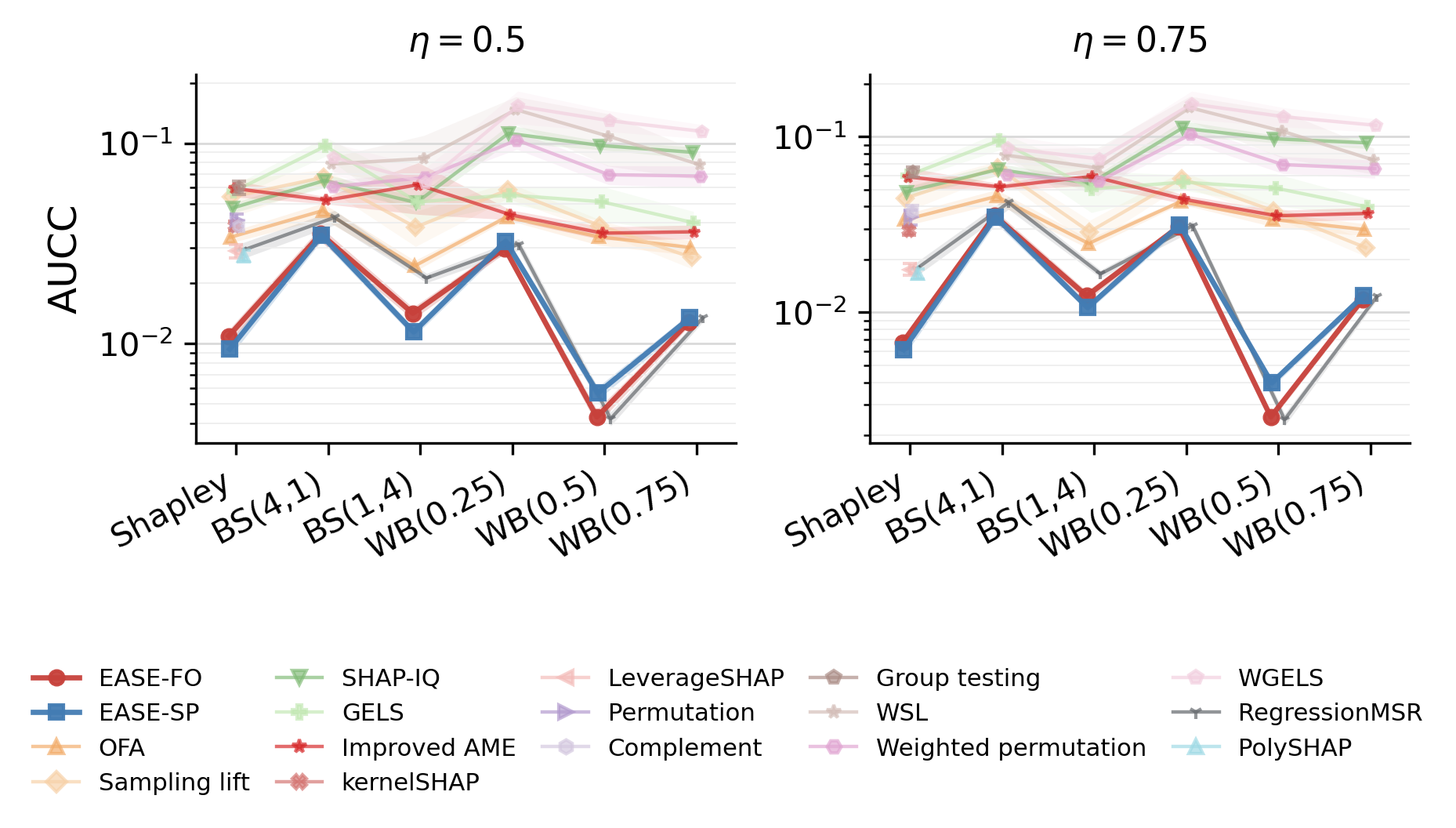}
    \caption{AUCC benchmark on the SOU games for \(\eta = 0.5\) and \(\eta = 0.75\) at \(n=40\). Each x-axis label is a target probabilistic value. Lower AUCC indicates better performance.}
    \label{fig:sou-aucc-benchmark-additional}
\end{figure}

\paragraph{Larger \(n\).}
Figures~\ref{fig:sou-aucc-benchmark-n80} and \ref{fig:sou-aucc-benchmark-n160} report the same benchmark at \(n=80\) and \(n=160\), across all three \(\eta\) values, using the same total budget of \(160{,}000\) utility evaluations as \(n=40\). Because this budget is not rescaled with \(n\), the average number of evaluations per player falls from \(4000\) at \(n=40\) to \(2000\) at \(n=80\) and \(1000\) at \(n=160\). The EASE variants remain at or near the lowest AUCC across all \((n,\eta)\) combinations, and for the Shapley and both Beta-Shapley targets, EASE-FO and EASE-SP are visibly ahead of every baseline. A second pattern emerges as \(n\) grows: EASE-SP, nearly indistinguishable from EASE-FO at \(n=40\), falls increasingly behind it at \(n=80\) and \(n=160\). This is consistent with a bias-variance trade-off in the surrogate class: EASE-SP's player-by-coalition-size surrogate has \(O(n^2)\) parameters, against \(O(n)\) for EASE-FO, so as the per-player budget shrinks with growing \(n\), the richer surrogate becomes correspondingly harder to fit well.

\begin{figure}[!htbp]
    \centering
    \includegraphics[width=\linewidth]{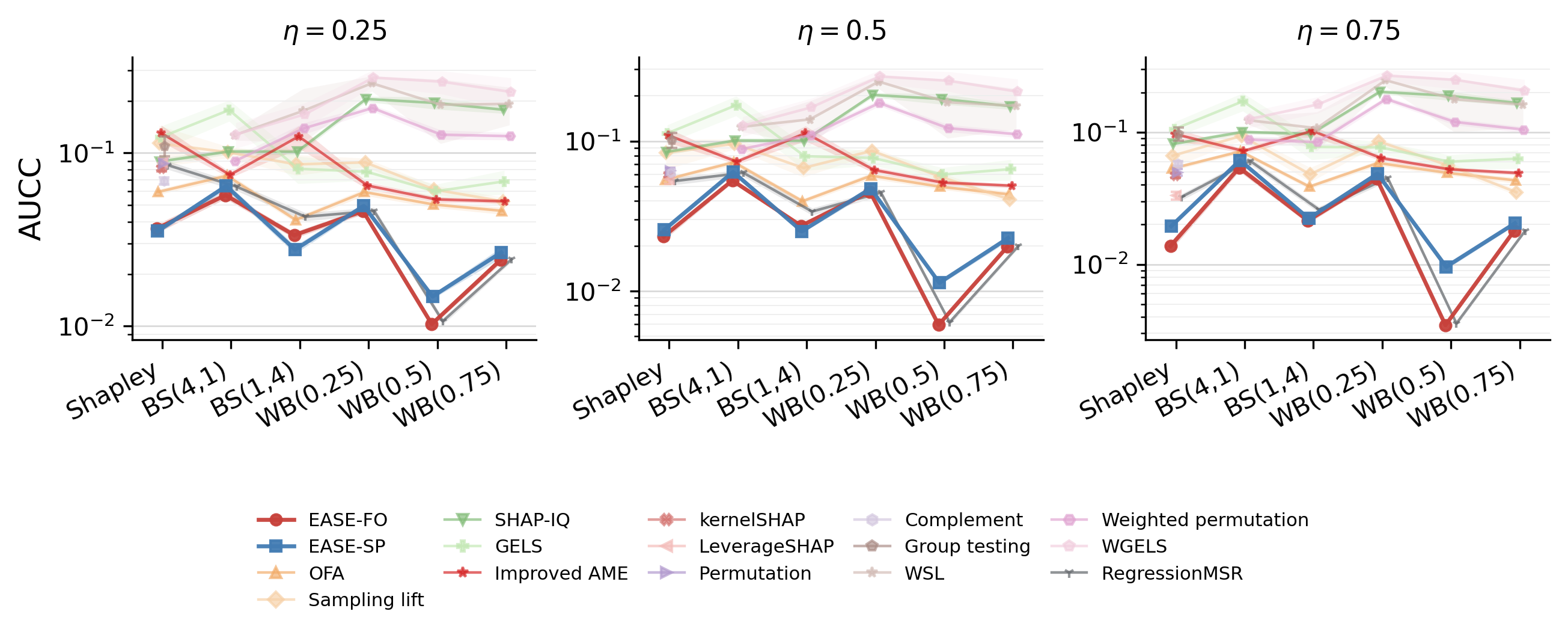}
    \caption{AUCC benchmark on the SOU games for \(n=80\), across \(\eta\in\{0.25,0.5,0.75\}\). Each x-axis label is a target probabilistic value. Lower AUCC indicates better performance.}
    \label{fig:sou-aucc-benchmark-n80}
\end{figure}

\begin{figure}[!htbp]
    \centering
    \includegraphics[width=\linewidth]{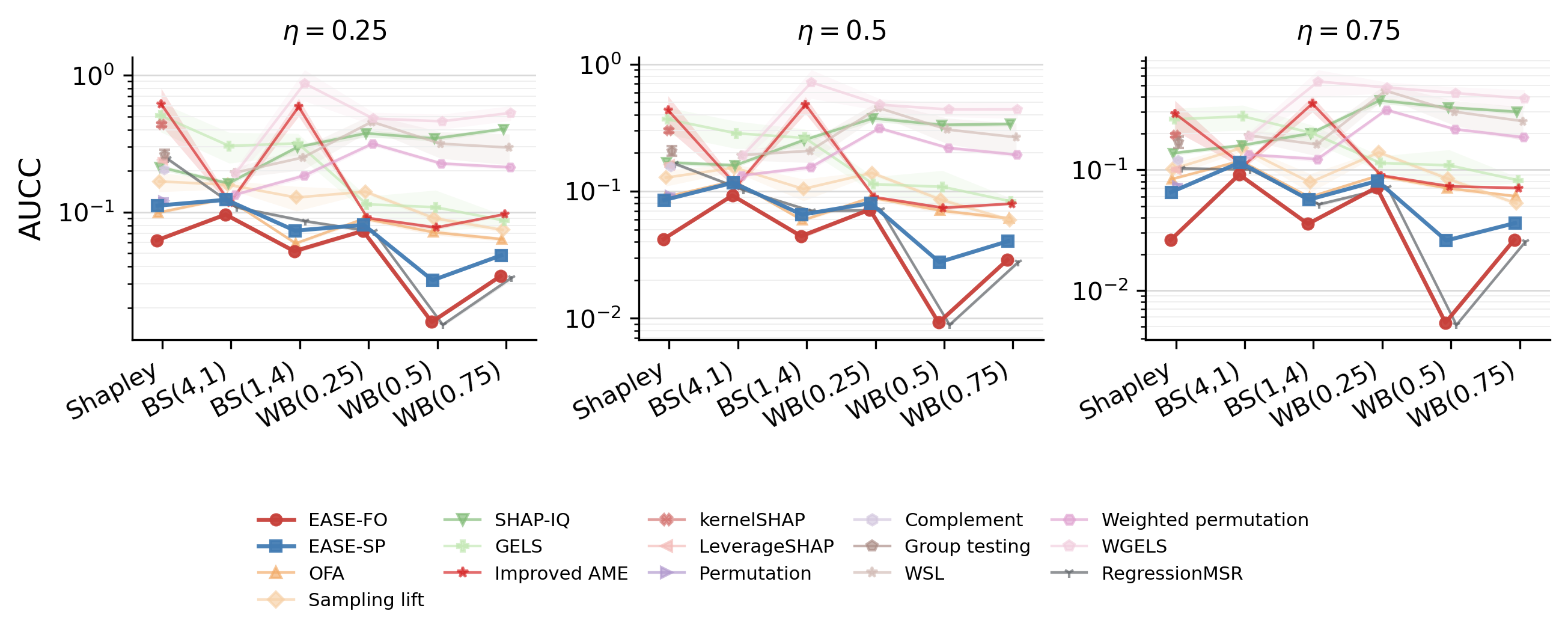}
    \caption{AUCC benchmark on the SOU games for \(n=160\), across \(\eta\in\{0.25,0.5,0.75\}\). Each x-axis label is a target probabilistic value. Lower AUCC indicates better performance.}
    \label{fig:sou-aucc-benchmark-n160}
\end{figure}

\subsection{Sensitivity to the Pilot Fraction}
\label{app:pilot-fraction-sensitivity}

The main text allocates \(20\%\) of the utility-evaluation budget to the pilot stage. Figure~\ref{fig:pilot-fraction-sensitivity} sweeps the pilot fraction over \(\{0.05,0.10,0.20,0.40\}\) for EASE-FO on the four real-world feature-attribution benchmarks (Shapley value), holding the total budget and all other settings fixed at the main-text protocol. Final performance is not sensitive to the pilot fraction once it is at least \(10\%\): across \(\{0.10,0.20,0.40\}\), the relative \(L_2\) error at the largest budget varies by no more than about \(9\%\) on any dataset. Only the smallest fraction, \(5\%\), is consistently worse, trailing the best of the other three by \(10\)--\(17\%\). The \(20\%\) default used throughout the main text therefore sits well within this insensitive range.

\begin{figure}[!htbp]
    \centering
    \includegraphics[width=\linewidth]{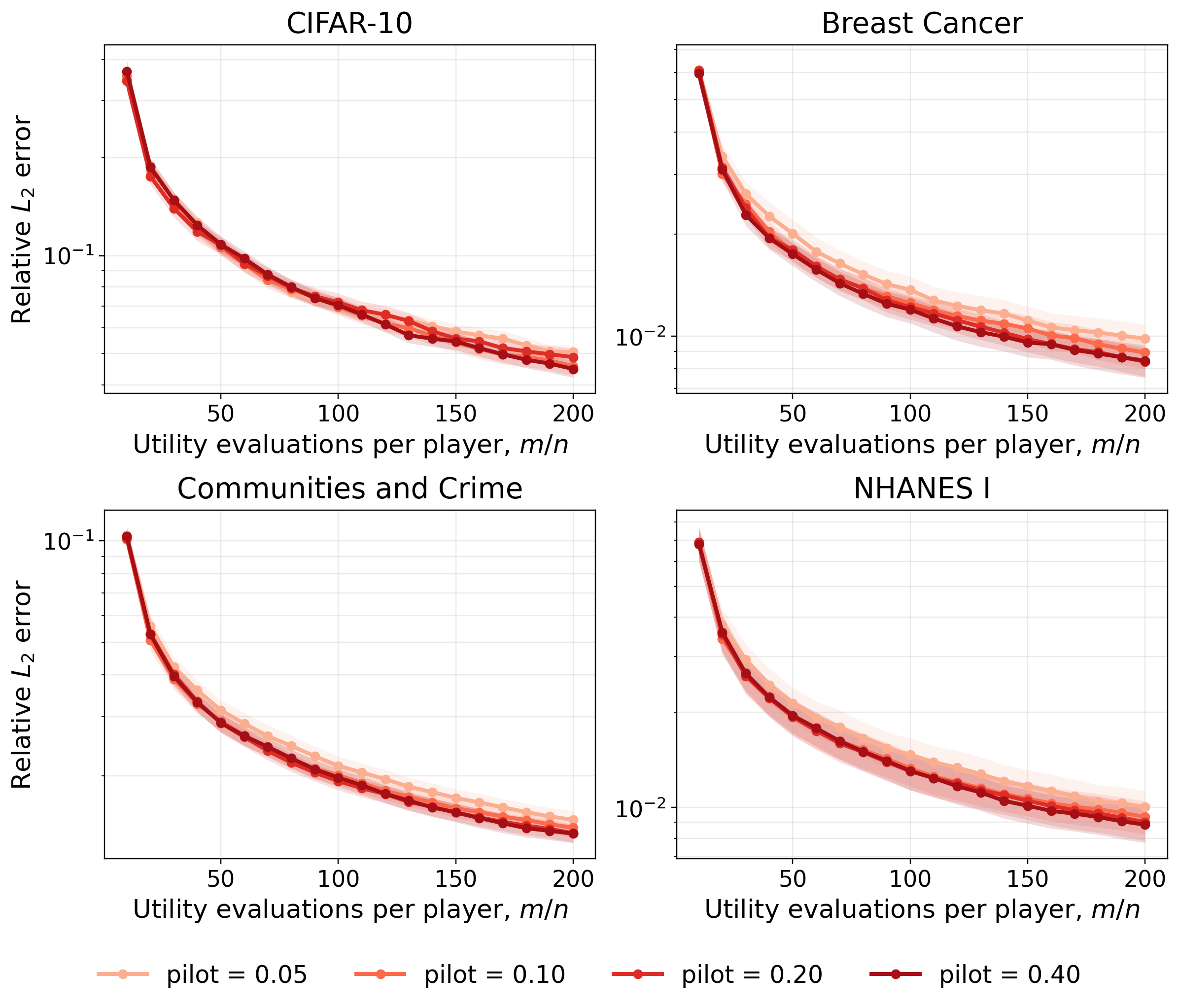}
    \caption{Sensitivity of EASE-FO to the pilot fraction on the four real-world feature-attribution benchmarks (Shapley value), at the main-text total budget of \(m=200n\). Shaded bands denote \(\pm 1\) standard error over explained instances.}
    \label{fig:pilot-fraction-sensitivity}
\end{figure}

\subsection{Sampling-Allocation Diagnostics}
\label{app:allocation-diagnostics}

Our real-world results show a larger relative gain for EASE-FO on data attribution (ACSIncome) than on feature attribution. This subsection reports a diagnostic that traces this pattern to how far EASE-FO's learned sampling law actually moves away from non-adaptive baseline laws on each task.

The diagnostic compares three coalition-size sampling laws: \(\hat q\), the sampling law learned by the pilot stage of EASE-FO under the main-text protocol; \(q^{\mathrm{init}}\), the target-informed initialization law of \eqref{eq:algorithm-init-law} from which the pilot updates start; and \(q^{\mathrm{lev}}\), the size-uniform law used by LeverageSHAP (Appendix~\ref{app:non-asymptotic-rr}). Under the coalition-size partition of Appendix~\ref{app:candidate-class}, each law is determined by the probability mass it assigns to the coalition sizes; we therefore view the laws as probability distributions over the sampled coalition sizes and quantify their disagreement by the total-variation (TV) distance.

Table~\ref{tab:allocation-tv} reports \(\operatorname{TV}(\hat q,q^{\mathrm{lev}})\) and \(\operatorname{TV}(\hat q,q^{\mathrm{init}})\) for the ACSIncome data-attribution game with ridge logistic regression and for the three tabular feature-attribution benchmarks. For ACSIncome, we repeat the diagnostic over ten independent pilot draws and report the mean with its Monte Carlo standard error. For each feature-attribution dataset, we run ten independent pilot draws for each of ten explained instances, average within each instance, and report the mean with the standard error across the ten instance means.

\begin{table}[!htbp]
    \centering
    \small
    \renewcommand{\arraystretch}{1.12}
    \begin{tabularx}{\linewidth}{@{}Y c c@{}}
        \toprule
        Task & \(\operatorname{TV}(\hat q,\,q^{\mathrm{lev}})\) & \(\operatorname{TV}(\hat q,\,q^{\mathrm{init}})\) \\
        \midrule
        ACSIncome (data attribution) & 0.457 (0.005) & 0.347 (0.005) \\
        Breast Cancer (feature attribution) & 0.062 (0.004) & 0.077 (0.006) \\
        NHANES I (feature attribution) & 0.067 (0.005) & 0.137 (0.010) \\
        Communities and Crime (feature attribution) & 0.061 (0.001) & 0.131 (0.007) \\
        \bottomrule
    \end{tabularx}
    \caption{Total-variation distance from the sampling allocation \(\hat q\) learned by EASE-FO to the size-uniform law \(q^{\mathrm{lev}}\) used by LeverageSHAP and to the initialization \(q^{\mathrm{init}}\), with all laws viewed as distributions over the sampled coalition sizes. Entries are means with standard errors in parentheses, computed over ten repeated pilot draws for ACSIncome and across ten explained instances for the feature-attribution tasks.}
    \label{tab:allocation-tv}
\end{table}

On ACSIncome, the learned allocation moves far from both baselines: its TV distance is \(0.457\) from the size-uniform law and \(0.347\) from its own initialization. The pilot detects pronounced size-dependence in the residuals of the first-order surrogate, so the learned design departs substantially from any fixed baseline allocation---room that the adaptive design can exploit, consistent with the larger gains EASE-FO shows on this task. On the three feature-attribution benchmarks, by contrast, \(\hat q\) ends close to size-uniform, with TV distances of at most \(0.067\). Notably, on these tasks \(\operatorname{TV}(\hat q,q^{\mathrm{init}})\) exceeds \(\operatorname{TV}(\hat q,q^{\mathrm{lev}})\): the three updates move the allocation away from its target-informed initialization and toward the near-uniform law. The near-uniform endpoint therefore reflects what the pilot residuals support---residual variation that is nearly homogeneous across coalition sizes---rather than inertia in the update procedure. With little size structure to exploit, residual-aware sampling has correspondingly less to offer, consistent with the smaller gains of EASE-FO over RegressionMSR on these benchmarks.

\section{Monte Carlo Estimation with Bundled Samples}\label{app:bundled}
\renewcommand{\theHdefinition}{appbundled.definition.\arabic{definition}}
\renewcommand{\theHexample}{appbundled.example.\arabic{example}}
\renewcommand{\theHproposition}{appbundled.proposition.\arabic{proposition}}

\subsection{Bundled Sampling and Target Representation}

The main text studies single-coalition sampling, where Monte Carlo draw $t=1,\dots,m$ returns a pair $(S_t,u(S_t))$ with $S_t\subseteq[n]$. This appendix considers the more general case in which one draw reveals several coupled coalition values. Specifically, draw $t$ returns $(\vT_t,u(\vT_t))$, where
\[
    \vT_t=(S_{t,1},\dots,S_{t,r})
\]
is a random bundle of $r$ coalitions and
\[
    u(\vT_t)=(u(S_{t,1}),\dots,u(S_{t,r}))
\]
is the corresponding vector of utility values. The bundle is sampled from a known law $Q$ on $(2^{[n]})^r$. We refer to this observation model as \emph{bundled sampling}.

Throughout this appendix, \(m\) counts bundle draws, not utility evaluations. Each draw reveals \(r\) utility values (\(r=2\) for complement or edge pairs, and \(r=n+1\) for permutation paths), so \(m\) bundle draws consume \(rm\) utility evaluations, whereas \(m\) single-coalition draws in the main text consume only \(m\). Within this appendix the distinction is immaterial: every comparison holds the bundle law \(Q\), and hence \(r\), fixed, so per-draw and per-evaluation rankings coincide. The distinction matters only when designs with different bundle sizes are compared, for example bundled against single-coalition sampling; such comparisons should be made at a matched utility-evaluation budget.

Bundled sampling covers several common Monte Carlo designs. Complement-pair sampling returns a random coalition $S$ together with its complement $S^c$~\citep{Zhang2023-ne}. Edge-pair sampling returns a random coalition $S$ and its one-player extension $S\cup\{i\}$ for some player $i$~\citep{Moehle2021-cm,Kwon2021-bw}. Permutation-path sampling returns the full sequence of coalitions generated along a random permutation of the players~\citep{Castro2009-yu,Castro2017-pf,Maleki2013-zq,Wu2023-hb}.

These designs differ in the bundle they observe, but the target remains the
same linear functional
\(\tau_{\va}(u)=\sum_{S\subseteq[n]}\rho_{\va}(S)u(S)\). Under a fixed bundle
law \(Q\), the basic question is how to assign weights to the observed bundle
entries so that their expectation recovers this target. We formalize this
requirement below.

\begin{definition}[Feasible bundle weighting]\label{def:feasible-bundle-weighting}
Fix the target $\tau_{\va}(u)=\sum_{S\subseteq[n]}\rho_{\va}(S)u(S)$ and a bundle law $Q$ on $(2^{[n]})^r$. An $r$-dimensional function $\vgamma_{\va,Q}:(2^{[n]})^r\to\mathbb{R}^r$ is a \emph{feasible bundle weighting} under $Q$ if, for every utility function $u:2^{[n]}\to\mathbb{R}$,
\[
    \tau_{\va}(u)
    =
    \E_Q\!\left[\vgamma_{\va,Q}(\vT)^\top u(\vT)\right].
\]
We denote the class of all feasible bundle weightings by $\Gamma_{\va}(Q)$.
\end{definition}

Single-coalition sampling is the special case in which the feasible weighting is unique. When $r=1$, the definition reduces to the inverse-probability weighting $\vgamma_{\va,Q}(S)=\rho_{\va}(S)/Q(S)$ for every $S\subseteq[n]$ with $Q(S)>0$. When $r\geq2$, feasibility instead imposes a system of linear constraints on a vector-valued weighting, and that system can have multiple solutions.

To illustrate the nonuniqueness, consider the complement-pair design as an example, where $\vT=(S,S^c)$ with $S\sim q$. Write the weighting as $\vgamma_{\va,Q}(S,S^c)=(\gamma_1(S,S^c),\gamma_2(S,S^c))$. Matching the coefficient of each $u(S)$ in Definition~\ref{def:feasible-bundle-weighting} gives the linear constraints
\[
    q(S)\gamma_1(S,S^c)+q(S^c)\gamma_2(S^c,S)=\rho_{\va}(S),
    \qquad S\subseteq[n].
\]
Thus, whenever both $q(S)$ and $q(S^c)$ are positive, $\gamma_1(S,S^c)$ can be chosen freely and $\gamma_2(S^c,S)$ adjusted to equal $\{\rho_{\va}(S)-q(S)\gamma_1(S,S^c)\}/q(S^c)$. Equivalently, if $q$ has full support, any allocation function $\zeta:2^{[n]}\to\mathbb R$ induces a feasible weighting through
\[
    \gamma_1(S,S^c)=\frac{\zeta(S)\rho_{\va}(S)}{q(S)},
    \qquad
    \gamma_2(S,S^c)=\frac{\{1-\zeta(S^c)\}\rho_{\va}(S^c)}{q(S)}.
\]
Different choices of $\zeta$ distribute the same target coefficient across the two possible appearances of a coalition in the ordered complement pair. For instance, $\zeta\equiv1$ assigns each coefficient to the first component when the coalition is sampled directly, whereas $\zeta\equiv0$ assigns it to the second component when its complement is sampled. Hence the same target functional $\tau_{\va}(u)$ can be represented by many feasible bundle weightings.

\subsection{Bundle-Regular Estimators}

The same regularity idea used in the main text extends once the single
coalition weighting is replaced by a feasible bundle weighting. We first record
the coefficient form of Definition~\ref{def:feasible-bundle-weighting}. For
\(\vgamma_{\va,Q}\in\Gamma_{\va}(Q)\), feasibility is equivalent to
\begin{equation}
    \rho_{\va}(S)
    =
    \E_Q\!\left[
        \sum_{\ell=1}^r
        \evgamma_{\va,Q,\ell}(\vT)\1\{S_\ell=S\}
    \right],
    \qquad S\subseteq[n].
    \label{eq:bundle-id}
\end{equation}
This form is useful because it separates the sampling law \(Q\) from the
choice of weighting used for the entries of a bundle.

For a feasible weighting \(\vgamma\in\Gamma_{\va}(Q)\) and a surrogate
\(h:2^{[n]}\to\mathbb R\), define the bundle influence term
\begin{equation}
    \psi_{\vgamma,h}(\vT;u)
    :=
    \vgamma(\vT)^\top\{u(\vT)-h(\vT)\}
    +
    \tau_{\va}(h)-\tau_{\va}(u).
    \label{eq:bundle-psi}
\end{equation}
By feasibility, \(\E_Q\{\psi_{\vgamma,h}(\vT;u)\}=0\). This gives the
following analogue of Definition~\ref{def:regular}.

\begin{definition}[Bundle-regular estimator sequence]\label{def:bundle-regular}
Fix a bundle law \(Q\), a feasible weighting \(\vgamma\in\Gamma_{\va}(Q)\), and a
working surrogate class \(\mathcal H\) on \(2^{[n]}\). A sequence of estimators
\(\{\hat\tau_{\va,m}\}_{m\ge1}\) is \emph{bundle-regular relative to
    \((\vgamma,\mathcal H)\) under \(Q\)} if there exist nonrandom functions
    \(h_m\in\mathcal H\) such that, for i.i.d. bundles
    \(\vT_1,\dots,\vT_m\sim Q\),
    \begin{equation}
	    \hat\tau_{\va,m}(u)-\tau_{\va}(u)
	    =
	    \frac1m\sum_{t=1}^m
	    \psi_{\vgamma,h_m}(\vT_t;u)
	    +
	    r_m,
    \qquad
    \E_Q(r_m^2)=o(m^{-1}).
    \label{eq:bundle-regular-expansion}
\end{equation}
\end{definition}

Definition~\ref{def:bundle-regular} keeps the same separation as in the
single-coalition case: the weighting describes the target representation,
whereas \(h_m\) describes the first-order surrogate adjustment. 

This class includes several bundled estimators in the literature. Edge-pair
methods such as sampling lift~\citep{Moehle2021-cm} and weighted sampling
lift~\citep{Kwon2021-bw} are HT-type weighted averages with a design-specific
\(\vgamma\) and the working class \(\mathcal H=\{0\}\). Permutation-based
estimators and their stratified refinements~\citep{Castro2009-yu,Castro2017-pf,Maleki2013-zq,Wu2023-hb}
can be viewed as bundled H{\'a}jek-type weighted averages, where the
normalization over permutation strata induces the corresponding working class.
Paired or complement-paired variants of WLS estimators, including
LeverageSHAP~\citep{Musco2024-cf} and PolySHAP~\citep{Fumagalli2026-hu}, fit
the same definition, with \(\mathcal H\) given by the WLS feature span and the feasible weighting induced jointly by the bundled sampling law and the WLS design.

\begin{example}[Complement-pair sampling]\label{ex:bundle-complement}
Complementary-contribution sampling~\citep{Zhang2023-ne} evaluates coalition
values in complement pairs. One draw returns \(\vT=(S,S^c)\) with \(S\sim q\),
so the same utility query batch contains both \(u(S)\) and \(u(S^c)\). This
coupling is useful for semivalues whose weights are symmetric with respect to coalition size, for which the coefficients of a
coalition and its complement have opposite signs, and the estimator can be
written in terms of complementary differences. In our notation, a canonical
feasible weighting for this design is
\begin{equation}
    \evgamma^{\mathrm{comp}}_1(S,S^c)
    :=
    \frac{\rho_{\va}(S)}{q(S)+q(S^c)},
    \qquad
    \evgamma^{\mathrm{comp}}_2(S,S^c)
    :=
    \frac{\rho_{\va}(S^c)}{q(S)+q(S^c)},
    \label{eq:comp-canonical-weighting}
\end{equation}
on pairs for which \(q(S)+q(S^c)>0\). Indeed, for every \(S\subseteq[n]\),
\[
    q(S)\evgamma^{\mathrm{comp}}_1(S,S^c)
    +
    q(S^c)\evgamma^{\mathrm{comp}}_2(S^c,S)
    =
    \rho_{\va}(S).
\]
Thus the contribution of a coalition can be assigned across its two possible
appearances in an ordered complement pair. The construction in
\citet{Zhang2023-ne} uses this paired observation structure, together with
allocation across coalition sizes, to reduce the variance of direct
Monte Carlo estimation.
\end{example}

\begin{example}[Edge-pair sampling]\label{ex:bundle-edge}
Sampling lift~\citep{Moehle2021-cm} and weighted sampling
lift~\citep{Kwon2021-bw} evaluate marginal-contribution pairs. Fix a player
\(i\), let \(q_i\) be a law on \(2^{[n]\setminus\{i\}}\), and sample
\(\vT=(S,S\cup\{i\})\) with \(S\sim q_i\). The draw directly reveals the
difference \(u(S\cup\{i\})-u(S)\). For the coordinate target
\(\tau_{\ve_i}(u)=\evphi_i(u)\), the marginal-contribution representation gives
\[
    \evphi_i(u)
    =
    \sum_{S\subseteq[n]\setminus\{i\}}
    \alpha_i^{(n)}(S)
    \{u(S\cup\{i\})-u(S)\}.
\]
Thus a feasible edge weighting is
\begin{equation}
    \evgamma^{\mathrm{edge},i}_1(S,S\cup\{i\})
    :=
    -\frac{\alpha_i^{(n)}(S)}{q_i(S)},
    \qquad
    \evgamma^{\mathrm{edge},i}_2(S,S\cup\{i\})
    :=
    \frac{\alpha_i^{(n)}(S)}{q_i(S)}.
    \label{eq:edge-weighting}
\end{equation}
The original sampling-lift construction uses this edge-pair observation to
estimate Shapley-type targets from sampled marginal contributions, while
weighted sampling lift modifies the sampling and weighting scheme to cover
Beta Shapley and related probabilistic values.
\end{example}

More general edge-pair designs can sample both the base coalition and the
added player. If the base coalition--player pair is sampled from a law \(q_{\mathcal E}\) on
\(\mathcal E_n=\{(S,i):S\subseteq[n],\,i\notin S\}\), then feasibility requires
\begin{equation}
    \rho_{\va}(S)
    =
    \sum_{i\notin S}
        q_{\mathcal E}(S,i)\evgamma_1(S,i)
    +
    \sum_{i\in S}
        q_{\mathcal E}(S\setminus\{i\},i)
        \evgamma_2(S\setminus\{i\},i),
    \qquad S\subseteq[n].
    \label{eq:edge-id}
\end{equation}
Unlike the fixed-player representation in Example~\ref{ex:bundle-edge}, this
system couples the weighting choices across many directed edges.

\subsection{First-Order Risk with a Fixed Bundle Weighting}

For a bundle-regular estimator, the leading MSE is governed by the variance of
the bundle influence term. Define
\begin{equation}
    V_Q(\vgamma,h)
    :=
    \Var_Q\!\left\{
        \psi_{\vgamma,h}(\vT;u)
    \right\}
    =
    \Var_Q\!\left[
        \vgamma(\vT)^\top\{u(\vT)-h(\vT)\}
    \right].
    \label{eq:bundle-risk}
\end{equation}
Because the bundle influence terms \(\psi_{\vgamma,h_m}(\vT_t;u)\) are i.i.d.\ and mean zero, the non-asymptotic ratio bound of Lemma~\ref{lem:regular-mse-vector} applies verbatim with \(V(\mA;q,\vh)/m\) replaced by \(V_Q(\vgamma,h_m)/m\).
In particular, if \eqref{eq:bundle-regular-expansion} holds and
\(\E_Q(r_m^2)=o\{V_Q(\vgamma,h_m)/m\}\), then
\begin{equation}
    \E_Q\!\left[
        \{\hat\tau_{\va,m}(u)-\tau_{\va}(u)\}^2
    \right]
    =
    \frac{V_Q(\vgamma,h_m)}{m}\{1+o(1)\}.
    \label{eq:bundle-first-order-mse}
\end{equation}
Equivalently,
\begin{equation}
    V_Q(\vgamma,h)
    =
    \inf_{t\in\mathbb R}
    \E_Q\!\left[
        \left\{
            \vgamma(\vT)^\top\{u(\vT)-h(\vT)\}
            -
            t
        \right\}^2
    \right].
    \label{eq:bundle-profiled-risk}
\end{equation}
Thus, unlike the single-coalition case, the first-order criterion depends on
three objects: the bundle law \(Q\), the feasible weighting \(\vgamma\), and the
surrogate \(h\).
The rate in \eqref{eq:bundle-first-order-mse} is stated per bundle draw. On the utility-evaluation scale, a budget of \(B=rm\) evaluations yields leading MSE \(r\,V_Q(\vgamma,h_m)/B\). Hence, at matched evaluation budgets, the quantity comparable to the single-coalition criterion \(\Var_q[\gamma_{\va,q}(S)\{u(S)-h(S)\}]\) of Section~\ref{subsec:first-order-risk} is \(r\,V_Q(\vgamma,h)\) rather than \(V_Q(\vgamma,h)\) itself.
For a fixed weighting, the oracle surrogate problem is
\[
    \inf_{h\in\mathcal H} V_Q(\vgamma,h).
\]
At the fully joint level one may instead consider
\[
    \inf_{\vgamma\in\Gamma_{\va}(Q),\ h\in\mathcal H}
    V_Q(\vgamma,h),
\]
but the weighting optimization is generally a constrained problem over functions
on the whole bundle space. Equation~\eqref{eq:edge-id} illustrates why this
optimization can be difficult: even for edge pairs, the feasibility constraints
are globally coupled.

\subsection{Rao--Blackwell Weighting for Complement Pairs}

Complement-pair sampling is an important exception in which the weighting
optimization has an explicit solution. Let \(\varepsilon_h(S):=u(S)-h(S)\). For the
design \(\vT=(S,S^c)\), \(S\sim q\), any feasible weighting
\((\evgamma_1,\evgamma_2)\) must satisfy
\begin{equation}
    q(S)\evgamma_1(S,S^c)
    +
    q(S^c)\evgamma_2(S^c,S)
    =
    \rho_{\va}(S),
    \qquad S\subseteq[n].
    \label{eq:comp-id}
\end{equation}
The canonical weighting in \eqref{eq:comp-canonical-weighting} is the
Rao--Blackwellized version of any such feasible ordered-pair weighting.

\begin{proposition}[Rao--Blackwell optimality for complement pairs]\label{prop:comp-rb}
Assume the complement-pair design \(\vT=(S,S^c)\), \(S\sim q\), and suppose
\(q(S)+q(S^c)>0\) whenever the pair \(\{S,S^c\}\) can contribute to the target.
Let \((\evgamma_1,\evgamma_2)\) be any feasible weighting satisfying
\eqref{eq:comp-id}. Then, for every fixed surrogate \(h\),
\begin{align*}
&\Var_q\!\left[
    \evgamma^{\mathrm{comp}}_1(S,S^c)\varepsilon_h(S)
    +
    \evgamma^{\mathrm{comp}}_2(S,S^c)\varepsilon_h(S^c)
\right]
\\
&\qquad\le
\Var_q\!\left[
    \evgamma_1(S,S^c)\varepsilon_h(S)
    +
    \evgamma_2(S,S^c)\varepsilon_h(S^c)
\right].
\end{align*}
\end{proposition}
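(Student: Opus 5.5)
The plan is a Rao--Blackwell (conditional-expectation) argument with respect to the unordered pair \(P=\{S,S^c\}\). For a feasible weighting \((\evgamma_1,\evgamma_2)\), write the scalar statistic
\[
    Z:=\evgamma_1(S,S^c)\varepsilon_h(S)+\evgamma_2(S,S^c)\varepsilon_h(S^c),
\]
and let \(Z^{\mathrm{comp}}\) be the analogous quantity built from the canonical weighting \eqref{eq:comp-canonical-weighting}. I will show that \(Z^{\mathrm{comp}}=\E_q[Z\mid P]\) almost surely. The conditional variance decomposition \(\Var(Z)=\Var(\E[Z\mid P])+\E[\Var(Z\mid P)]\ge\Var(\E[Z\mid P])\) then gives the stated inequality.

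First, I will compute \(\E_q[Z\mid P]\). Conditional on \(P=\{A,A^c\}\) with \(q(A)+q(A^c)>0\), the ordered draw is \(S=A\) with probability \(q(A)/\{q(A)+q(A^c)\}\) and \(S=A^c\) with probability \(q(A^c)/\{q(A)+q(A^c)\}\). Collecting the coefficients of \(\varepsilon_h(A)\) and \(\varepsilon_h(A^c)\) gives
\[
    \E[Z\mid P]=\frac{\{q(A)\evgamma_1(A,A^c)+q(A^c)\evgamma_2(A^c,A)\}\varepsilon_h(A)+\{q(A^c)\evgamma_1(A^c,A)+q(A)\evgamma_2(A,A^c)\}\varepsilon_h(A^c)}{q(A)+q(A^c)}.
\]
Feasibility \eqref{eq:comp-id}, applied to \(A\) and to \(A^c\), reduces the two brackets to \(\rho_{\va}(A)\) and \(\rho_{\va}(A^c)\). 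The conditional mean therefore equals \(\{\rho_{\va}(A)\varepsilon_h(A)+\rho_{\va}(A^c)\varepsilon_h(A^c)\}/\{q(A)+q(A^c)\}\). This expression is symmetric in \(A\) and \(A^c\), so evaluating it at \(A=S\) gives exactly \(Z^{\mathrm{comp}}\).

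Second, I will handle degenerate pairs. Pairs with \(q(S)+q(S^c)=0\) are never sampled, so they do not affect either variance. Pairs with positive mass but zero target coefficients give zero under both weightings after the conditioning step, so they need no separate treatment. Both \(Z\) and \(Z^{\mathrm{comp}}\) have the same mean, namely \(\tau_{\va}(u)-\tau_{\va}(h)\), by feasibility. This is also implied by the tower property, so the variance comparison is equivalent to a second-moment comparison.

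The main obstacle is bookkeeping rather than analysis. The index order of \(\evgamma_2\) must be tracked carefully: it is \(\evgamma_2(S^c,S)\) in \eqref{eq:comp-id} but \(\evgamma_2(S,S^c)\) in the statistic. I also need to check that, after conditioning, the coefficient of \(\varepsilon_h(A^c)\) matches the feasibility constraint at the coalition \(A^c\). Once the two orderings are matched, the result is a direct consequence of the law of total variance. No finiteness issue arises, because the space \(2^{[n]}\) is finite.
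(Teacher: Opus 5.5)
Your proposal is correct and is essentially the paper's proof: condition on the unordered pair $\{S,S^c\}$, use the feasibility constraint \eqref{eq:comp-id} at both $S$ and $S^c$ to identify the conditional mean with the canonical weighting, and conclude by the law of total variance. Your coefficient bookkeeping (including the index order of $\evgamma_2$) matches the paper's computation exactly, and your extra remarks on degenerate pairs and equal means are consistent with it.
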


\begin{proof}
Let \(S_0\sim q\) denote the sampled base coalition, let
\(\mathcal U(S_0)=\{S_0,S_0^c\}\) denote the unordered complement pair, and
define
\[
    X_\gamma(S_0)
    :=
    \evgamma_1(S_0,S_0^c)\varepsilon_h(S_0)
    +
    \evgamma_2(S_0,S_0^c)\varepsilon_h(S_0^c).
\]
Conditioning on \(\mathcal U(S_0)=\{S,S^c\}\), the two possible ordered draws
are \((S,S^c)\) and \((S^c,S)\), with probabilities proportional to \(q(S)\)
and \(q(S^c)\). Therefore,
\begin{align*}
    \E_q\{X_\gamma(S_0)\mid \mathcal U(S_0)=\{S,S^c\}\}
    &=
    \frac{
        \{q(S)\evgamma_1(S,S^c)+q(S^c)\evgamma_2(S^c,S)\}\varepsilon_h(S)
    }{q(S)+q(S^c)}
    \\
    &\quad+
    \frac{
        \{q(S)\evgamma_2(S,S^c)+q(S^c)\evgamma_1(S^c,S)\}\varepsilon_h(S^c)
    }{q(S)+q(S^c)}
    \\
    &=
    \frac{\rho_{\va}(S)\varepsilon_h(S)+\rho_{\va}(S^c)\varepsilon_h(S^c)}
    {q(S)+q(S^c)}
    \\
    &=
    \evgamma^{\mathrm{comp}}_1(S,S^c)\varepsilon_h(S)
    +
    \evgamma^{\mathrm{comp}}_2(S,S^c)\varepsilon_h(S^c),
\end{align*}
where the second equality uses \eqref{eq:comp-id} for \(S\) and \(S^c\). The
last display is measurable with respect to the unordered pair
\(\mathcal U(S_0)\). The variance inequality follows from the law of total
variance:
\[
    \Var_q\{X_\gamma(S_0)\}
    =
    \Var_q\!\left[\E_q\{X_\gamma(S_0)\mid\mathcal U(S_0)\}\right]
    +
    \E_q\!\left[\Var_q\{X_\gamma(S_0)\mid\mathcal U(S_0)\}\right].
\]
\end{proof}

Proposition~\ref{prop:comp-rb} shows that complement pairs differ from general
bundled designs: one can remove the artificial randomness coming from the
orientation of the ordered pair without changing the target. For semivalues whose weights are symmetric with respect to coalition size,
including the Shapley value, \(\rho_i(S^c)=-\rho_i(S)\) for a
single coordinate target; this is the sign-symmetric case treated algorithmically in Appendix~\ref{app:complement-pair-algorithm}. Hence the Rao--Blackwellized weighting reduces to a
weighted difference,
\[
    \frac{\rho_i(S)}
    {q(S)+q(S^c)}
    \left[
        \{u(S)-h(S)\}-\{u(S^c)-h(S^c)\}
    \right].
\]
When \(h\equiv0\), this is precisely the common complementary-contribution
form based on \(u(S)-u(S^c)\). From the present viewpoint, that form is not only
natural for such sign-symmetric targets; among feasible ordered complement-pair
weightings, it is the first-order variance-minimizing choice.

\end{document}